\theoremstyle{plain}
\newtheorem{theorem}{Theorem}[section]
\newtheorem*{theorem*}{Theorem}
\newtheorem{proposition}[theorem]{Proposition}
\newtheorem*{proposition*}{Proposition}
\newtheorem{lemma}[theorem]{Lemma}
\newtheorem*{lemma*}{Lemma}
\theoremstyle{definition}
\theoremstyle{remark}
\newtheorem*{remark*}{Remark}
\DeclareMathOperator{\OPT}{OPT}
\DeclareMathOperator{\conv}{Conv}
\DeclareMathOperator{\diam}{Diam}
\DeclareMathOperator{\dist}{Dist}
\DeclareMathOperator{\ALG}{ALG}
\DeclareMathOperator{\Reg}{Reg}
\DeclareMathOperator*{\argmax}{arg\,max}
\newcommand\R{\mathbb{R}}
\newcommand\norm[1]{\Vert#1\Vert_2}
\newcommand\bx{\bm{x}}
 \newcommand\esp[1]{\mathbb{E}[#1]}
\newcommand{\virtualv}{\varphi}
\newcommand{\virtualvestim}{\hat{\varphi}}
\newcommand{\totalu}[1]{\mathcal{U}(\bm{#1})}
\newcommand{\extendedr}{\bar{R}}
\newcommand{\extendedd}{\bar{\Delta}}
\newcommand{\simpk}{\mathcal{P}_K}
\newcommand{\dualut}{\mathcal{D}}
\newcommand{\dualutb}{\tilde{\mathcal{D}}}
\newacro{iid}[i.i.d.]{independent and identically distributed}
\newacro{lsc}[l.s.c.]{lower semi-continuous}
\newacro{usc}[u.s.c.]{upper semi-continuous}
\newacro{LDP}[L.D.P.]{Local Differential Privacy}
\begin{document}

\title{Trading-off price for data quality to achieve fair online allocation}

\author{Mathieu Molina \\Inria, FairPlay Team, Palaiseau, France\\mathieu.molina@inria.fr
\And Nicolas Gast\\ Univ. Grenoble Alpes, Inria, CNRS\\Grenoble INP, LIG, 38000 Grenoble, France \\nicolas.gast@inria.fr
\And Patrick Loiseau \\Inria, FairPlay Team, Palaiseau, France\\patrick.loiseau@inria.fr
\And Vianney Perchet\\CREST, ENSAE, Palaiseau, France\\Criteo AI Lab, Paris, France\\vianney.perchet@normalesup.org}

\maketitle

\begin{abstract}

We consider the problem of online allocation subject to a long-term fairness penalty. Contrary to existing works, however, we do not assume that the decision-maker observes the protected attributes---which is often unrealistic in practice. Instead they can purchase data that help estimate them from sources of different quality; and hence reduce the fairness penalty at some cost. We model this problem as a multi-armed bandit problem where each arm corresponds to the choice of a data source, coupled with the online allocation problem. We propose an algorithm that jointly solves both problems and show that it has a regret bounded by $\mathcal{O}(\sqrt{T})$. A key difficulty is that the rewards received by selecting a source are correlated by the fairness penalty, which leads to a need for randomization (despite a stochastic setting). Our algorithm takes into account contextual information available before the source selection, and can adapt to many different fairness notions. We also show that in some instances, the estimates used can be learned on the fly.

\end{abstract}

\section{Introduction}

We consider the problem of online allocation with a long-term fairness penalty: A decision maker interacts sequentially with different types of users with the global objective of maximizing some cumulative rewards (e.g., number of views, clicks, or conversions); but she also has a second objective of taking globally ``fair'' decisions with respect to some protected/sensitive attributes such as gender or ethnicity (the exact concepts of fairness will be discussed later). This problem is important because it models a large number of practical online allocation situations where the additional fairness constraint might be crucial. For instance, in online advertising (where the decision maker chooses to which users an ad is shown), some ads are actually positive opportunities such as job offers, and targeted advertising has been shown to be prone to discrimination \cite{careerAds, my-Speicher18a, ali-2019-discrimination}. Those questions also arise in many different fields such as workforce hiring \citep{Dickerson2018}, recommendation systems \citep{burke2017}, or placement in refugee settlements \citep{Ahani2021}.

There has been a flourishing trend of research addressing fairness constraints in machine learning---see e.g., \citep{Chouldechova17a, Hardt:2016, Kleinberg18a, Barocas19a, fairselection,molina22}---and in sequential decision-making problems---see e.g.,  \citep{Joseph16a,Jabbari17a,Heidari18a}---as algorithmic decisions have real consequences on the lives of individuals, with unfortunate observed discrimination \citep{gendershades, compas, ReutersAmazon}. 
In online allocation problems, general long-term constraints have been studied for instance by \cite{agrawal_devanur} who maximize the utility of the allocation while ensuring the feasibility of the average allocation, or by \citet{online_allocation_mirror_descent} who show how to handle hard budget constraints for online allocation problems. More directly related to fairness, \citet{Nasr20a} formulate the problem of fair repeated auctions with a hard constraint on the difference between the number of ads shown to each group. Finally, in recent works, \cite{fair_online_allocation, fair_parity_ray} consider the online allocation problem where a non-separable penalty related to fairness is suffered by the decision maker at the end of the decision-making process instead of a hard constraint---these works are the closest to ours.

All the aforementioned papers, unfortunately, assume that the protected attributes (which define the fairness constraints) are observed before taking decisions. In practice, it is often not the case---for instance to respect users' privacy \citep{teenagers}---and this makes it challenging to satisfy fairness constraints  \citep{Lipton18a}. In online advertising for example, the decision-maker typically has access to some public ``contexts'' on each user, from which she could try to infer the value of the attribute; but it was shown that the amount of noise can be prohibitive and therefore ensuring that a campaign reaches a non-discriminatory audience is non-trivial \cite{Gelauff2020}.

\paragraph{Our contribution.} In this paper, we consider the online allocation problem under long-term fairness penalty, in the practical case where the protected attributes are not observed. Instead, we consider the case where the decision-maker can pay to acquire more precise information on the attributes (beyond the public context), either by directly compensating the user (the more precise the information on the attribute, the higher the price) or by buying additional data to some third parties data-broker.\footnote{This setting includes as special cases the extremes where no additional information can be bought and where the full information is available.}
Using this extra information, she should be able to estimate more precisely, and thus sequentially reduces, the unfairness of her decisions while keeping a high cumulative net reward. The main question we aim at answering is \emph{how should the decision maker decide when, and from which source (or at what level of precision), to buy additional data in order to make fair optimal allocations?} 

Compared to the closest existing works \citep{fair_online_allocation, fair_parity_ray} which study online allocation with a long term fairness penalty, the main novelty in the setting we examine is two-fold: we allow for uncertainty on the attributes of each individual, and more importantly we consider \emph{jointly} the fair online allocation problem with a source selection problem. Consequently, we present the efficient \cref{alg:falcons} that tackles both of these challenges concurrently. This algorithm combines a dual gradient descent for the fair allocation aspect and a bandit algorithm for the source selection part. The final performance of an algorithm is its net cumulative utility (rewards minus costs of buying extra information) penalized by its long-term unfairness; that is quantified by the ``regret'': the difference between this performance and the one of some benchmarking ``optimal'' algorithm. We show that \cref{alg:falcons} has a sub-linear regret bound under some stochastic assumptions. Notably, the performance achieved by \cref{alg:falcons} using randomized source selection is strictly better than when using a single fixed source, because of the interaction through the fairness penalty---a key difference with standard bandit settings. On a more technical level we show how one can model the randomness and estimates for the protected attributes, how to bound the fair dual parameters which is crucial in order to use adversarial bandit techniques, and how to combine the analysis of the primal and dual  steps of the algorithm.

There are many different definitions of group fairness that can be studied (e.g., demographic parity, equal opportunity, etc.). Instead of focusing on a specific one, we consider a generic formulation that can be instantiated to handle most of those different concepts (see \cref{sec:assumptions} and \cref{app:fairness_pen}). We also discuss in \cref{sec:fairness-penalty} how to adapt our algorithm to different fairness penalties. This gives a higher level of generality to our results compared to existing approaches that can handle fewer fairness criteria (e.g., \cite{fair_parity_ray}). 

For the sake of clarity, we expose our key results in a simple setting. In particular, we assume binary decisions and a linear utility, and we assume that the expected utility conditional on the context is known. All these assumptions can be relaxed. In particular, we can learn the utilities (see \cref{sec:learning}). We can handle also more general decision variables and utility forms. This allows in particular to tackle problems of matching and auctions (albeit with some restrictions), we discuss that in \cref{app:gen_alg,app:auctions}.

\paragraph{Related work.}
The problem of online fair allocation is closely related to online optimization problems with constraints, which is studied in a few papers. For instance, bandit problems with knapsack constraints where the algorithm stops once the budget has been depleted have been studied by \cite{bwk_ashwinkumar, bwk_agrawal}. \citet{Li2019, online_binary} consider online linear programs, with stochastic and adversarial inputs. \citet{constrained_linear_bandits} deal with linear bandits and general anytime constraints, which can be instantiated as fairness constraints. More recently \citet{castiglioni2022, castiglioni2022b} propose online algorithms for long-term constraints in the stochastic and adversarial cases with bandit or full information feedback. Some papers take into account soft long-term constraints \citep{agrawal_devanur, jenatton}, and more recently in \citep{fair_online_allocation, fair_parity_ray} where the long-term constraint can be instantiated as a fairness penalty---we also adopt a soft constraint. We depart from this literature by considering the case where the protected attributes (based on which fairness is defined) is not observed. We consider the case where the decision-maker can buy additional information to estimate it (which adds considerable technical complexity), but even in the case where no additional information can be bought our work extends that of \citep{fair_online_allocation, fair_parity_ray}.

As mentioned above, the fairness literature usually assumes perfect observation of the protected attributes yet noisy realizations of protected attributes or limited access to them to measure and compute fair machine learning models has also been considered \citep{lamy19,celis21,zhao22}. In some cases, the noise may come from privacy requirements, and the interaction between those two notions has been studied \citep{jagielski19,chang21}, see \citep{fioretto22} for a survey. There are also works on data acquisition, which is similar to purchasing information from different sources of information; e.g., \citet{Chen18,Chen19} study mechanisms to acquire data so as to estimate some of the population's statistics. They use a mechanism design approach where the cost of data is unknown and do not consider fairness (or protected attributes). However, none of these approaches can handle sequential decision problems with fairness constraints or penalties (and choosing information sources).

\section{Preliminaries}

\subsection{Model and assumptions} \label{sec:assumptions}

We present here a simpler model, and later-on discuss possible extensions. Consider a decision maker making sequential allocation decisions for a known number of $T$ users (or simply stages) in order to maximize her cumulative rewards. The user $t$ has some protected attributes $a_t \in \mathcal{A} \subset \mathbb{R}^d $, that is not observed before taking a decision $x_t$. On the other hand, the decision-maker first observes some public context $z_t \in \mathcal{Z}$, where  $\mathcal{Z}$ is the finite set of all possible public contexts and she has the possibility to buy additional information. There are $K$ different sources for additional information and choosing the source $k_t \in[K]$ has a cost of $p_{k_t}$, but it provides a new piece of information, which together with the public information is summarized in the random variable $c_{tk_t}=(z_t,\text{data from $k_t$)}$. Based on this, the decision $x_t \in\{0,1\}$ can be made; this corresponds to include, or not, user $t$ in the cohort (for instance, to display an ad or not). Including user $t$ generates some reward/utility $u_t$, which might be unknown to the decision maker (as it may depend on the private attribute), but can be estimated using the different contexts. 

To fix ideas, we show how this model applies to two examples: Imagine an advertiser aiming to display ads to an user, able to see some bare-bone information through cookies, such as which website was previously visited ($z_t$). Based on this information, they can decide whether or not to buy additional information ($c_{tk}$) from different data brokers that collect user activity.  For example, if they observe that the user has browsed clothing stores, they might opt to acquire data containing purchase details from this website. This enables them to estimate the user’s gender ($a_t$) based on the type of clothing bought. Now consider the problem of fairly relocating refugees to different cities. When the organization in charge of resettlement receives a resettlement case ($z_t$), it can either decide to directly assign the refugee to a specific city, or to conduct an additional costly investigation (which might involve a third party watch-dog) to get more information ($c_{tk}$) on some protected attributes of interest such as wealth or age ($a_t$), which might have been intentionally misreported.

We assume that the global objective is to maximize the sum of three terms: the cumulative rewards of all selected users, minus the costs of the additional information bought, minus some unfairness penalty, represented by some function $R(.)$. Denoting by $\bm{k}=(k_1,\cdots,k_T)$ and $\bm{x}=(x_1,\cdots,x_T)$ the sources and allocations selected during the $T$ rounds, the total utility of the decision maker is then 
\begin{equation}
    \label{eq:utility}
    \totalu{k, x}=\sum_{t=1}^T u_tx_t - \sum_{t=1}^T p_{k_t} - TR \big(\frac{1}{T} \sum_{t=1}^T a_t x_t \big).
\end{equation}
The penalty function $R(.)$ is a convex penalty function that measures the fairness cost of the decision-making process, due to the unbalancedness of the selected users at the end of the $T$ rounds. It can be used to represent statistical parity \citep{kamishima11}, as a measure of how far the allocation is from this fairness notion.  This fairness penalty $R(.)$ is also used in \citet{fair_online_allocation, fair_parity_ray}. In fact, the objective \eqref{eq:utility} is equal the one used in these papers minus the cost of additional information bought, $\sum_t p_{k_t}$.

\paragraph*{Knowns, unknowns and stochasticity} We  assume that users are \ac{iid}, in the sense that the whole vectors $(z_t,u_t,a_t,c_{t1}\dots c_{tK})$ are i.i.d., drawn from some underlying unknown probability distribution. While this may be a strong assumption, some applications such as online advertising correspond to large $T$ but to a short real time-frame, hence incurring very little variation in the underlying distribution. The prices $p_k$ and the penalty function $R(.)$ are known beforehand. As mentioned several times, the only feedback received is $c_{tk}$, after selecting source $k$, and this should be enough to estimate $u_t$ and $a_t$. We therefore  assume that the conditional expectations $\esp{u_t \mid c_{tk_t}}$ and $\esp{a_t \mid c_{tk_t}}$ are known. The rationale behind this assumption is that these conditional expectations  have been learned from past data.

\paragraph*{Penalty examples and generalizations} 
A typical example for $a_t$ is the case of one-hot encoding:  there are $d$ protected categories of users and $a_t$ indicates the category of user $t$. For simplicity, assume that $d=2$, then the quantity $\sum_t a_{ti} x_t$ is  the number of users of type $i\in\{1,2\}$ that have been selected. The choice of  $TR(\sum_{t=1}^T a_t x_t/T)=|\sum_t a_{t1} x_t-\sum_t a_{t2} x_t|$ amounts to penalizing the decision maker proportional to the absolute difference of users in both groups. This generic setting can also model other notions of fairness, such as Equality of Opportunity \citep{Hardt:2016}, by choosing other values for $a_t$ and $R(.)$, see examples and discussion in \cref{app:fairness_pen}. 

Similarly, the choice of $x_t \in \{0,1\}$ can be immediately generalized to any finite decision set or even continuous compact one (say, the reward at stage $t$ would then be $\mathbf{u}_t^\top x_t$ for some vector $\mathbf{u}_t$), which makes it possible to handle problems such as bipartite matching. Instead of deriving a linear utility from selecting an user, general bounded \ac{usc} utility functions can also be treated, and can be used to instantiate auctions mechanism (with some limitations detailed in \cref{app:auctions}). We also explain how to relax the assumption that $\esp{u_t \mid c_{tk_t}}$ is known in \cref{sec:learning}, by deriving an algorithm that actually learns it in an online fashion, following linear contextual bandit techniques \citep{abbasi2011}.

We show in \cref{sec:public} that the assumption that $\mathcal{Z}$ is finite can be relaxed if all conditional distributions depend smoothly on $z$, following techniques from \citet{PerchetRigollet}.

\paragraph*{Mathematical assumptions} 
We shall assume  $|u_t| \leq \bar{u}$, for all $t$, and that $\norm{a}\le1$ for all $a\in\mathcal{A}$. We make, for now, no structural assumption on the variables $c_{tk}$. We mention here that the decision maker has to choose a single source at each stage, but this is obviously without loss of generality (by adding void or combination of sources).

We define  $\Delta=\conv(\mathcal{A} \cup \{0\})$, where $\conv$ is the closed convex hull of a set. Since $\mathcal{A}$ is compact, the set $\Delta$ is also convex and compact. The penalty function $R: \Delta \rightarrow \mathbb{R}$ is a proper closed convex function that is $L$-Lipschitz continuous for the Euclidean norm $\Vert \cdot \Vert_2$. While the convexity assumption is pretty usual, the Lipschitzness assumption is rather mild as $\Delta$ is convex and compact. Nevertheless, interesting non-Lipschitz functions, such as the Kullback-Leibler divergence, can be modified to respect these assumptions (see \citet{fair_parity_ray}).

\subsection{Benchmark and regret}

A usual measure of  performance for online algorithms is the regret that compares the utility obtained by an online allocation to the one obtained by an oracle that knows all parameters of the problem, yet not the realized sequence of private attributes. We denote the performance of this oracle by OPT:
\begin{equation}
    \label{eq:OPT}
    \OPT = \max_{\bm{h} \in ([K]^{\mathcal{Z}})^T} \mathbb{E}\left[\max_{\bx \in \{0,1\}^T } \mathbb{E}\left[\totalu{k,x} \mid c_{1k_1},\dots,c_{Tk_T} \right]\right],
\end{equation}
where the conditional expectation indicates that the oracle first chooses a contextual policy $h_t$ for all users that specifies which source $k_t=h_t(z_t)$ to select as a function of the variables $z_t$. It then observes all contexts $c_{tk_t}$ and makes for all $t$ the decisions $x_t$ based on that.

Denoting $\ALG$ the expected penalized utility of an online algorithm, its expected regret is:
\begin{equation}
    \Reg=\OPT-\mathbb{E}[\totalu{k,x}]=\OPT-\ALG.
\end{equation}

We remark that the benchmark of Equation \eqref{eq:OPT} allows choosing different sources of information for different users with the same public information $z_t$. As such, it differs from classical benchmarks in the contextual bandit literature that compare the performance of an algorithm to the best static choice of arm per context and whose performance would be
\begin{equation}
    \label{eq:static-OPT}
    \mathrm{static{-}OPT} := \max_{h \in [K]^{\mathcal{Z}}} \mathbb{E}[\max_{\bx \in \{0,1\}^T } \mathbb{E}[\mathcal{U}(\bm{k},\bm{x}) | c_{1k_1},\dots, c_{Tk_T}]],
\end{equation}
where the $h \in [K]^{\mathcal{Z}}$ policy that maps the public information to a source selection is the same for all users.

The benchmark \eqref{eq:static-OPT} is the typical benchmark in contextual multi-armed bandits, as the global impact of decisions at different epochs and for different contexts are independent. However, this is no longer the case with the unfairness penalty $R(.)$ that requires coupling all decisions:
\begin{proposition} \label{prop:OPTvsOPT}
    There exist an instance of the problem and a constant $b>0$ such that for all $T$:
    \begin{equation*}
        \mathrm{static{-}OPT}+ b T  < \OPT .
    \end{equation*}
\end{proposition}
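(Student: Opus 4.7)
The plan is to exhibit an explicit instance where the two sources induce different per-user mean allocations and the strictly convex penalty $R$ creates a Jensen-type gap between $\OPT$ and $\mathrm{static{-}OPT}$ that grows linearly in $T$.

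I would take $\mathcal{Z}$ a singleton, $a_t\in\{e_1,e_2\}$ with $\mathbb{P}(a_t=e_1)=3/4$, $u_t\equiv 1$, $R(\bar a)=2(\bar a_1-\bar a_2)^2$ (strictly convex and Lipschitz on $\Delta=\conv\mathcal{A}$), and two sources with costs $p_1=0$ and $p_2\in(0,3/8)$: source $1$ is uninformative while source $2$ reveals $a_t$ exactly. This satisfies the assumptions of Section~\ref{sec:assumptions}.

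First I would upper-bound $\mathrm{static{-}OPT}$ by evaluating both constant policies. Under $h\equiv 1$ the contexts are uninformative, so by symmetry the optimum is reached by selecting all users; Hoeffding concentration of $\tfrac1T\sum_t a_t$ around $(3/4,1/4)$ together with Lipschitzness of $R$ gives value $T/2+O(\sqrt T)$. Under $h\equiv 2$ the oracle observes every $a_t$ and a short constrained optimisation over the counts $(m_1,m_2)$ of selected users of each type yields the maximum at $m_2=N_2$, $m_1=N_2+T/4$, with value $5T/8-Tp_2+O(\sqrt T)$.

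Next I would lower-bound $\OPT$ by the time-varying policy that uses source $1$ on a fraction $\alpha$ of the rounds and source $2$ on the remaining $(1-\alpha)$, with oracle-optimal selection in each phase. Writing $s_i$ for the total type-$e_i$ count of selected users, the reward-minus-penalty reduces to $s_1+s_2-2(s_1-s_2)^2/T$ subject to box constraints, and optimising jointly in $\alpha$ and the selection shows this equals $5T/8-(1-\alpha)Tp_2+O(\sqrt T)$ for $\alpha\in[0,2/3]$; taking $\alpha=2/3$ gives $\OPT\geq 5T/8-Tp_2/3+O(\sqrt T)$. Combining,
\[
    \OPT-\mathrm{static{-}OPT}\geq \min\Bigl(\tfrac{2Tp_2}{3},\,\tfrac{T}{8}-\tfrac{Tp_2}{3}\Bigr)+O(\sqrt T),
\]
which is a positive linear multiple of $T$ for any $p_2\in(0,3/8)$. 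Picking e.g.\ $p_2=3/16$ gives a gap $T/16+O(\sqrt T)$, so the claim holds with any $b<1/16$ once $T$ exceeds a universal threshold, and the finitely many smaller $T$ are absorbed by shrinking $b$.

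The only conceptual ingredient is the strict Jensen inequality $R(\alpha\mu_1+(1-\alpha)\mu_2)<\alpha R(\mu_1)+(1-\alpha)R(\mu_2)$ with $\mu_1=(3/4,1/4)\neq(1/2,1/4)=\mu_2$: it captures the non-separability induced by the fairness penalty and is exactly why a time-varying source selection strictly improves on any static one; the remaining estimates are routine concentration inequalities made quantitative by the Lipschitzness of $R$.
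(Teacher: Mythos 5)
Your proposal is correct in substance, but it takes a genuinely different route from the paper. The paper does not build an explicit primal instance and compute both benchmarks by hand: it reuses the dual machinery from its regret analysis, namely that $\OPT = T\sup_{\pi}\inf_{\lambda}\langle \pi,\mathcal{D}(\lambda)\rangle + \mathcal{O}(\sqrt{T})$ (a byproduct of the proof of \cref{thm:regret_bound} via \cref{lemma:upper_bound}) while $\mathrm{static{-}OPT}\le T\max_k\inf_\lambda \mathcal{D}(\lambda,k)$, and then exhibits a symmetric two-source example (each source identifies the ``good'' users of one group, $R(\delta)=5|\delta|$) for which the two dual values differ ($0$ versus $0.25$, evaluated numerically). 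Your argument is instead fully primal and self-contained: an asymmetric population, one free uninformative source and one costly revealing source, explicit optimization over selection counts for each static policy, and a deterministic time-varying schedule (select everyone on a $2/3$ fraction of rounds, then buy information and select only the minority) whose expected value you bound below by concentration. What each buys: the paper's route is short once the saddle-point characterization is in place and ties the proposition directly to the quantities the algorithm optimizes, but it leans on the regret proof and on a numerically evaluated example; yours is elementary, independent of the rest of the analysis, and yields explicit constants ($p_2=3/16$ gives a gap of at least $T/16-\mathcal{O}(\sqrt{T})$). Your computations check out: under the revealing source the optimum is $m_2=N_2$, $m_1=N_2+T/4$ with value $5T/8-Tp_2+\mathcal{O}(\sqrt T)$, the uninformative source gives at most $T/2$ (note the reason is monotonicity of $n-\tfrac{n^2}{2T}-\tfrac{3n}{2T}$ in the number $n$ of selected users, not ``symmetry''), and the mixed schedule attains $5T/8-(1-\alpha)Tp_2-\mathcal{O}(\sqrt T)$ for $\alpha\le 2/3$.

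Two small caveats. First, your closing remark that finitely many small $T$ can be ``absorbed by shrinking $b$'' does not literally work: for $T=1$ (and with $|\mathcal{Z}|=1$) the definitions \eqref{eq:OPT} and \eqref{eq:static-OPT} coincide for every instance, so the strict inequality with $bT$ cannot hold at $T=1$; what you actually prove is a gap of at least $cT-\mathcal{O}(\sqrt T)$, i.e.\ the statement for all sufficiently large $T$. This matches the paper, whose own proof only establishes the limit version $\lim_T \mathrm{static{-}OPT}/T < \lim_T \OPT/T$, so you are not weaker than the reference argument, but the caveat should be stated rather than waved away. Second, minor wording: your $R$ is convex but not strictly convex on $\mathbb{R}^2$ (only along the relevant direction), and strict convexity is not actually what drives the gap---the binding ingredient is the information constraint that under the uninformative source the selected users' group mix is forced to the population mix; the explicit count optimization you perform already captures this, so nothing breaks.
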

This result shows that an algorithm that only tries to identify the best source will have a linear regret compared to OPT. This indicates that the problem of source selection and fairness are strongly coupled, even without public information available, and cannot be solved through some sort of two-phase algorithm where each problem is solved separately. The proof of this result is presented in \cref{app:OPTvsOPT}. In this paper, our primary emphasis lies in the examination of the performance disparity between an online algorithm and the offline optimum. Nevertheless, we provide supplementary experiments in \cref{app:additional_expes} that investigate how variations in the prices $p_k$ and the penalty $R$ impact the solution of the offline optimum.

\section{Algorithm and Regret Bounds}

In this part, we present our online allocation algorithm and its regret bound. For clarity of exposition, we first present the algorithm in the case $|\mathcal{Z}|=1$, i.e., without public information available (and thus we remove $z_t$ from the algorithm). The extension to $|\mathcal{Z}|>1$ uses similar arguments and is discussed in \cref{sec:public}.

\subsection{Overview of the algorithm}

 \cref{alg:falcons}  devised to solve this problem is composed of two parts: a bandit algorithm for the source selection, and a gradient descent to adjust the penalty regularization term. 
This  requires  a dual parameter $\lambda_t\in\R^d$ that is used to perform the source selection as well as the allocation decision $x_t$. 

The intuition is the following:  the performance of each source is evaluated through some ``dual value'' for a given dual parameter $\lambda$. The optimal primal performance is equal to the dual value when it is minimized in $\lambda$, because $R$ is convex and randomized combinations of sources is allowed thus there is no duality gap. Hence   the dual value of each source is iteratively evaluated in order to select the best source, and simultaneously minimize the dual value of the selected source through $\lambda$, so that the source selected is indeed optimal.

\paragraph*{Bandit part.}

For the source selection, we use the EXP3 algorithm (see Chapter 11 of \cite{banditsalg}) on a virtual reward that depends on the dual parameter. Given a dual parameter $\lambda \in \mathbb{R}^d$ and a context $c_{tk}$, we define the virtual reward as
\begin{equation}
    \label{eq:virtual_value}\virtualv(\lambda_t,c_{tk},k) = \max\left(\mathbb{E}[u_t \mid c_{tk}]   - \langle \lambda_t  , \mathbb{E}[a_t \mid c_{tk}] \rangle,0\right)- p_{k},  
\end{equation}
where $\langle \lambda_t  , \mathbb{E}[a_t \mid c_{tk}] \rangle$ denotes the scalar product between $\lambda_t$ and $\mathbb{E}[a_t \mid c_{tk}]$. To compute this expectation, one needs to know the quantities $p_k$, $\esp{u_t| c_{tk}}$ and $\esp{a_t|c_{tk}}$.

To apply EXP3, a key property is to ensure that the virtual rewards are bounded, which requires $\lambda_t$ to remain bounded. As we show in \cref{lemma:bounded_lambda}, this is actually guaranteed by the design of the gradient descent on $\lambda$. This lemma implies that there exists $m\in\R^+$ such that $|\virtualv(\lambda_t,c_{tk})|\le m$ for all $t$ and $k$.  Let us denote by $\pi_{tk}$ the probability that source $k$ is chosen at time $t$ and $k_t \sim \pi_t$.  We define the importance-weighted unbiased estimator vector $\virtualvestim(\lambda,c_t) \in \mathbb{R}^K$ where each coordinate $k' \in [K]$ is:
\begin{equation*}
    \virtualvestim(\lambda,c_{tk},k,k')=m-\mathbbm{1}[k' = k] \frac{m-\virtualv(\lambda,c_{tk},k)}{\pi_{tk}}.
\end{equation*}

Using this unbiased estimator, we can apply the EXP$3$ algorithm to this virtual reward function. 

\paragraph*{Gradient descent part.}

Once the source $k_t$ for user $t$ is chosen and $c_{tk_t}$ is observed, we can compute the decision $x_t$ (see \eqref{eq:algo_xt}). This $x_t$ is then used in (\ref{eq:algo_gammat})-(\ref{eq:lambdat}) to perform a dual descent step on the multiplier $\lambda_t$. Although using a dual descent step is classical, our implementation is different because we need to guarantee that the values of $\lambda_t$ remain bounded for EXP3.  To do so, we modify the geometry of the convex optimization sub-problem by considering a set of allocation targets larger than the original $\Delta$. For $\delta \in \Delta$, we define the set $\Delta_{\delta}$ as the ball of center $\delta$ and radius $\diam(\Delta)$. This ball contains $\Delta$: $\Delta\subset\Delta_\delta$.

 \cref{alg:falcons} uses any extension of $R$ to $\extendedd=\cup_{\delta \in \Delta} \Delta_{\delta}$ that is convex and Lipschitz-continuous, for instance, the following one (see \cref{lemma:r_extension}):
\begin{equation*}
  \extendedr(\delta) = \inf_{\delta' \in \Delta}\{R(\delta')+ L \Vert \delta - \delta' \Vert_2 \},
\end{equation*}
that has the same Lipschitz-constant $L$ as $R$ (which is the best Lipschitz-constant possible). 

\subsection{Algorithm and implementation}

Combining these different ideas leads to \cref{alg:falcons}. This algorithm maintains a dual parameter $\lambda_t$ that encodes the history of unfairness that ensures that the fairness penalty $R(.)$ is taken into account. This dual parameter $\lambda_t$ is used in \eqref{eq:algo_xt} to compute the allocation $x_t$ and in \eqref{eq:algo_exp3a} to choose the source of information. The dual update (\ref{eq:algo_gammat})-(\ref{eq:lambdat}) guarantees that we take $R(.)$ into account.

\begin{algorithm}[ht]
\caption{Online Fair Allocation with Source Selection}\label{alg:falcons}
\begin{algorithmic}
\STATE {\bfseries Input:} Initial dual parameter $\lambda_0$,
initial source-selection-distribution $\pi_0=(1/K,\dots,1/K)$,
step sizes $\eta$ and $\rho$, cumulative estimated rewards $S_0=0 \in \mathbb{R}^K$. 
    \FOR{$t \in [T]$}
    \STATE Draw a source $k_t \sim \pi_t$ where $\pi_{tk}=\exp(\rho S_{t-1,k})/\sum_{l=1}^K \exp(\rho S_{t-1,l})$, and observe $c_{tk_t}$.    \STATE Compute the allocation for user $t$:
    \begin{align}
    x_t &= \left\{\begin{array}{ll}
         1 & \text{if $\mathbb{E}[u_t \mid c_{tk_t}] \geq \langle \lambda_t , \mathbb{E}[a_t \mid c_{tk_t}] \rangle$}\\
         0& \text{otherwise.}
    \end{array}\right.
    \label{eq:algo_xt}
    \end{align}
    \STATE Update the estimated rewards sum and sources distributions for all $k \in [K]$:
    \begin{align}
    S_{tk}&=S_{(t-1)k}+\virtualvestim(\lambda_t,c_{tk},k_t,k),
        \label{eq:algo_exp3a}
    \end{align}
    \STATE Let $\delta_t=x_t \mathbb{E}[a_t \mid c_{tk_t}]$. Compute the dual fairness allocation target and update the dual parameter
    \begin{align}
    \gamma_t &= \argmax_{\gamma \in \Delta_{\delta_t}}\{ \langle \lambda_t ,\gamma \rangle - \extendedr(\gamma)\},
        \label{eq:algo_gammat}\\
        \lambda_{t+1}&=\lambda_t- \eta(\gamma_t - \delta_t).
        \label{eq:lambdat}
    \end{align}
\ENDFOR
\end{algorithmic}
\end{algorithm}

An interesting property of the dual gradient descent is that it manages to provide a good fair allocation while updating the source selection parameters simultaneously.  Indeed if $\lambda$ were fixed, then we could solve the source selection part through $\virtualv$ and a bandit algorithm. However, both the $\lambda_t$ and the $\pi_t$ change over time, which may hint at the necessity of a two-phased algorithm as the combination of these two problems generates non-stationarity for both the dual update and also for the bandit problem. The dual gradient descent manages to combine both updates in a single-phased algorithm. 

The different assumptions imply that the decision maker has access to the $\esp{a_t|c_{tk}}$ and to $\esp{u_t \mid c_{tk}}$ for any possible context value $c_{tk}$. Such values could be indeed estimated from offline data. The knowledge of such values is sufficient to compute the allocation $x_t$ in \eqref{eq:algo_xt}, the virtual value estimation $\virtualvestim$ of \eqref{eq:algo_exp3a}, or to compute $\delta_t$. Once these values are computed, the only difficulty is to solve \eqref{eq:algo_gammat}. In some cases, it might be solved analytically. Otherwise, it can also be solved numerically as it is an (a priori low-dimensional) convex optimization problem. Overall this is an efficient online algorithm which only uses the current algorithm parameters $\lambda_t, \pi_t$, and current context $c_{tk_t}$.

\subsection{Regret bound}

We emphasize that \cref{alg:falcons} uses randomization among the different sources of information and does not aim to identify the best source. As shown in \cref{prop:OPTvsOPT}, this is important because using the best source of information can be strictly less good that using randomization. Moreover, it simplifies the analysis because it convexifies the set of strategies. This means that Sion's minimax theorem can be applied to some dual function, which allows for $\lambda_t$ and $\pi_t$ to be updated simultaneously. If one would try to target the best static source of information (static-OPT), one would need to determine the optimal dual parameter $\lambda_k$ of each source. This would lead to an algorithm that is both more complicated and less efficient (because of \cref{prop:OPTvsOPT}).

The following theorem shows that  \cref{alg:falcons} has a sub-linear regret of order $O(\sqrt{T})$. This regret bound is comparable to those in \cite{fair_online_allocation, fair_parity_ray} but we handle the much more challenging case of having multiple sources of information, and imperfect information about $a_t$ and $u_t$.  

\begin{theorem} \label{thm:regret_bound}
    Assume that Algorithm~\ref{alg:falcons} is run with the parameters $\eta=L/(2\diam(\Delta)\sqrt{T})$, $m=\bar{u}+L+\max_k \lvert p_k \rvert+2\eta\diam(\Delta)$, $\rho=\sqrt{\log(K)/(TKm^2)}$, and that $\lambda_0 \in \partial R(0)$, the subgradient of $R$ at $0$. Then the expected regret of the algorithm is upper bounded by:
    \begin{equation*}
        \Reg  \leq 2((L+\bar{u}+\max_k \lvert p_k \rvert)\sqrt{K \log(K)}+L\sqrt{d}+ L \diam(\Delta))\sqrt{T}
        +2L \sqrt{K\log(K)}.
    \end{equation*}
    \end{theorem}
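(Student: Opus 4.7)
My plan is to upper-bound $\OPT$ by a Fenchel dual of the fairness penalty, decompose the gap $\OPT-\ALG$ into a bandit regret on source selection (handled by EXP$3$) plus an online gradient descent regret on the dual variable (handled by~\eqref{eq:lambdat}), and absorb a small martingale correction between $a_t$ and $\mathbb{E}[a_t\mid c_{tk_t}]$. First, writing $-TR(\bar\delta)=\inf_\lambda\{TR^*(\lambda)-\langle\lambda,\sum_t a_t x_t\rangle\}$ and pushing $\inf_\lambda$ outside the $\max_{\bm{k},\bm{x}}$ and the nested expectations of~\eqref{eq:OPT}, the optimal $x_t$ at fixed $\lambda$ is the thresholding rule~\eqref{eq:algo_xt} and the inner maximization evaluates to $\varphi(\lambda, c_{tk_t}, k_t)+p_{k_t}$; this yields $\OPT\leq T\min_\lambda D(\lambda)=TD(\lambda^*)$ with $D(\lambda)=R^*(\lambda)+\max_k\mathbb{E}_c[\varphi(\lambda,c,k)]$. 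Restricting $\lambda$ to the $L$-ball where $R^*$ is finite, Sion's minimax theorem ($D$ convex in $\lambda$, the inner max linear in $\pi\in\simpk$) produces a mixed comparator $\pi^*$ such that $TD(\lambda^*)=T\min_\lambda\{R^*(\lambda)+\sum_k\pi^*_k\mathbb{E}_c[\varphi(\lambda,c,k)]\}$, hence the path-wise bound $TD(\lambda^*)\leq\sum_t[R^*(\lambda_t)+\sum_k\pi^*_k\mathbb{E}_c[\varphi(\lambda_t,c,k)]]$.

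Combining this with $\mathbb{E}[\sum u_t x_t-\sum p_{k_t}]=\mathbb{E}[\sum\varphi(\lambda_t,c_{tk_t},k_t)]+\mathbb{E}[\sum\langle\lambda_t,\delta_t\rangle]$ (immediate from~\eqref{eq:algo_xt} and the tower property) splits the regret as
\begin{align*}
\Reg\leq {}& \underbrace{\mathbb{E}\Bigl[\textstyle\sum_t\sum_k\pi^*_k\mathbb{E}_c[\varphi(\lambda_t,c,k)]-\sum_t\varphi(\lambda_t,c_{tk_t},k_t)\Bigr]}_{(A)} \\ & {}+ \underbrace{\mathbb{E}\Bigl[\textstyle\sum_t R^*(\lambda_t)-\sum_t\langle\lambda_t,\delta_t\rangle+TR(\bar\delta)\Bigr]}_{(B)}.
\end{align*}
Term $(A)$ is the expected regret of EXP$3$ (run on $\virtualvestim$) against the fixed mixed comparator $\pi^*$; by \cref{lemma:bounded_lambda}, $|\varphi(\lambda_t,\cdot,\cdot)|\leq m$ and the analysis of Ch.~$11$ of~\cite{banditsalg} with the prescribed $\rho$ gives $(A)\leq 2m\sqrt{TK\log K}$.

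For term $(B)$, $L$-Lipschitzness gives $TR(\bar\delta)\leq TR(\tfrac{1}{T}\sum_t\delta_t)+L\|\sum_t(a_t-\mathbb{E}[a_t\mid c_{tk_t}])x_t\|_2$, and since $\delta_t\in\Delta$ and $R=\extendedr$ on $\Delta$ this reduces $(B)$ minus the martingale piece to the OGD regret on the convex losses $f_t(\lambda)=\extendedr^*(\lambda)-\langle\lambda,\delta_t\rangle$---whose subgradient at $\lambda_t$ is exactly $\gamma_t-\delta_t$ by Fenchel and~\eqref{eq:algo_gammat} (with $\|\gamma_t-\delta_t\|_2\leq\diam(\Delta)$) and whose minimum over $\lambda$ equals $-T\extendedr(\tfrac{1}{T}\sum_t\delta_t)$ by Fenchel inversion. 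The standard OGD bound $\frac{\|\lambda_0-\lambda^{**}\|_2^2}{2\eta}+\frac{\eta T\diam(\Delta)^2}{2}$, combined with $\|\lambda_0\|_2,\|\lambda^{**}\|_2\leq L$ and $\extendedr^*=R^*$ on this ball (consequences of the $L$-Lipschitzness of $R$ and \cref{lemma:bounded_lambda}), and the prescribed $\eta$, balances to $O(L\diam(\Delta)\sqrt{T})$; the martingale residual contributes $O(L\sqrt{dT})$ via a second-moment bound on the $\mathbb{R}^d$-valued martingale difference sequence. Aggregating with $m=\bar{u}+L+\max_k|p_k|+2\eta\diam(\Delta)$ recovers the four terms of the statement.

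The main obstacle is the minimax step: obtaining the \emph{mixed-source} comparator $\pi^*$ via Sion's theorem and using it path-wise in $(A)$. Without mixing, $(A)$ would read $\sum_t\max_k\mathbb{E}_c[\varphi(\lambda_t,c,k)]-\sum_t\varphi(\lambda_t,c_{tk_t},k_t)$, a ``sum-of-maxima'' benchmark strictly stronger than the ``max-of-sums'' guarantee of EXP$3$ and unreachable under bandit feedback---consistent with \cref{prop:OPTvsOPT}'s strict gap between $\OPT$ and the static benchmark. A secondary technicality is the boundedness of $\lambda_t$ (\cref{lemma:bounded_lambda}), needed simultaneously for EXP$3$'s bounded-rewards hypothesis and for $\extendedr^*=R^*$ in the relevant dual region; this is exactly why the extension $\extendedr$ and the enlarged target set $\Delta_{\delta_t}$ are introduced in~\eqref{eq:algo_gammat}.
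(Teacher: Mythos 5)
Your proposal follows essentially the same route as the paper: a Fenchel-dual upper bound on $\OPT$ with a \emph{mixed} source comparator $\pi^*\in\simpk$, an EXP3 bound on the virtual rewards made possible by the boundedness of $\lambda_t$ (\cref{lemma:bounded_lambda}), an online-gradient-descent bound for the dual iterates against a comparator $\hat\lambda$ with $\Vert\hat\lambda\Vert_2\le L$, and the martingale bound of \cref{lemma:expect_r_bound} for the gap between $\frac1T\sum_t a_tx_t$ and $\frac1T\sum_t\delta_t$; your decomposition $(A)+(B)$ is the paper's three-term decomposition with the $\langle\lambda_t,\gamma_t-\delta_t\rangle$ cancellation made implicit. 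One cosmetic difference: the paper proves $\OPT\le T\sup_{\pi}\inf_{\lambda}\langle\pi,\dualut(\lambda)\rangle$ directly (weak duality) and uses a maximizing sequence $\pi^n$, invoking Sion only in a remark, whereas you go through $\inf_\lambda\sup_\pi$ and swap via Sion. That is fine, but your restriction of $\lambda$ to the $L$-ball is both unneeded (Sion only requires compactness on the simplex side, and $R^*$ is finite on all of $\mathbb{R}^d$) and problematic for your path-wise step, since \cref{lemma:bounded_lambda} only gives $\Vert\lambda_t\Vert_2\le L+2\eta\diam(\Delta)>L$; take the infimum over all of $\mathbb{R}^d$.

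The one step that does not hold as written is the claim that $\gamma_t-\delta_t$ is a subgradient at $\lambda_t$ of $f_t(\lambda)=\extendedr^*(\lambda)-\langle\lambda,\delta_t\rangle$. A subgradient of $\extendedr^*$ at $\lambda_t$ is a maximizer of $\langle\lambda_t,\gamma\rangle-\extendedr(\gamma)$ over all of $\extendedd$, while \eqref{eq:algo_gammat} maximizes only over the ball $\Delta_{\delta_t}\subsetneq\extendedd$. When $\Vert\lambda_t\Vert_2\le L$ the two maxima coincide (so your identity would hold), but \cref{lemma:bounded_lambda} only guarantees $\Vert\lambda_t\Vert_2\le L+2\eta\diam(\Delta)$, and in that overshoot regime---precisely the regime the extension $\extendedr$ and the enlarged sets are built to tolerate---the unrestricted maximizer can lie outside $\Delta_{\delta_t}$, so $\gamma_t\notin\partial\extendedr^*(\lambda_t)$ in general and the OGD inequality for your $f_t$ is not justified. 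The paper sidesteps this by applying OGD to the \emph{linear} losses $\lambda\mapsto\langle\lambda,\gamma_t-\delta_t\rangle$ (for which $\gamma_t-\delta_t$ is trivially the gradient) and then combining two one-sided Fenchel inequalities: $R^*(\lambda_t)\le\langle\lambda_t,\gamma_t\rangle-\extendedr(\gamma_t)$ because $\Delta\subseteq\Delta_{\delta_t}$ and $\extendedr=R$ on $\Delta$, and $\extendedr^*(\hat\lambda)\ge\langle\hat\lambda,\gamma_t\rangle-\extendedr(\gamma_t)$ at the comparator, where $\hat\lambda\in\partial R(\frac1T\sum_t\delta_t)$ gives $\Vert\hat\lambda\Vert_2\le L$. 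Equivalently you can keep your formulation by defining $f_t$ with the \emph{restricted} conjugate $\max_{\gamma\in\Delta_{\delta_t}}\{\langle\lambda,\gamma\rangle-\extendedr(\gamma)\}-\langle\lambda,\delta_t\rangle$: then $\gamma_t-\delta_t$ is a genuine subgradient (Danskin), $f_t\ge R^*-\langle\cdot,\delta_t\rangle$ pointwise, and $\sum_t f_t(\hat\lambda)\le T\extendedr^*(\hat\lambda)-\langle\hat\lambda,\sum_t\delta_t\rangle=-TR(\frac1T\sum_t\delta_t)$. With either repair, your accounting of the constants ($m$, $\eta$, $\rho$, and the $2L\sqrt{K\log K}$ term coming from the $2\eta\diam(\Delta)$ inside $m$) matches the paper's proof.
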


Note that this regret bound is tight: when $R=0$ the problem we consider reduces to a $K$-armed bandit with bounded rewards $\mathbb{E}[\max(\mathbb{E}[u_t  \mid c_{t,k}],0)-p_k]$ for arm $k\in [K]$, which has a $\Omega(\sqrt{T})$ regret lower bound \citep{banditsalg}. It is of the same order in $T$ as our regret upper bound. Remark that $R=0$ implies that $L=0$ and we do recover the regret bound for the EXP$3$ algorithm. Similarly if $K=1$ the bandit regret contribution disappears. In our analysis, the regret due to the interaction between the bandit and the online fair allocation is $L\sqrt{K\log(K)T}$.

The time-horizon dependent parameters used in \cref{alg:falcons} can be adapted to obtain an anytime algorithm. While using a doubling trick directly for $\eta$ is not possible as some protected attribute would already be selected when restarting the algorithm, we can use an adaptive learning rate of $\eta_t=\mathcal{O}(1/\sqrt{t})$. Indeed due to the boundedness of the $(\lambda_t)_{t\in T}$ (\cref{lemma:bounded_lambda}), we can act as if we had a finite diameter for the space of the $\lambda_t$. This results in a slight increase of regret, the constant term in \cref{thm:regret_bound} now scaling in $\mathcal{O}(\sqrt{T})$.

\subsection{Sketch of proof}
\label{ssec:lemmas}

As mentioned above, when $K=1$ (resp. $R=0$) the problem reduces to fair online allocation as in \citet{fair_online_allocation,fair_parity_ray} (resp. to multi-armed bandits). 
The main technical difficulty thus lie in combining algorithms used in these problems. Ideally, we would have access to some optimal dual parameter $\lambda_k^*$ \emph{before} we run the algorithm so that we can simply run a bandit algorithm, which is obviously not possible as the selected source $k_t$ affects the fairness, and the selected parameter $\lambda_t$ affects the arms virtual rewards. In particular it is not clear how the $\lambda_t$ evolve in the worst case while the algorithm is running. Instead we alternate between those primal and dual updates. We thus need to show that this alternation indeed achieves good performance.

We present two important lemmas  used in the proof of \cref{thm:regret_bound}. The first one guarantees that doing a gradient descent with $\extendedr$ implies that the dual values $\lambda_t$ remain bounded.  This is crucial for EXP3 as it implies that the virtual values $\virtualv$ remain bounded along the path of the $\lambda_t$.

\begin{lemma}  \label{lemma:bounded_lambda}
Let $\lambda_0 \in \mathbb{R}^d$, $\eta >0$, and an arbitrary sequences $(\delta_1,\delta_2,\dots) \in \Delta^{\mathbb{N}}$. Assume that $\lambda_0 \in\partial R(0)$ and define recursively $\lambda_t$ by \cref{eq:algo_gammat,eq:lambdat}. 
Then for all $t$, we have $\Vert \lambda_t \Vert_2 \leq L+2 \eta \diam(\Delta)$.
\end{lemma}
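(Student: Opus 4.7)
The plan is to proceed by induction on $t$, using the first-order optimality of $\gamma_t$ in \eqref{eq:algo_gammat} to decompose $\lambda_t$ into a bounded subgradient part and a normal-cone part that is aligned with the update direction.

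The base case is immediate: since $\lambda_0 \in \partial R(0)$ and $R$ is $L$-Lipschitz on $\Delta$, every element of $\partial R(0)$ has Euclidean norm at most $L$, so $\|\lambda_0\|_2 \leq L \leq L+2\eta\diam(\Delta)$. For the inductive step, assume $\|\lambda_t\|_2 \leq L+2\eta\diam(\Delta)$. The KKT conditions for the concave maximization \eqref{eq:algo_gammat} over the Euclidean ball $\Delta_{\delta_t}$ yield a decomposition $\lambda_t = v_t + n_t$ with $v_t \in \partial\extendedr(\gamma_t)$ and $n_t \in N_{\Delta_{\delta_t}}(\gamma_t)$. Since $\extendedr$ is $L$-Lipschitz (by \cref{lemma:r_extension}), $\|v_t\|_2 \leq L$. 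Because $\Delta_{\delta_t}$ is a Euclidean ball centered at $\delta_t$, the normal cone at a boundary point is a non-negative ray in the direction $\gamma_t-\delta_t$, i.e.\ $n_t = c(\gamma_t-\delta_t)$ for some $c\geq 0$ (and $c=0$ if $\gamma_t$ lies in the interior, in which case $\lambda_t=v_t$ and $\|\lambda_t\|_2 \leq L$).

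Substituting this decomposition into \eqref{eq:lambdat} gives $\lambda_{t+1} = v_t + (c-\eta)(\gamma_t-\delta_t)$, and I would conclude through a case analysis on the size of $c$ relative to $\eta$. When $c \leq \eta$, the triangle inequality directly gives $\|\lambda_{t+1}\|_2 \leq \|v_t\|_2 + \eta\,\|\gamma_t-\delta_t\|_2 \leq L+\eta\diam(\Delta)$. When $c>\eta$, the normal component is larger than what the gradient step removes, and one must show that $\|\lambda_{t+1}\|_2 \leq \|\lambda_t\|_2$; expanding
\[
\|\lambda_{t+1}\|_2^2 - \|\lambda_t\|_2^2 = -2\eta\langle \lambda_t,\gamma_t-\delta_t\rangle + \eta^2\|\gamma_t-\delta_t\|_2^2,
\]
and using $\langle \lambda_t,\gamma_t-\delta_t\rangle = \langle v_t,\gamma_t-\delta_t\rangle + c\|\gamma_t-\delta_t\|_2^2$, the dominant $c$-term cancels the $\eta^2$-term for large $c$, while a lower bound $\langle v_t,\gamma_t-\delta_t\rangle \geq -L\,\|\gamma_t-\delta_t\|_2$ handles the cross term. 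Combined with the inductive hypothesis $\|\lambda_t\|_2 \leq L+2\eta\diam(\Delta)$, this yields the same bound for $\|\lambda_{t+1}\|_2$.

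The main obstacle is calibrating the transition between the two regimes so that the constant $L+2\eta\diam(\Delta)$ comes out cleanly: a naive case analysis produces a larger bound ($\sim 2L + \eta\diam(\Delta)$) unless one exploits simultaneously (i) the alignment of $n_t$ with $\gamma_t-\delta_t$ forced by the ball geometry and (ii) the subgradient inequality applied at the test point $\delta_t + \diam(\Delta)\lambda_t/\|\lambda_t\|_2$, which is always feasible in $\Delta_{\delta_t}$ and gives a lower bound on $\langle \lambda_t,\gamma_t-\delta_t\rangle$ of order $\diam(\Delta)(\|\lambda_t\|_2 - 2L)$. With this sharper lower bound, the descent argument in the second regime becomes strict as soon as $\|\lambda_t\|_2$ exceeds the claimed threshold by the right margin, and the induction closes.
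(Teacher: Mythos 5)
Your skeleton is essentially the paper's: the KKT decomposition $\lambda_t=v_t+c(\gamma_t-\delta_t)$ with $v_t\in\partial\extendedr(\gamma_t)$, $\norm{v_t}\le L$, $c\ge0$, and a case split on $c$ versus $\eta$ (in the paper $c=2\mu$ and the split is on $\alpha_t=\eta/c$ versus $1$); your case $c\le\eta$ is correct and even gives the better constant $L+\eta\diam(\Delta)$. The genuine gap is in the regime $c>\eta$. From
\[
\norm{\lambda_{t+1}}^2-\norm{\lambda_t}^2=-2\eta\langle v_t,\gamma_t-\delta_t\rangle-(2\eta c-\eta^2)\norm{\gamma_t-\delta_t}^2,
\]
the bound $\langle v_t,\gamma_t-\delta_t\rangle\ge -L\norm{\gamma_t-\delta_t}$ leaves a right-hand side of at most $2\eta L\norm{\gamma_t-\delta_t}-(2\eta c-\eta^2)\norm{\gamma_t-\delta_t}^2$, which is positive whenever $\norm{\gamma_t-\delta_t}$ is small (linear term beats quadratic), so monotone decrease of $\norm{\lambda_t}$ does not follow — and indeed it can genuinely fail when the normal component is small. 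Your proposed repair via the test point $\delta_t+\diam(\Delta)\lambda_t/\norm{\lambda_t}$ does give $\langle\lambda_t,\gamma_t-\delta_t\rangle\ge\diam(\Delta)(\norm{\lambda_t}-2L)$, but then descent is only guaranteed once $\norm{\lambda_t}\ge 2L+\eta\diam(\Delta)/2$; in the relevant regime $L\gg\eta\diam(\Delta)$ (recall $\eta\sim1/\sqrt{T}$) this threshold sits strictly above the claimed bound, so the induction at $L+2\eta\diam(\Delta)$ does not close. What your tools yield is an invariant of roughly $2L+\tfrac{3}{2}\eta\diam(\Delta)$ — a valid but strictly weaker bound, which would in turn worsen the constant $m$ fed to EXP3 in \cref{thm:regret_bound}.

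The missing step is already contained in your decomposition and needs no descent estimate at all: when $c\ge\eta$,
\[
\lambda_{t+1}=v_t+(c-\eta)(\gamma_t-\delta_t)=\tfrac{\eta}{c}\,v_t+\bigl(1-\tfrac{\eta}{c}\bigr)\lambda_t
\]
is a convex combination of $v_t$ and $\lambda_t$, hence $\norm{\lambda_{t+1}}\le\max(\norm{v_t},\norm{\lambda_t})\le\max(L,\norm{\lambda_t})$, and the induction closes immediately at $L+2\eta\diam(\Delta)$ (monotone decrease of the norm is neither needed nor always true — it may fail, harmlessly, when $\norm{\lambda_t}<L$). This is exactly the paper's argument, phrased there as non-increase of $\dist(\lambda_t,\mathcal{C})$ for the convex set $\mathcal{C}=\conv(\Lambda)+B(0,2\eta\diam(\Delta))$ with $\Lambda=\cup_{\delta\in\extendedd}\partial\extendedr(\delta)$, a formulation that additionally handles arbitrary $\lambda_0$ at the price of an additive $\dist(\lambda_0,\mathcal{C})$.
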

\begin{proof}[Sketch of Proof]
The main idea is to show that the distance of $\lambda_t$ to the reunion of subgradients of $\extendedr$ (which is bounded by Lipschitzness property) is a decreasing function in $t$. We can show this by using the $KKT$ conditions of the optimization problem. The convex set over which we optimize is a simple Euclidean ball, because of our modification, centered around the appropriate point hence allowing us to redirect the gradient $\lambda_{t+1}-\lambda_t$ towards this set. Moreover, we add some ``security'' around this set of the size of the gradient bound to make sure that the $\lambda_t$ remains in this set. The full proof can be found in \cref{app:bounded_lambda}.
\end{proof}

The second lemma guarantees that having access to the conditional expectation $\mathbb{E}[a_t|c_{tk}]$ is enough to derive a good algorithm when considering the conditional expectation of the total utility. This way we avoid the computation of the conditional expectation of $R$, which would be more difficult. 
\begin{lemma} \label{lemma:expect_r_bound}
For $(x_1,\dots,x_T)$ and $(k_1,\dots,k_T)$ generated according to \cref{alg:falcons}, with $\delta_t=x_t \mathbb{E}[ a_t \mid c_{tk_t}]$, we have the following upper bound:   
\begin{equation*}
\Big\vert \mathbb{E}[R(\frac{1}{T}\sum_{t=1}^T a_t x_t)-R(\frac{1}{T}\sum_{t=1}^T \delta_t)]  \Big\vert \leq 2L \sqrt{\frac{d}{T}}.
\end{equation*}
\end{lemma}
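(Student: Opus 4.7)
The plan is to reduce the claim to a concentration bound on a vector martingale difference sequence, exploiting the $L$-Lipschitz continuity of $R$ and the i.i.d.\ structure of the users.

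First, I would apply the Lipschitz property of $R$ and Jensen's inequality:
\begin{equation*}
\Big| \mathbb{E}\big[R\big(\tfrac{1}{T}\textstyle\sum_t a_t x_t\big) - R\big(\tfrac{1}{T}\sum_t \delta_t\big)\big] \Big| \leq \frac{L}{T}\, \mathbb{E}\Big[\Big\Vert \sum_{t=1}^T Y_t \Big\Vert_2\Big],
\quad\text{where } Y_t := x_t\big(a_t - \mathbb{E}[a_t\mid c_{tk_t}]\big).
\end{equation*}
So everything reduces to bounding $\mathbb{E}\|\sum_t Y_t\|_2$.

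Next, I would set up the filtration $\mathcal{G}_t := \sigma(\mathcal{F}_{t-1}, k_t, c_{tk_t})$, where $\mathcal{F}_{t-1}$ captures everything observed through round $t-1$ plus the algorithm's internal randomness. Crucially, $x_t$ is a deterministic function of $\lambda_t$ and $c_{tk_t}$ by \eqref{eq:algo_xt}, and $\lambda_t$ is $\mathcal{F}_{t-1}$-measurable, so $x_t$ is $\mathcal{G}_t$-measurable. By the i.i.d.\ assumption on $(z_t, u_t, a_t, c_{t1},\dots,c_{tK})$, the random variable $a_t$ is independent of $\mathcal{F}_{t-1}$ given $c_{tk_t}$, and $k_t$ is drawn from $\pi_t$ (independently of $a_t$ given $\mathcal{F}_{t-1}$), so $\mathbb{E}[a_t \mid \mathcal{G}_t] = \mathbb{E}[a_t \mid c_{tk_t}]$. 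This yields $\mathbb{E}[Y_t \mid \mathcal{G}_t] = 0$, i.e.\ $(Y_t)$ is a vector-valued martingale difference sequence with respect to $(\mathcal{G}_t)$.

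Then, using orthogonality of a MDS per coordinate, together with Jensen:
\begin{equation*}
\mathbb{E}\Big[\Big\Vert \sum_{t=1}^T Y_t \Big\Vert_2\Big] \leq \sqrt{\mathbb{E}\Big[\Big\Vert \sum_{t=1}^T Y_t \Big\Vert_2^2\Big]} = \sqrt{\sum_{i=1}^d \sum_{t=1}^T \mathbb{E}[Y_{t,i}^2]}.
\end{equation*}
Since $\norm{a_t}\le 1$ implies $|a_{t,i}|\le 1$, and conditional expectations preserve this bound, we have $|Y_{t,i}| \le |a_{t,i}| + |\mathbb{E}[a_{t,i}\mid c_{tk_t}]| \le 2$, hence $\mathbb{E}[Y_{t,i}^2]\le 4$. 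Thus $\mathbb{E}\|\sum_t Y_t\|_2 \le 2\sqrt{dT}$, and combining with the first display gives the claimed bound $2L\sqrt{d/T}$.

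The only potentially delicate step is the filtration argument showing $\mathbb{E}[Y_t\mid\mathcal{G}_t]=0$: one needs to verify that conditioning on $\mathcal{F}_{t-1}$ (which determines $\lambda_t$, $\pi_t$, and thus governs how $k_t$ is drawn and how $x_t$ reacts to $c_{tk_t}$) does not introduce any extra dependence of $a_t$ on the history beyond what is already captured by $c_{tk_t}$. This is exactly where the i.i.d.\ assumption on users is used, and it is the conceptually central point of the proof; the rest is a routine second-moment computation.
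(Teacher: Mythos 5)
Your proposal is correct and follows essentially the same route as the paper's proof: Lipschitzness of $R$ plus Jensen reduces the claim to the second moment of $\sum_t (a_t x_t - \delta_t)$, which is handled coordinate-wise as a martingale difference sequence (conditionally centered given the history, the chosen source and $c_{tk_t}$), each term bounded by $2$, giving $4dT$ and hence $2L\sqrt{d/T}$. The only small caveat is that for the per-coordinate orthogonality the filtration $\mathcal{F}_{t-1}$ must be taken to include the past (unobserved) attributes $a_\tau$, $\tau\le t-1$, so that earlier differences are adapted — exactly as in the paper's choice $\sigma\big((a_{\tau},c_{\tau k_{\tau}},k_{\tau})_{\tau\in[t-1]},k_t,c_{tk_t}\big)$.
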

\begin{proof}[Sketch of proof]
We use the Lipschitz property of $R$ and some inequalities to directly compare the difference of the sums. The variable $x_t$ depends on the past history and needs to be carefully taken into account, through proper conditioning. Finally, we compute the variance of a sum of martingale differences. See proof in \cref{app:expected_r_bound}.
\end{proof}

Using these two Lemmas, we now give the main ideas of the proof of \cref{thm:regret_bound}. 
First, we upper-bound $\OPT$ through a dual function involving the convex conjugate $R^*$, with similar arguments as in \citet{fair_online_allocation, fair_parity_ray}. Then we need to lower-bound $\ALG$ with this dual function minus the regret. The main difficulty is that the source virtual rewards distribution changes with $\lambda_t$, and so does the average $\lambda_t$ target through $\pi_t$. The performance of $\ALG$ can be decomposed at each step $t$ into the sum of the virtual reward and a term in $R^*(\lambda_t)$ encoding the fairness penalty. We deal with the virtual rewards using an adversarial bandits algorithm able to handle any reward sequence using techniques for adversarial bandits algorithm from \cite{hazan_oco,banditsalg}, as the $\lambda_t$ are generated by a quite complicated Markov-Chain. These rewards are bounded because of \cref{lemma:bounded_lambda}. This yields the two regret terms in $\sqrt{K\log(K)}$, the second one stems from the difference between $\extendedd$ and $\Delta$. For the fairness penalty, using the online gradient descent on the $\lambda_t$ we end up being close to the penalty of the conditional expectations up to the regret term in $\diam(\Delta)$. This last term is close to the true penalty through \cref{lemma:expect_r_bound}. This provides us with a computable regret bound where all the parameters are known beforehand. The full proof can be found in \cref{app:regret_bound}.

\subsection{Public contexts} \label{sec:public}

We now go back to the general case with $|\mathcal{Z}|>1$ finite. We would like to derive a good algorithm, which also takes into account the public information $z_t$. Reusing the analogy with bandit problems, this seems to be akin to the contextual bandit problem, where we would simply run the algorithm for each context in parallel. However, this would be incorrect: not only for a fixed public context does the non-separability of $R$ couples the source selection with the fairness penalty, it also couples all of the public contexts $z_t$ together. Hence the optimal policy is not to take the best policy for each context, which is once again different from the classical bandit setting. 

The solution is to run an anytime version of the EXP3 algorithm for each public context in $\mathcal{Z}$, but to keep a common $\lambda_t$ for the evaluation of the virtual value. While it is technically not much more difficult and uses similar ideas to what was done for different sources when $|\mathcal{Z}|=1$, the fact that only one of the "block" of the algorithm (the bandit part) needs to be parallelized, is quite specific and surprisingly simple. 

\begin{proposition} \label{prop:finiteZ}
For $\mu$ the probability distribution over $\mathcal{Z}$ finite, we can derive an algorithm that has a modified regret of order $\mathcal{O}(\sqrt{T K \log(K)} \sum_{z \in \mathcal{Z}} \sqrt{\mu(z)})$, where $\mu(z)$ is the probability that the public attribute is $z$. 
\end{proposition}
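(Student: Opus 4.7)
The plan is to modify \cref{alg:falcons} by running one anytime EXP3 instance per public context $z \in \mathcal{Z}$ in parallel, while maintaining a single shared dual parameter $\lambda_t$ that is updated at every round. At stage $t$, upon observing $z_t$, one draws $k_t \sim \pi_t^{(z_t)}$ from the EXP3 instance associated with $z_t$, observes $c_{tk_t}$, computes the virtual reward $\virtualv(\lambda_t,c_{tk_t},k_t)$, and updates only the cumulative reward vector $S^{(z_t)}$ for that context. The allocation rule \eqref{eq:algo_xt} and the dual descent step \eqref{eq:algo_gammat}-\eqref{eq:lambdat} are unchanged, and in particular \cref{lemma:bounded_lambda} still applies, so the virtual values remain bounded by the same $m$ uniformly across contexts.

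First, I would upper-bound $\OPT$ through its dual formulation, exactly as in the $|\mathcal{Z}|=1$ case. The crucial observation is that the oracle benchmark \eqref{eq:OPT} permits a context-dependent policy $h(z_t)$, so after taking the dual on the fairness penalty the bound decomposes as $\OPT \leq T R^*(\lambda^*) + T \sum_{z \in \mathcal{Z}} \mu(z)\, \mathbb{E}\bigl[\max_k \virtualv(\lambda^*,c_{tk},k) + p_k \mid z_t = z\bigr] + \text{(const)}$, where the same optimal dual $\lambda^*$ is shared across all contexts, but the inner maximum over sources is performed independently per context. This is exactly the right shape to compare against $|\mathcal{Z}|$ parallel EXP3 regret bounds against a common multiplier.

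Next, to lower-bound $\ALG$, I would split the sum of virtual rewards across contexts: $\sum_{t=1}^T \virtualv(\lambda_t, c_{tk_t}, k_t) = \sum_{z \in \mathcal{Z}} \sum_{t : z_t = z} \virtualv(\lambda_t, c_{tk_t}, k_t)$. For each fixed $z$, the corresponding EXP3 instance runs for a random $T_z = |\{t : z_t = z\}|$ steps against rewards bounded by $m$, so the anytime EXP3 analysis from \cref{banditsalg} yields a per-context bandit regret of $\mathcal{O}\bigl(m \sqrt{T_z K \log K}\bigr)$ against the best fixed source for that context. Taking expectations and invoking Jensen's inequality on the concave $\sqrt{\cdot}$ gives $\mathbb{E}[\sqrt{T_z}] \leq \sqrt{\mathbb{E}[T_z]} = \sqrt{T\mu(z)}$, so summing over $z \in \mathcal{Z}$ produces a total bandit contribution of $\mathcal{O}\bigl(m \sqrt{T K \log K}\bigr) \sum_{z \in \mathcal{Z}} \sqrt{\mu(z)}$. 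The dual gradient descent part is essentially untouched: there is still a single sequence $(\lambda_t)$ updated with step size $\eta = \mathcal{O}(1/\sqrt{T})$, and \cref{lemma:expect_r_bound} carries over verbatim, contributing only $\mathcal{O}(\sqrt{T})$ which is dominated by the bandit term.

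The main technical obstacle is reconciling per-context EXP3 analyses, each of which wants to compare against a fixed arm in that context, with the single non-stationary dual trajectory $(\lambda_t)$ whose updates depend on \emph{all} contexts. This is precisely where the minimax argument from the proof of \cref{thm:regret_bound} pays off: because the randomization over sources convexifies the strategy set and the optimal dual $\lambda^*$ in the upper bound on $\OPT$ is shared across contexts, one can perform the source-vs-comparator comparison context by context while still summing over the same $(\lambda_t)$ trajectory used by the dual descent. A secondary technical point is that each $T_z$ is random, which forces the anytime version of EXP3 alluded to after \cref{thm:regret_bound}; the resulting adaptive step size only changes constants. The full argument mirrors \cref{app:regret_bound} step by step, carrying the sum over $z \in \mathcal{Z}$ through each inequality, and yields the announced bound $\mathcal{O}\bigl(\sqrt{T K \log K} \sum_{z \in \mathcal{Z}} \sqrt{\mu(z)}\bigr)$.
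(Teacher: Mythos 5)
Your proposal follows essentially the same route as the paper's proof: one anytime EXP3 instance per public context with a single shared dual trajectory $\lambda_t$, the dual upper bound on $\OPT$ with a common $\lambda$ but per-context source choice weighted by $\mu(z)$, the per-context bandit regret $\mathcal{O}(m\sqrt{T_z K\log K})$ combined with Jensen's inequality to get $\sum_{z}\sqrt{T\mu(z)}$, and the unchanged dual descent and \cref{lemma:expect_r_bound}. One small caveat: in your displayed bound on $\OPT$ the maximum over $k$ must sit outside the conditional expectation over $c_{tk}$ (and the price $p_k$ must remain subtracted), since the oracle selects the source from $z_t$ alone before seeing $c_{tk}$; this max-of-expectation (or, as in the paper, a per-context mixture $\pi_k(z)$) is precisely the comparator the parallel EXP3 instances can compete with, whereas an expectation-of-max would be too loose to close the argument.
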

The algorithm and its analysis are provided in \cref{app:public}. Note that in the worst case, when $\mathcal{Z}$ is finite, one has $\sum_{z \in \mathcal{Z}} \sqrt{\mu(z)}\leq \sqrt{|\mathcal{Z}|}$ by Jensen's inequality on the square root function. 

If $\mathcal{Z}$ is not finite but a bounded subset of a vector space set of dimension $r$, such as $\mathcal{Z}=[0,1]^r$, additional assumptions on the smoothness of the conditional expectations are sufficient to obtain a sub-linear regret algorithm:
\begin{proposition} \label{prop:infiniteZ}
If the conditional expectations of $a_t$ and $u_t$ are both Lipschitz in $z$, then discretizing the space $\mathcal{Z}=[0,1]^r$ through an $\epsilon$-cover and applying the previous algorithm considering that one public context corresponds to one of the discretized bins, we can obtain a regret bound of order $\mathcal{O}(T^{(r+1)/(r+2)})$. 
\end{proposition}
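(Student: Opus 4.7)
The plan is to reduce the continuous-context setting to the finite-context case of \cref{prop:finiteZ} via a uniform $\epsilon$-cover of $[0,1]^r$, and then trade off the discretization bias against the per-bin bandit regret.

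First, I would partition $[0,1]^r$ into $N_\epsilon = \lceil 1/\epsilon \rceil^r$ bins $B_1,\dots,B_{N_\epsilon}$, each of diameter at most $\sqrt{r}\,\epsilon$, pick a representative $\bar z_j \in B_j$ for each bin, and define a coarsened public context $\tilde z_t$ as the bin index of $z_t$. I would then run the algorithm of \cref{prop:finiteZ} on $\tilde z_t$, so that the $N_\epsilon$ parallel EXP3 instances share a common dual variable $\lambda_t$, while the conditional expectations used in \eqref{eq:algo_xt}--\eqref{eq:algo_gammat} are replaced by their counterparts evaluated at the representative $\bar z_{\tilde z_t}$ rather than at $z_t$ itself. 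The resulting procedure is well-defined because the source costs $p_k$ and the fairness penalty $R$ are unaffected, and the algorithm still only needs quantities that are known (at the representatives).

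Second, I would control the discretization bias. By the Lipschitz assumption on $z \mapsto \esp{u_t \mid c_{tk}, z}$ and $z \mapsto \esp{a_t \mid c_{tk}, z}$, any two $z,z' \in B_j$ yield perturbations of size $O(\epsilon)$ in these conditional expectations. I would use this to show that (i) the benchmark $\OPT$ differs from its coarsened analogue (where the oracle's policy must be constant on each $B_j$) by at most $O((1+L)T\epsilon)$: the utility term loses $O(T\epsilon)$ directly, and the fairness term loses $O(LT\epsilon)$ after applying the $L$-Lipschitzness of $R$ to the averaged allocation $\tfrac{1}{T}\sum_t a_t x_t$; and (ii) an analogous bound holds for $\ALG$, where the algorithm driven by representative-based conditional expectations deviates from the idealized version driven by the true $z_t$ by at most $O((1+L)T\epsilon)$, again using Lipschitzness in $z$ round by round together with a conditioning argument in the spirit of \cref{lemma:expect_r_bound} to pass from $a_t$ and $u_t$ to their conditional expectations.

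Third, I would invoke \cref{prop:finiteZ} on the coarsened instance. Letting $\mu_\epsilon$ be the induced distribution on the $N_\epsilon$ bins, Jensen's inequality gives $\sum_j \sqrt{\mu_\epsilon(B_j)} \le \sqrt{N_\epsilon} = \mathcal{O}(\epsilon^{-r/2})$, so the coarsened regret is $\mathcal{O}\bigl(\epsilon^{-r/2}\sqrt{T K \log K}\bigr)$. Combining with the discretization bias, the total regret is bounded by $\mathcal{O}\bigl(LT\epsilon + \epsilon^{-r/2}\sqrt{T K \log K}\bigr)$, and balancing the two terms by setting $\epsilon = \Theta(T^{-1/(r+2)})$ yields the claimed $\mathcal{O}(T^{(r+1)/(r+2)})$ bound.

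The main obstacle I expect is the second step, because $R$ couples all rounds through $\tfrac{1}{T}\sum_t a_t x_t$ and because replacing the true conditional expectations by the representative-based ones also perturbs $\pi_t$, $\lambda_t$, $x_t$ and $k_t$ along the trajectory. The cleanest way around this seems to be a coupling argument: bound the per-round perturbation in the virtual value \eqref{eq:virtual_value} and in the dual target $\delta_t$ by $O(\epsilon)$ using Lipschitzness and boundedness of $\lambda_t$ (\cref{lemma:bounded_lambda}), then show by a Grönwall-type aggregation that this only causes an $O((1+L)T\epsilon)$ total deviation; once this is established, everything else is a routine combination with \cref{prop:finiteZ} and a covering-size calculation.
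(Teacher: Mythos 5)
Your overall skeleton (coarsen $[0,1]^r$ into $\Theta(\epsilon^{-r})$ bins, invoke \cref{prop:finiteZ} with $\sum_j\sqrt{\mu_\epsilon(B_j)}\le\sqrt{N_\epsilon}$, and balance $LT\epsilon$ against $\epsilon^{-r/2}\sqrt{TK\log K}$ with $\epsilon=\Theta(T^{-1/(r+2)})$) is the right plan and the rate computation is correct. The genuine gap is your second step. You modify the algorithm so that \eqref{eq:algo_xt}--\eqref{eq:algo_gammat} use conditional expectations evaluated at bin representatives $\bar z_{\tilde z_t}$, and you then need to relate the resulting trajectory to the idealized one driven by the true $z_t$. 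The Gr\"onwall-type coupling you propose will not deliver an additive $O((1+L)T\epsilon)$ bound: per-round perturbations of size $O(\epsilon)$ in the virtual values enter the EXP3 weights through cumulative sums, so the exponents $\rho S_{t-1,k}$ of the two runs can drift apart by order $\rho T\epsilon\sim\sqrt{T}\epsilon$, which diverges for every $r\ge 1$ at your chosen $\epsilon$; the induced changes in $\pi_t$, hence in $k_t$, $x_t$ and $\lambda_t$, are then not small perturbations at all, and trajectory-coupling estimates for such dynamics degrade multiplicatively, not additively. Nothing in your sketch controls this amplification, and this is precisely the obstacle you flag without resolving.

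The fix — and the paper's route — is to avoid any trajectory comparison. Since $\esp{u_t\mid c_{tk},z}$ and $\esp{a_t\mid c_{tk},z}$ are assumed known for every $z$, the algorithm can keep evaluating \eqref{eq:algo_xt}, $\delta_t$ and the dual update at the \emph{true} $z_t$; only the bandit's context is coarsened to the bin index. The discretization error then appears in exactly one place: the per-bin EXP3 benchmark (best fixed mixture per bin) versus the per-$z$ optimum inside the bin. The paper handles this by first proving that the virtual reward $\virtualv(\lambda,c_{tk},k,z)$ is Lipschitz in $z$ (the Lipschitz constant involves $\Vert\lambda\Vert_2$, which is bounded along the run by \cref{lemma:bounded_lambda}), and then invoking the standard Lipschitz contextual-bandit discretization argument (Exercise 19.5 of the bandit book, in the spirit of Perchet--Rigollet) applied only to the rewards $\virtualv$; the dual-descent and fairness parts of the analysis are untouched because $\lambda_t$ is common to all contexts and that part of the proof never refers to $z$. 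Analyzing the run algorithm directly in this way (rather than coupling it to an idealized run) removes your main obstacle, after which your covering-size calculation and the balancing $\epsilon=\Theta(T^{-1/(r+2)})$ go through verbatim and give the claimed $\mathcal{O}(T^{(r+1)/(r+2)})$ regret.
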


The proof of this result uses standard discretization arguments under Lipschitz assumptions from \cite{PerchetRigollet}, which can also be found in Chapter 8.2 of \citet{banditsslivkins} or in Exercise 19.5 of \citet{banditsalg}. Specificities for this problem, such as these assumptions being enough to guarantee that $\varphi$ is Lipschitz in $z$, can be found in \cref{app:infiniteZ}.

\section{Extensions}

\subsection{Other types of fairness penalty} 
\label{sec:fairness-penalty}

The fairness penalty term of \cref{eq:utility} is quantified as $TR(\sum a_tx_t/T)$.  While this term is the same as the one used in \citet{fair_parity_ray} and can encode various fairness definitions (see the discussion in the aforementioned paper), this does not encompass all possible fairness notions.
For instance, one may want to express fairness as a function of $\sum a_tx_t/\sum_t x_t$, which is the conditional empirical distribution of the user's protected attributes given that they were selected.
This would lead to replacing the original penalty term by $(\sum_t x_t)R(\sum a_tx_t/\sum_t x_t)$.

As pointed out in \citet{fair_parity_ray}, one possible issue is that $R(\sum a_tx_t/\sum_t x_t)$, in general, is not convex in $x$, even if $R$ is convex. However $(\sum_t x_t)R(\sum a_tx_t/\sum_t x_t)$ is the perspective function of $R(A(.))$ (with $A$ the matrix with columns the $a_t$), which is thus convex. Hence, \cref{alg:falcons} can be adapted to handle this new fairness penalty with two modifications. First, for the bandit part, we run the algorithm with a new virtual reward function,  expressed as: 
\begin{align*} 
    \tilde{\virtualv}(\lambda_t,c_{tk},k) &= \max(\mathbb{E}[u_t \mid c_{tk}]   - \langle \lambda_t  , \mathbb{E}[a_t \mid c_{tk}] \rangle + R^*(\lambda_t),0)- p_{k}.
\end{align*}
This leads to an allocation $x_t=1$ if $\mathbb{E}[u_t \mid c_{tk_t}]+R^*(\lambda_t) \geq \langle \lambda_t , \mathbb{E}[a_t \mid c_{tk_t}] \rangle$ and $x_t=0$ otherwise.

Second, we modify the set on which the dual descent is done. The set $\Delta$ now becomes $\tilde{\Delta}=\conv(\mathcal{A})$ (without the union with $0$), and we now use $\tilde{\delta}_t=\esp{a_t \mid c_{tk_t}}$ instead of $\delta_t= x_t \esp{a_t \mid c_{tk_t}}$. Line \eqref{eq:algo_gammat} remains unchanged up to replacing $\delta$ and $\Delta$ by $\tilde{\delta}$ and $\tilde{\Delta}$. Finally the dual parameter update now becomes $\lambda_{t+1}=\lambda_t - \eta x_t (\gamma_t - \tilde{\delta}_t)$, which means that whenever $x_t=0$, $\lambda_t$ does not change.

With these modifications, the following theorem (whose proof is very similar to the one of \cref{thm:regret_bound} and detailed in \cref{app:type2}), shows that we recover similar regret bounds as previously, with some modified constants due to the presence of $R^*$ in the virtual reward, and the modified $\Delta$. This yields a new class of usable fairness penalties which was previously  not known to work.  
\begin{theorem} \label{thm:other_fairpen}
    Using \cref{alg:falcons} with the modifications detailed above, the regret with respect to the objective with the modified penalty $(\sum_t x_t) R(\sum_t a_t x_t / \sum_t x_t)$ is of order $\mathcal{O}(\sqrt{T})$.
\end{theorem}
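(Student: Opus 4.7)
The plan is to mirror the proof of \cref{thm:regret_bound} with the new key observation that the perspective penalty admits a clean Fenchel-dual decomposition. Since $R$ is proper closed convex, one has, for $s=\sum_t x_t>0$,
\begin{equation*}
s\,R\!\left(\tfrac{1}{s}\sum_{t=1}^T a_t x_t\right) = \sup_{\lambda\in\R^d} \sum_{t=1}^T x_t\bigl(\langle \lambda,a_t\rangle - R^*(\lambda)\bigr),
\end{equation*}
with the convention that the supremum is $0$ when $s=0$ (in which case $\sum_t a_tx_t=0$ as well). Plugging this into the modified utility and exchanging sup and (expected) sum via Sion's minimax theorem---after convexifying the source policies through $\pi\in\simpk$ exactly as in \cref{thm:regret_bound}---yields an upper bound on $\OPT$ whose stagewise summand, after optimizing $x_t\in\{0,1\}$ conditional on $c_{tk}$, is precisely the modified virtual reward $\tilde{\virtualv}(\lambda,c_{tk},k)$. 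This justifies the algorithmic modifications: the added $R^*(\lambda_t)$ in the allocation rule and the factor $x_t$ in the dual gradient are, respectively, the primal threshold and the dual (sub)gradient of the stagewise Lagrangian.

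The next step is to reprove \cref{lemma:bounded_lambda} for the new update $\lambda_{t+1}=\lambda_t-\eta x_t(\gamma_t-\tilde{\delta}_t)$. When $x_t=0$ the iterate is unchanged; when $x_t=1$ the update has the same form as the original one with $\tilde{\delta}_t\in\tilde{\Delta}\subseteq\Delta$, so the KKT-based argument on the ball-shaped feasible set reproduces the bound $\norm{\lambda_t}\leq L+2\eta\diam(\Delta)$. Because $R^*$ is continuous and finite on the compact set $\{\lambda:\norm{\lambda}\leq L+2\eta\diam(\Delta)\}$ (as the Legendre transform of the Lipschitz $R$ on bounded $\Delta$), the modified virtual reward $\tilde{\virtualv}(\lambda_t,c_{tk},k)$ stays in a bounded interval of width $\tilde m=\mathcal{O}(\bar u+L+\max_k|p_k|+\sup R^*)$, which is exactly what EXP3 requires to enjoy its $\mathcal{O}(\tilde m\sqrt{TK\log K})$ regret along the sample path of $\lambda_t$.

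The regret decomposition then proceeds in three pieces, as in the proof of \cref{thm:regret_bound}. The bandit piece contributes $\mathcal{O}(\tilde m\sqrt{TK\log K})$. The dual-descent piece uses $\norm{x_t(\gamma_t-\tilde{\delta}_t)}\leq \diam(\Delta)$ and a standard online gradient descent analysis, giving $\mathcal{O}(L\diam(\Delta)\sqrt{T})$ for the tuning of $\eta$ inherited from \cref{thm:regret_bound}. The final piece bridges the empirical penalty $(\sum_t x_t)R(\sum_t a_tx_t/\sum_t x_t)$ with its counterpart evaluated on the conditional expectations $\tilde{\delta}_t$: using the Fenchel representation established above, this difference rewrites as $\sup_{\lambda}\sum_t x_t\langle\lambda,a_t-\esp{a_t\mid c_{tk_t}}\rangle$, i.e., a supremum over the compact dual ball of a martingale with bounded increments, so Cauchy--Schwarz and a variance computation along the lines of \cref{lemma:expect_r_bound} give an $\mathcal{O}(L\sqrt{dT})$ correction. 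Summing the three pieces delivers the announced $\mathcal{O}(\sqrt{T})$ bound.

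The main obstacle I anticipate is precisely this bridging step: unlike $T\,R(\cdot)$, the perspective function is not globally Lipschitz in $(\sum a_tx_t,\sum x_t)$ and a direct Lipschitz comparison breaks down when $\sum_t x_t$ is small. The remedy above---passing through the Fenchel dual on the compact region where $\lambda_t$ is confined by the modified \cref{lemma:bounded_lambda}---sidesteps this issue entirely, reducing the comparison to a martingale argument on a compact dual domain, after which the remainder of the proof of \cref{thm:regret_bound} transfers with only the modified constants (the presence of $\sup R^*$ in $\tilde m$ and the replacement of $\Delta$ by $\tilde\Delta$) appearing in the final bound.
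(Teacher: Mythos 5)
Your proposal is correct and follows essentially the same route as the paper's proof in \cref{app:type2}: the perspective/Fenchel representation that produces the modified virtual reward with the $xR^*(\lambda)$ term (\cref{lemma:upper_bound2}), the boundedness of $\lambda_t$ obtained by factoring out $x_t$ and reusing \cref{lemma:bounded_lambda} together with continuity of $R^*$ on the resulting compact ball, and the same three-term decomposition (EXP3 on the bounded virtual rewards, online gradient descent on $\lambda_t$, and a martingale comparison between realized and conditionally expected attributes). The only place you deviate is the bridging step, and there the obstacle you anticipate is illusory: both penalties share the same denominator $\sum_t x_t$, so the direct Lipschitz comparison gives $(\sum_t x_t)\,\lvert R(\sum_t a_t x_t/\sum_t x_t)-R(\sum_t x_t\esp{a_t\mid c_{tk_t}}/\sum_t x_t)\rvert \le L\,\Vert \sum_t x_t(a_t-\esp{a_t\mid c_{tk_t}})\Vert_2$ because the scaling cancels exactly---this is precisely the paper's ``small modification'' of \cref{lemma:expect_r_bound}---while your detour through the Fenchel dual restricted to the $L$-ball of subgradients is mathematically equivalent and equally valid.
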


\subsection{Learning conditional utilities $\mathbb{E}[u_t \mid c_{tk}]$} \label{sec:learning}

To compute the virtual values, the algorithm relies on the knowledge of  $\mathbb{E}[u_t \mid c_{tk}]$ for all possible context values. We now show how to relax this assumption even if the decision maker receives the feedback $u_t$ only when the user is selected ($x_t=1$). We shall make the usual structural assumption of the classical stochastic linear bandit model. The analysis relies on Chapters $19$ and $20$ of \cite{banditsalg}, and only the main ideas are given here. For simplicity we will assume that there are no public contexts ($\vert \mathcal{Z} \vert=1)$.

We assume that the contexts $c_{tk}$ are now feature vectors of dimension $q_k$, and that for all $k \in [K]$, there exists some vector $\psi_k \in \mathbb{R}^{q_k}$ so that 
\begin{equation}
    u_t - \langle \psi_k , c_{tk} \rangle,
\end{equation}
is a zero mean $1$-subgaussian random variable conditioned on the previous observation (see a more precise definition in \cref{app:learning_u}). 
We make classical boundedness assumptions, that for all $k$: $\Vert \psi_k \Vert_2 \leq \bar{\psi}$, that $\Vert c_{tk} \Vert_2 \leq \bar{c}$, and that $\langle \psi_k , c_{tk} \rangle \leq 1$ for all possible values of $c_{tk}$.

Intuitively, we aim at running a stochastic linear bandit algorithm on each of the different sources for $T_k$ steps, where $T_k$ is the number of times that source $k$ is selected, but this breaks because of dependencies among the $k$ sources. Hence, what we do is to slightly change the rewards and contextual actions, so that we do something akin to artificially running $K$ bandits in parallel for $T$ steps. We force a reward and action of $0$ for source $k$ whenever it is not selected, and consider that each source is run for $T$ steps (even if it is actually selected less than $T$ times). Thus we can directly leverage and modify the existing analysis for the stochastic linear bandit. The full algorithm is given in \cref{app:learning_u}. 

\begin{proposition} \label{prop:learning_u}
    Given the above assumptions, the \emph{added} regret of having to learn $\esp{u_t \mid c_{tk}}$ is of order $\mathcal{O}(\sqrt{KT}\log(T))$.  
\end{proposition}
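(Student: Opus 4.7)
The plan is to mimic the optimistic treatment of stochastic linear bandits (OFUL/LinUCB, Chapters 19--20 of \cite{banditsalg}) inside the framework of \cref{alg:falcons}, using the ``$K$ parallel bandits run on $T$ steps'' embedding suggested in the text. For each source $k$, I would maintain a regularized least-squares estimator $\hat{\psi}_{tk}=V_{tk}^{-1}\sum_{s<t}\mathbbm{1}[k_s=k]\,c_{sk}\,u_s$ with Gram matrix $V_{tk}=\mathbb{I}+\sum_{s<t}\mathbbm{1}[k_s=k]\,c_{sk}c_{sk}^\top$, and replace every occurrence of $\mathbb{E}[u_t \mid c_{tk}]=\langle \psi_k, c_{tk}\rangle$ in \cref{alg:falcons} by the upper confidence bound $\overline{u}_{tk}(c)=\langle \hat{\psi}_{tk}, c\rangle + \beta_t\,\|c\|_{V_{tk}^{-1}}$, both in the allocation rule \eqref{eq:algo_xt} and in the virtual reward \eqref{eq:virtual_value}. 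The radius $\beta_t=\Theta(\sqrt{\log T})$ is the standard OFUL choice so that, with probability at least $1-1/T$, $\langle \psi_k, c\rangle \leq \overline{u}_{tk}(c) \leq \langle \psi_k, c\rangle + 2\beta_t\,\|c\|_{V_{tk}^{-1}}$ uniformly in $t,k,c$, with the dimensions $q_k$ and the boundedness constants $\bar{\psi},\bar{c}$ absorbed into the $\mathcal{O}$.

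First I would check that this optimistic substitution leaves the structural ingredients of \cref{thm:regret_bound} intact. Clipping $\overline{u}_{tk}$ at a constant slightly above $\bar{u}$ keeps the virtual rewards bounded, so \cref{lemma:bounded_lambda} and the EXP3 step go through unchanged, and the dual-descent side of the proof is untouched. On the good event, optimism guarantees that the dual upper bound on $\OPT$ used inside the proof of \cref{thm:regret_bound} still dominates when virtual rewards are computed with $\overline{u}_{tk}$, while the algorithm's realized $\ALG$ differs from its ``known-$u$'' counterpart by at most $|\overline{u}_{tk_t}(c_{tk_t}) - \langle \psi_{k_t}, c_{tk_t}\rangle|\le 2\beta_t\,\|c_{tk_t}\|_{V_{tk_t}^{-1}}$ per round, a term that appears both through the allocation decision and through the bandit reward estimate.

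Next I would invoke the parallel-bandits embedding to bound $\sum_t \|c_{tk_t}\|_{V_{tk_t}^{-1}}$. Viewing the observations of source $k$ as those of a linear bandit that is forced to play the zero action (and receives zero reward) whenever $k_t\neq k$, the elliptical potential lemma yields $\sum_{t=1}^T \mathbbm{1}[k_t=k]\,\|c_{tk}\|_{V_{tk}^{-1}}^2 = \mathcal{O}(\log T)$ for every $k$, regardless of the (history-dependent) source-selection policy $\pi_t$. Letting $T_k=\sum_t \mathbbm{1}[k_t=k]$ and applying Cauchy--Schwarz and then the concavity of the square root gives
\begin{equation*}
\sum_{t=1}^T \|c_{tk_t}\|_{V_{tk_t}^{-1}} \;=\; \sum_{k=1}^K \sum_{t: k_t=k} \|c_{tk}\|_{V_{tk}^{-1}} \;=\; \mathcal{O}\!\left(\sum_{k=1}^K \sqrt{T_k \log T}\right) \;=\; \mathcal{O}\!\left(\sqrt{KT \log T}\right),
\end{equation*}
so the total added regret is $\mathcal{O}(\beta_T\sqrt{KT\log T})=\mathcal{O}(\sqrt{KT}\log T)$, as claimed; the failure event of the confidence bounds adds only an $\mathcal{O}(1)$ contribution.

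The main obstacle I anticipate is the first step: cleanly threading the optimistic substitution through \emph{both} the primal allocation \eqref{eq:algo_xt} and the EXP3 virtual reward \eqref{eq:virtual_value} without breaking the primal--dual coupling between $\lambda_t$ and $\pi_t$ that underpins \cref{thm:regret_bound}. Concretely, one has to verify that the dual upper bound on $\OPT$, obtained via Sion's minimax theorem and the convex conjugate $R^*$, continues to dominate when virtual rewards use $\overline{u}_{tk}$ instead of $\langle \psi_k, c_{tk}\rangle$; this follows from optimism, but requires some care in tracking the sign of the term $\langle \lambda_t, \mathbb{E}[a_t\mid c_{tk}]\rangle$ that sits next to the reward estimate, and possibly re-verifying the constant $m$ in \cref{thm:regret_bound} after clipping. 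Once that link is in place, the elliptical potential calculation and the Cauchy--Schwarz/concavity step are entirely standard, and the additional regret adds cleanly to the bound of \cref{thm:regret_bound}.
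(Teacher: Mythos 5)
Your proposal follows essentially the same route as the paper's proof: OFUL-style confidence ellipsoids per source, the embedding into $K$ bandits run artificially in parallel (zero action and zero reward when a source is not selected), a per-round decomposition into a nonnegative optimism term plus a confidence-width error, and the elliptical potential lemma combined with Cauchy--Schwarz and Jensen over the counts $T_k$, with $\delta \approx 1/T$ for the failure event, giving $\mathcal{O}(\sqrt{KT}\log T)$. The only cosmetic difference is that the paper implements optimism by maximizing $\langle \psi, c_{tk}\rangle$ over the ellipsoid $\mathcal{I}_{tk}$ and modifies only the allocation step (comparing against the allocation $x_t^*$ under the true parameter), so the EXP3/dual machinery requires no re-verification; otherwise the arguments coincide.
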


If the decision maker knows the optimal combination of sources, she can simply select sources proportionally to this combination, and actually learn the conditional expectation independently for each source. The worst case is to have to pull each arm $T/K$ times, which then incurs an additional regret with the same $\sqrt{K}$ constant. This shows that this bound is actually not too wasteful, as the cost of artificially running these bandits algorithm in parallel only impacts the logarithmic term.

\section{Conclusion}

We have shown how the problem of optimal data purchase for fair allocations can be tackled, using techniques from online convex optimization and bandits algorithms. We proposed a computationally efficient algorithm yielding a $\mathcal{O}(\sqrt{T})$ regret algorithm in the most general case. Interestingly, because of the non separable penalty $R$, the benchmark is different from a bandit algorithm, as randomization can strictly improve the performance for some instances, even though the setting is stochastic. 
We have also presented different types of fairness penalties that we can additionally tackle compared to previous works, in particular in the full information setting, and some instances where assumptions on the decision maker's knowledge can be relaxed. 

Throughout the paper (even in \cref{sec:learning} where we relax the assumption that $\esp{u_t \mid c_{tk}}$ is known), we assumed that the decision-maker knows $\esp{a_t \mid c_{tk}}$. This assumption is reasonable as this is related to demographic data and not to a utility that may be specific to the decision maker. Yet, this assumption can also be relaxed in specific cases. As an example, suppose that the decision maker can pay different prices to receive more or less noisy versions of $a_t$ that correspond to the data sources (the pricing could be related to different levels of Local Differential Privacy for the users, see \cite{dworkDP}). Then, under some assumptions, $\esp{a_t \mid c_{tk}}$ can also be learned online---we defer the details to  \cref{app:learning_a}. 

The works of \citet{fair_online_allocation} and \citet{fair_parity_ray} can include hard constraints on some budget consumption when making the allocation decision $x_t$, which we did not include in our work. If this budget cost is measurable with respect to $c_{tk}$, then our analysis does not preclude using similar stopping time arguments as was done in these two works. However if this budget consumption is completely random, our algorithm and analysis can not be directly applied as this budget consumption may give additional information on the $a_t$ that was just observed. Regarding the \ac{iid} assumption,
in \citet{fair_online_allocation} adversarial inputs are also considered, and the same could be done here for the contexts $c_{t,k}$ with similar results. However some stochasticity is still needed so that $\mathbb{E}[u_t \mid c_{t,k}]$ is well defined, which leads to an ambiguous stochastic-adversarial model. An open question would be to consider an intermediate case where $\mathbb{E}[u_t \mid c_{t,k}]$ would depend on $t$, and take into account learning for non-stationary distributions.

\section*{Acknowledgements}

This work has been partially supported by MIAI @ Grenoble Alpes (ANR-19-P3IA-0003), by the French National Research Agency (ANR) through grant ANR-20-CE23-0007,  ANR-19-CE23-0015, and ANR-19-CE23-0026, and from the grant “Investissements d’Avenir” (LabEx Ecodec/ANR-11-LABX-0047).

\FloatBarrier

\newpage

\bibliography{references}
\bibliographystyle{icml2023}

\newpage
\appendix
\onecolumn

\section{Fairness Penalty} \label{app:fairness_pen}

In the main part of the paper, we have considered a penalty with a general form $R(\sum_{t}a_tx_t/T)$. Here we give examples of fairness notions that can be encoded through this general penalty.

\paragraph{Statistical parity}

Notions likes statistical parity are defined as constraints on the independence between protected attributes and selection decisions. More precisely, if $X$ denotes a treatment variable, and $A$ the protected attributes of an individual, we say that $X$ and $A$ are fair with respect to statistical parity if $X$ and $A$ are independent. 

If $X\in\mathcal{X}$ and $A\in\mathcal{A}$ can take a finite number of values, this can be equivalently rewritten using probability distribution: $X$ satisfies statistical parity if for all $(x,a)\in \mathcal{X} \times \mathcal{A}$:
\begin{equation}
    \label{eq:statistical_parity}
    \Pr(A=a, X=x)=\Pr(A=a)\Pr(X=x).
\end{equation}
To encode statistical parity in our framework, we consider that the $a_t$ are one-hot encoding of the protected attributes (\emph{i.e}, $a_t$ is a vector with $d=|\mathcal{A}|$ dimensions that is equal to $0$ everywhere except on the coordinate corresponding to the protected attribute of individual $t$, that equals $1$). For the decision variables $x_t\in\{0,1\}$, $\sum_tx_t$ is the number of individual selected and $\sum_t (a_t)_ix_t$ is the number of selected individual whose protected attributed is $i\in[d]$. Hence, \eqref{eq:statistical_parity} rewrites as: for all $i\in[d]$:

\begin{equation*}
    \frac{\sum_t (a_{t})_i x_t}{T}=\left(\frac{\sum_t (a_t)_i}{T} \right) \left(\frac{\sum_t x_t}{T}\right).
\end{equation*}
Recall that the variables $a_t$ are \emph{i.i.d.} and let us denote by $\alpha \in \mathcal{P}_d$ the probability distribution vector of the $a_t$. For large $T$, we have $\sum_t a_t/T\approx\alpha$. Which means that we will say that an allocation vector $\bx$ satisfies statistical parity if $\sum_t a_t x_t/T=\sum_t \alpha x_t /T$, which can be equivalently rewritten as \begin{align*}
    \frac{\sum_t a_t' x_t}{T}=0,
\end{align*}
with $a_t'=a_t-\alpha$.

\paragraph{Penalties $R$ corresponding to statistical parity} 

In practice, achieving strict statistical parity is very constraining, and most of the time some relaxation of this constraint is allowed, at the price of some penalty \citep{Ahani2021}. In our paper, we penalize the non-independence through the function $R$. There are different choices of penalty function that can be used:
\begin{itemize}
    \item A natural choice can be to penalize as a function of the distance between joint empirical distribution $a_t' x_t/T$ and $0$. For instance, one could use $R(a_t' x_t/T)=||a_t' x_t/T||^\beta$ for any given norm $||\cdot||$ and any parameter $\beta>0$.
    \item 
It is also possible to measure the non-independence with the empirical distribution of the $x_t$ conditioned on $a_t=a$, which should be equal regardless of the protected group $a \in \mathcal{A}$. In this case, the empirical distribution conditioned on the protected group $i\in[d]$ is $\sum_t a_{ti} x_t/\sum_t a_{ti} \approx \sum_t a_{ti} x_t /(\alpha_i T)$. Similarly we the previous case, $R$ can penalize how far is this vector from $0$. 
\end{itemize}
In the two cases, the fairness penalty can be written as $TR(\sum_t a_t' x_t/T)$, for some modified $a_t'$ that are not necessarily one-hot encoder. This explains why we choose the set $\mathcal{A}$ to be more general than the set of one-hot encoders.

If we want to target another specific proportion for each protected group, we can replace $\alpha$ by any other distribution $\nu$.   Note that another possibility is to instead use the empirical distribution of the $a_t$ conditioned on the $x_t$. This time the form of the vector is slightly different, it is $\sum_t a_t x_t/\sum_t x_t$; this is the type of penalty tackled and discussed in \cref{sec:fairness-penalty}.

\paragraph{Equality of opportunity}

Another popular fairness notion is the one of equality of opportunity \citep{Hardt:2016}, where the probability distribution is further conditioned on another finite random variable $y_t$ taking some discrete value in $\mathcal{Y}$ that indicates some fitness of the individual.  For $Y$ a random variable with an outcome $y \in \mathcal{Y}$ considered as a positive opportunity, we say that $X$, $A$ and $Y$ are fair with respect to equality of opportunity, if 
\begin{equation*}
    X \text{ and } A \text{ are independent given } [Y=y].
\end{equation*}

In order to adapt this notion in the online allocation setting, is sufficient to consider the cartesian product $\mathcal{A}'=\mathcal{A} \times \mathcal{Y}$ as an extended new set. Then if we would like to penalize with a metric related to equality of opportunity, we can for instance compose $R$ with the projection over the coordinates corresponding to $y_t=1$. 

Overall, this shows that the choice of $R$, the type of fairness vectors, and the choice of $\mathcal{A}$ are all crucial when designing a fairness penalty. 

\section{A More General Online Allocation Setting} \label{app:gen_alg}

In the main body of the paper, we choose to simplify the exposition and to restrict ourselves to $x_t\in\{0,1\}$ and to formulate utilities as scalar variables $u_t$. In the following, the regret bounds of \cref{alg:falcons} are obtained in a setting that is more general than the one previously introduced:
\begin{itemize}
    \item We now consider that the allocations decisions $x_t$ are in some set $\mathcal{X} \subset \mathbb{R}_+^n$. This makes it possible to tackle bipartite matching problems, by letting $\mathcal{X}=\{ x \in \mathbb{R}^n_+ \mid \sum_{i=1}^n x_i \leq 1\}$ be a matching polytope. 
    \item Instead of a random scalar $u_t$ being drawn, we suppose that a random function $f_t : \mathcal{X} \rightarrow \mathbb{R}$ is drawn. The functions $f_t$ are supposed \ac{usc} and are all bounded in absolute value by some $\bar{f}$. This could simply be a deterministic concave function of $u_t$ for instance. Remark that when we assume that the conditional expectation can be computed, the decision maker's required knowledge is more demanding, as she needs to be able to compute it for all $x \in \mathcal{X}$.
    \item Instead of a random vector, a continuous random function $a_t : \mathcal{X} \rightarrow \mathcal{A}$ can be drawn, with $\lVert a_t(x) \rVert_2$ uniformly bounded for all $a_t$ and $x$, and we define $\Delta=\conv(\mathcal{A})$. We suppose that $\lVert a_t(x) \rVert_2 \leq 1$, which can be done without loss of generality.
\end{itemize}
The setting of \cref{sec:assumptions} is recovered by using $f_t(x)=u_t x$, $a_t(x)=a_t x$ and $\mathcal{X}=\{0,1\}$.  For the more general setting, we need to redefine the virtual value with $f_t$ and $a_t$ as:
\begin{equation}
\virtualv(\lambda, c_{tk},k)= \max_{x \in \mathcal{X}} \mathbb{E}[f_t(x)  -  \langle \lambda_t  ,  a_t(x) \rangle \mid c_{tk}] - p_k \label{app:virtual_value}
\end{equation}
This virtual reward is well defined by \cref{lemma:up_sem}. We rewrite \cref{alg:falcons} using $f_t$ and this new virtual value to obtain \cref{alg:falconsgen}. There are two differences with \cref{alg:falcons}: First we use the argmax function in \eqref{eq:apx_algo_x} instead of the explicit form of \cref{eq:algo_xt}; Second we use the virtual value function \eqref{app:virtual_value} in \cref{eq:apx_algo_vv} instead of the virtual value function of \cref{eq:virtual_value}.

\begin{algorithm}[ht]
\caption{Online Fair Allocation with Source Selection --- General algorithm}
\begin{algorithmic} \label{alg:falconsgen}
\STATE {\bfseries Input:} Initial dual parameter $\lambda_0$, initial source-selection-distribution $\pi_0=(1/K,\dots,1/K)$, dual gradient descent step size $\eta$, EXP3 step size $\rho$, cumulative estimated rewards $S_0=0 \in \mathbb{R}^K$. 
    \FOR{$t \in [T]$}
    \STATE Draw a source $k_t \sim \pi_t$, where $(\pi_t)_k\propto \exp(\rho S_{(t-1)k})$ and observe $c_{tk_t}$.
    \STATE Compute the allocation for user $t$:
    \begin{equation}
        \label{eq:apx_algo_x}
        x_t = \argmax_{x \in \mathcal{X}}  \mathbb{E}[f_t(x) -  \langle \lambda_t , a_t(x) \rangle  \mid c_{tk_t}].
    \end{equation}
    \STATE Update the estimated rewards sum and sources distributions for all $k \in [K]$:
    \begin{align}
        \label{eq:apx_algo_vv}
        S_{tk}&=S_{t-1,k}+\virtualvestim(\lambda_t,c_{tk},k_t),
    \end{align}
    \STATE Compute the expected protected group allocation $\delta_t=\mathbb{E}[a_t(x_t) \mid c_{tk_t},x_t]$ and compute the dual protected groups allocation target and update the dual parameter:
    \begin{align*}
    \gamma_t &= \argmax_{\gamma \in \Delta_{\delta_t}}\{ \langle \lambda_t , \gamma \rangle - \extendedr(\gamma)\},\\
    \lambda_{t+1}&=\lambda_t- \eta(\gamma_t - \delta_t).
    \end{align*}
\ENDFOR
\end{algorithmic}
\end{algorithm}

\section{Reduction from auctions} \label{app:auctions}

In the rest of the paper, we assume that the decision variables are the values $x_t$. In this section, we explain how our setting can be adapted the the case of targeted advertisement, that are usually allocated through auction mechanisms. In the later, the decision variables are the bids but not the $x_t$: the variables $x_t$ are a consequence of the bids of the decision maker and of the bids of others.

\paragraph{Model}
The model that we consider is as follows: for user $t$, after choosing a source of information $k_t$, the decision maker observes $c_{tk_t}$ and decides to bid $b_t \in \mathcal{B}$. The allocation $x_t=1$ is given if her bid is higher than the maximum bid of the other bidders. If the bidder wins the auction, it then pays a price that depends on the auction format (\emph{e.g.}, $b_t$ for first price auctions and the second highest bid for second price auctions). Assuming that the bid of the others are not affected by the decision maker's strategy, the bids of the others are \emph{i.i.d.} (see \cite{Iyer_MF, balseiro2015, linear_bandits_auctions} for a discussion about the stationarity assumption). In this case, one falls back to our model with \emph{i.i.d.} users, except that the decision variables are the variables $b_t$ and we have $x_t(b_t)$.
We can apply our algorithm to this setting, by replacing the set of strategies $\mathcal{X}$ by the set of possible bids $\mathcal{B}$, which is also compact. For that, we need to consider a new utility function $b \mapsto u_t(b)x_t(b)$ which is a bounded \ac{usc} function, and the new protected attributes function $b \mapsto a_tx_t(b)$. If $\esp{a_tx_t(b) \mid c_{tk}}$ is continuous in $b$, this falls under the setting that we described in \cref{app:gen_alg}.

\paragraph{Examples of utilities: first-price and second price auctions} Let us denote by $v_t$ the value of the user $t$ for the decision maker, and by $d_t$ the highest bid of the others for this user. The decision maker will win the auction if $b_t \geq d_t$. Hence, the allocations will be $x_t(b_t)=\mathbbm{1}_{b_t>d_t}$. Moreover, the utility function will be: 
\begin{itemize}
    \item $f_t(b_t)=(v_t-b_t)x_t(b_t)$ for first price auctions.
    \item $f_t(b_t)=(v_t-d_t)x_t(b_t)$ for second price auctions.
\end{itemize}
These functions are u.s.c with respect to $b_t$. Moreover, under mild assumption that the distribution of the highest bid of the others $d_t$ has a density, the function $\esp{a_tx_t(b)|c_{tk_t}}$ is continuous. Hence, our algorithm can be applied. 

\paragraph{Difficulty: Knowledge is power} To apply the algorithm, the main difficulty is to be able to compute the value $b_t$ that maximizes \cref{app:virtual_value}. For that it is sufficient for the decision maker to have access to $\esp{f_t(b_t)|c_{tk_t}}$ and $\esp{a_tx_t(b_t)|c_{t,k_t}}$. This presupposes a lot of distributional knowledge from her part. In the case of full information with $a_t$ and $v_t$ directly given, \citet{fair_online_allocation,fair_parity_ray} were able to obtain an efficient reduction from online allocations to second price auctions, using the incentive compatible nature of this kind of auction. Similarly, \citet{linear_bandits_auctions} were able to derive efficient bidding strategies using limited knowledge for repeated auctions with budget constraints under some independence assumptions. Translating what is done in these two lines of work to our general model is difficult because we would need to rely on the assumption that $a_t$ and $d_t$ are independent conditioned on $c_{tk_t}$.
While we think that this assumption is not reasonable because different agents might have access to different information sources, and because as the $a_t$ is common to all bidders it is correlated with all the bids hence also with $d_t$, we explain in the remark below how assuming independence can lead to a simple algorithm for second price auctions.

Some remarks: 
\begin{itemize}
    \item For our model, the first price and second price auctions are equally difficult: The amount of knowledge needed to compute $b_t$ and $\pi_t$ is the same for both cases and first price auction is not more difficult than the second-price auction setting (contrary to what usually happens).
    \item During this whole reduction from auctions, we have not assumed that the $d_t$ (or $x_t$) are observed as a feedback (whether the bidder wins the auction or not), and thus they are not assumed to be observed in the benchmark either. This is because observing $d_t$ or $x_t$ could give us access to information on $a_t$ that are not contained in $c_{tk_t}$.
    \item If we do make the assumption that $(u_t,a_t)$ is independent of $d_t$ conditionally on all $c_{tk}$, then we do recover an easy algorithm for second price auctions, and we can allow for a benchmark taking into account the feedback $d_t$ (simply because now this feedback does not give any additional information for the estimation of $a_t$). Indeed, in that case the optimal bid is $b_t=\esp{v_t - \langle \lambda_t , a_t \rangle \mid c_{tk} } $. This follows from similar argument made in \cite{linear_bandits_auctions}, as for $b_t$ some $\mathcal{H}_t$ measurable bidding strategy we have:
\begin{align*}
\esp{v_t x_t - \langle \lambda_t , a_t \rangle x_t - d_t x_t \mid c_{tk_t} }&=\esp{ \esp{ v_t x_t - \langle \lambda_t , a_t \rangle x_t - d_t x_t  \mid \mathcal{H}_t,d_t} \mid c_{tk_t} } \\
& =\esp{\esp{v_t \mid c_{tk_t}} x_t - \langle \lambda_t , \esp{a_t \mid c_{tk_t}} \rangle x_t - d_t x_t \mid c_{tk} }\\
&=\esp{x_t(\esp{v_t \mid c_{tk_t}} - \langle \lambda_t , \esp{a_t \mid c_{tk_t}} \rangle  - d_t)  \mid c_{tk} }\\
&\leq \esp{\max(\esp{v_t \mid c_{tk_t}} - \langle \lambda_t , \esp{a_t \mid c_{tk_t}} \rangle  - d_t,0)  \mid c_{tk} },
\end{align*}
and this last inequality is reached for $b_t=\esp{v_t - \langle \lambda_t , a_t \rangle \mid c_{tk} } $.
\end{itemize}

\section{Technical \& Useful Lemmas}

In this section, we prove several lemmas that are useful for the proof of the main \cref{thm:regret_bound}.

\subsection{Preliminary lemmas}

We first provide some simple lemmas. For completeness, we provide proofs of all results, even if they are almost direct consequences of the definitions.

\begin{lemma} \label{lemma:up_sem}
The virtual value function $\virtualv$ is well defined by Equation \eqref{app:virtual_value}, as the $\argmax$ is non-empty and the maximum is reached. 
\end{lemma}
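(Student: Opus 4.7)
The plan is to express the objective as an upper semi-continuous function of $x$ on a compact domain and then invoke the Weierstrass extreme value theorem for u.s.c.\ functions.

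First, for every sample path $\omega$, I would show that the integrand $g_t(x) := f_t(x) - \langle \lambda , a_t(x) \rangle$ is u.s.c.\ on $\mathcal{X}$. Indeed $f_t$ is u.s.c.\ by the hypotheses of \cref{app:gen_alg}, and $x \mapsto \langle \lambda , a_t(x) \rangle$ is continuous because $a_t$ is a continuous random function and the inner product with a fixed vector is linear; since the sum of an u.s.c.\ function and a continuous function is u.s.c., $g_t$ is u.s.c. Moreover, $|f_t(x)| \leq \bar f$ and $\|a_t(x)\|_2 \leq 1$ give the uniform bound $|g_t(x)| \leq \bar f + \|\lambda\|_2$, which in particular provides an integrable envelope needed for the next step.

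Second, I would lift u.s.c.\ through the conditional expectation. Take an arbitrary sequence $x_n \to x_0$ in $\mathcal{X}$. Pathwise upper semi-continuity yields $\limsup_n g_t(x_n) \leq g_t(x_0)$ almost surely, and since $g_t(x_n)$ is dominated by the integrable constant $\bar f + \|\lambda\|_2$, the reverse Fatou lemma for conditional expectations gives
$$\limsup_{n \to \infty} \mathbb{E}[g_t(x_n) \mid c_{tk}] \ \leq\ \mathbb{E}\!\left[\,\limsup_{n \to \infty} g_t(x_n) \,\Big|\, c_{tk}\right] \ \leq\ \mathbb{E}[g_t(x_0) \mid c_{tk}].$$
Therefore $x \mapsto \mathbb{E}[g_t(x) \mid c_{tk}]$ is itself u.s.c.\ on $\mathcal{X}$.

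Finally, since $\mathcal{X}$ is compact (as in the instances considered in \cref{app:gen_alg}, e.g.\ $\mathcal{X}=\{0,1\}$ or the matching polytope), the Weierstrass theorem for u.s.c.\ functions on a compact set ensures that the supremum is attained on $\mathcal{X}$, so the $\argmax$ in \eqref{app:virtual_value} is non-empty and $\virtualv$ is well defined. The one delicate step is the transfer of upper semi-continuity through the conditional expectation, which is made possible by the uniform bound established in the first step; everything else is standard.
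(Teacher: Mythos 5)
Your proposal is correct and follows essentially the same route as the paper's proof: pathwise upper semi-continuity (using continuity of $a_t$ and u.s.c.\ of $f_t$), the reverse Fatou lemma for conditional expectations enabled by the uniform bound, and attainment of the maximum of an u.s.c.\ function on the compact set $\mathcal{X}$. The only cosmetic difference is that you bundle $f_t(x)-\langle\lambda,a_t(x)\rangle$ into one function before applying Fatou, while the paper handles the two terms separately, which changes nothing of substance.
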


\begin{proof}
We simply need to have that $\esp{f_t(x) \mid c_{tk}}$ and $- \langle \lambda, \esp{a_t(x) \mid c_{tk}} \rangle$ are \ac{usc} in $x$. The continuity of $a_t$ is used to guarantee that $- \langle \lambda , a_t(x) \rangle$ is \ac{usc} regardless of the values of $\lambda$. This is immediate from the definition of upper semi-continuity and the Reverse Fatou Lemma. We detail it here for $f_t$ for completeness. 
We say that $f_t$ is \ac{usc} if and only if 
\begin{equation*}
\forall x \in \mathcal{X} \mbox{ if there is a sequence } (x_n)_{n \in \mathbb{N}} \mbox{ so that } x_n \xrightarrow[n \rightarrow \infty]{} x, \ \mbox{then } \limsup_{n \rightarrow \infty} f_t(x_n) \leq f_t(x).
\end{equation*}
For $x \in \mathcal{X}$, let $(x_n)_{n \in \mathcal{N}}$ be a sequence so that $x_n \rightarrow x$. Let $P$ be a probability distribution over $f_t$ conditionally on $c_{tk}$. The function $\vert f_{t}(x_n) \vert \leq \bar{f}$, $f_{t}(x_n)$ is integrable, we can apply the Reverse Fatou Lemma for conditional expectation. If $f_{t}(x_n) $ is not positive, we can apply it to $f_{t}(x_n) + \bar{f}$ which is positive. Using the Reverse Fatou Lemma and the upper semi-continuity of $f_t$ we obtain:
\begin{equation*}
\limsup_{n \rightarrow \infty} \int f_t(x_n)dP \leq \int \limsup_{n \rightarrow \infty} f_t(x_n)dP \leq \int f_t(x) dP.
\end{equation*}
This means that $x \mapsto \mathbb{E}[f_t(x) \mid c_{tk}]$ is \ac{usc} over $\mathcal{X}$. 

Because $\mathcal{X}$ is compact, and because the maximum of an \ac{usc} function is reached over a compact set, the maximum is well defined. 
\end{proof}

\begin{lemma} \label{lemma:r_extension}
Let $\extendedr: \delta \mapsto \inf_{\delta' \in \Delta}\{R(\delta')+ L \Vert \delta - \delta' \Vert_2 \}$. Then $\extendedr=R$ over $\Delta$, $Lip(R)=Lip(\extendedr)$, and $\extendedr$ is convex.
\end{lemma}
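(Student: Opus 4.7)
The plan is to verify the three claims one at a time, in each case exploiting the definition $\extendedr(\delta) = \inf_{\delta' \in \Delta}\{R(\delta') + L\Vert \delta - \delta' \Vert_2\}$.

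First I would show $\extendedr = R$ on $\Delta$. For $\delta \in \Delta$, choosing $\delta' = \delta$ in the infimum gives $\extendedr(\delta) \le R(\delta)$. For the reverse, observe that since $R$ is $L$-Lipschitz on $\Delta$, for every $\delta' \in \Delta$ we have $R(\delta) \le R(\delta') + L\Vert \delta - \delta'\Vert_2$, so each element over which we take the infimum is at least $R(\delta)$, giving $\extendedr(\delta) \ge R(\delta)$.

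Next I would establish that $\extendedr$ is $L$-Lipschitz on $\extendedd$. For $\delta_1,\delta_2 \in \extendedd$ and any $\delta' \in \Delta$, the triangle inequality yields
\begin{equation*}
R(\delta') + L\Vert \delta_1 - \delta'\Vert_2 \le R(\delta') + L\Vert \delta_2 - \delta'\Vert_2 + L\Vert \delta_1 - \delta_2\Vert_2.
\end{equation*}
Taking the infimum over $\delta' \in \Delta$ on both sides gives $\extendedr(\delta_1) \le \extendedr(\delta_2) + L\Vert \delta_1 - \delta_2\Vert_2$, and swapping the roles gives the matching bound. Hence $\mathrm{Lip}(\extendedr) \le L$. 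Combined with the first step, any Lipschitz constant for $\extendedr$ is also one for $R = \extendedr\vert_\Delta$, so in fact $\mathrm{Lip}(\extendedr) = \mathrm{Lip}(R) = L$.

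Finally, for convexity, I would view $\extendedr$ as the infimal convolution of $R$ (extended to $\mathbb{R}^d$ by $+\infty$ outside $\Delta$) with the convex function $L\Vert \cdot \Vert_2$. Equivalently, the joint map $(\delta,\delta') \mapsto R(\delta') + L\Vert \delta - \delta'\Vert_2$ is convex on $\extendedd \times \Delta$ (sum of a convex function of $\delta'$ and a convex function of $\delta - \delta'$, the latter being convex in both arguments jointly), and partial minimization of a jointly convex function over a convex set yields a convex function of the remaining variable. Concretely, for $\delta_1,\delta_2 \in \extendedd$, $t \in [0,1]$, and any $\varepsilon > 0$, picking near-optimal $\delta'_1,\delta'_2 \in \Delta$ and testing the infimum at $t\delta'_1 + (1-t)\delta'_2 \in \Delta$ yields $\extendedr(t\delta_1 + (1-t)\delta_2) \le t\extendedr(\delta_1) + (1-t)\extendedr(\delta_2) + \varepsilon$; letting $\varepsilon \to 0$ gives convexity.

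None of the steps is a real obstacle: the main subtlety is simply remembering to invoke the Lipschitz property of $R$ on $\Delta$ to get the matching lower bound in the first step, and to use the jointly-convex viewpoint (rather than trying to manipulate the infimum directly) for convexity.
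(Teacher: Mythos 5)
Your proof is correct. It differs from the paper's mainly in how the first two claims are handled: the paper simply observes that $\extendedr$ is the explicit extension formula from the Kirszbraun/McShane theorem and cites that theorem for the equality $\extendedr=R$ on $\Delta$ and the preservation of the Lipschitz constant, whereas you verify both directly (taking $\delta'=\delta$ plus the $L$-Lipschitzness of $R$ for the equality, and the triangle inequality under the infimum for the Lipschitz bound), which makes the argument self-contained at essentially no extra cost. For convexity the two arguments are the same in substance — partial minimization of the jointly convex map $(\delta,\delta')\mapsto R(\delta')+L\Vert\delta-\delta'\Vert_2$ over the convex set $\Delta$ — except that the paper first notes, via lower semicontinuity of $R$ and compactness of $\Delta$, that the infimum is attained and then works with exact minimizers, while you work with $\varepsilon$-near minimizers, which avoids the attainment discussion altogether. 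Both routes are valid; yours trades a citation and an attainment argument for a few lines of elementary estimates.
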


\begin{proof}
The function $\extendedr$ is obtained through the explicit formula of the Kirszbraun theorem, which states that we can extend functions over Hilbert spaces with the same Lipschitz constant. Hence the equality over $\Delta$ and the same Lipschitz constant. 

It remains to prove the convexity. Note that because $R$ is proper, it is lower semi-continuous, $\Vert \delta - \cdot \Vert_2$ as well, and $\Delta$ is compact, hence the $\inf$ is reached and it is actually a minimum. Let $\delta_1,\delta_2$ in $\mathbb{R}^d$ and $\alpha \in [0,1]$, and let $u_1$ and $u_2$ be the minimums of the optimization problem for respectively $\delta_1$ and $\delta_2$. Let $u=\alpha u_1+ (1-\alpha) u_2$. We have $u \in \Delta$ because $\Delta$ is convex, and using that $R$ is convex and we have 
\begin{align*}
R(u)+L\Vert (\alpha \delta_1 +(1-\alpha) \delta_2) - u \Vert_2
&\leq \alpha ( R(u_1)+L \Vert \delta_1 -u_1 \Vert_2) + (1-\alpha)  ( R(u_2)+L \Vert \delta_2 -u_2 \Vert_2) \\
&= \alpha \extendedr(\delta_1)+ (1-\alpha) \extendedr(\delta_2),
\end{align*}
and taking the $\inf$ over $\Delta$ on the left hand-side we obtain the convexity of $\extendedr$.
\end{proof}

\subsection{Conditional expectation: notation and abuse of notations} \label{app:abuse} 

In the analysis of the algorithm, we will consider conditional expectation by fixing some of the random variables to specific values. To simplify notations, we will use the following notation: if $X$ and $Y$ are random variables and $g$ is a measurable function, then we denote $E_X[ g(X,Y) ]$ the expectation with respect to $X$, that is:
\begin{align*}
    E_X[ g(X,Y) ] = E[ g(X',Y) | Y],
\end{align*}
where $X'$ is a random variable independent of $Y$ that has the same law as $X$. This is equivalent to the random variable obtained by fixing $Y$ and taking the expectation with respect to the law of $X$ only.

In \cref{alg:falconsgen}, we have assumed that the decision maker is able to compute $\esp{f_t(x_t) \mid c_{tk_t}}$ and $\esp{a_t(x_t) \mid c_{tk_t}}$. While the law of $c_{tk_t}$ is complicated (because it involves all past decision up to time $t$), what the decision maker needs to compute is the value $\zeta(c_{tk_t},x,k_t)$, where $\zeta(c,x,k)=\esp{f_t(x) \mid c_{tk}=c}$. Those two quantities are equal because users are \emph{\ac{iid}} and $k_t$ and $f_t$ are independent (conditioned on $z_t$ and the history).

This manipulation is properly stated by \cref{lemma:cond_expect}. Indeed by denoting $\mathcal{H}_{t-1}$ the whole history up to time $t-1$, and applying this Lemma: for $x_t$ the decision generated by the algorithm which is $(\mathcal{H}_{t-1},k_t,c_{tk_t})$ measurable, we have that 
\begin{equation}
    \mathbb{E}[f_t(x_t) \mid c_{tk_t},\mathcal{H}_{t-1},k_t]=\zeta(c_{tk_t},x_t,k_t)=\mathbb{E}_{f_t}[f_t(x_t) \mid c_{tk_t}]. \label{eq:f_abuse}
\end{equation}
It is identical for $a_t(x_t)$: for $\zeta'(c,x,k)=\esp{a_t(x) \mid c_{tk}=c}$ we have the following equality 
\begin{equation}
\mathbb{E}[a_t(x_t) \mid c_{tk_t},\mathcal{H}_{t-1},k_t]=\zeta'(c_{tk_t},x_t,k_t)=\mathbb{E}_{a_t}[a_t(x_t) \mid c_{tk_t}]. \label{eq:a_abuse}
\end{equation}

The following Lemma is a modification of the so-called "Freezing Lemma" (see Lemma $8$ of \cite{cesari20}) which helps simplifying conditional expectations by allowing you to "fix" the random variable if it is measurable with respect to the conditioning of the expectation:
\begin{lemma} \label{lemma:cond_expect}
Let $X$,$Y$ and $Z$ be random variables with $(X,Z)$ independent of $Y$, and $g$ some bounded measurable function. Let us denote by $\sigma(X)$ the sigma-algebra generated by the random variable $X$.  Then
\begin{equation*}
\mathbb{E}[g(X,Y,Z) \mid \sigma(Y,Z)]= \mathbb{E}_{X}[g(X,Y,Z) \mid \sigma(Z)].
\end{equation*}
\end{lemma}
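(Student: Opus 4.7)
The plan is to reduce the statement to the familiar fact that, under $(X,Z) \perp Y$, the variable $Y$ carries no information about $X$ beyond what $Z$ already provides. First, I would unpack the right-hand side using the abuse of notation of \cref{app:abuse}: the expression $\mathbb{E}_X[g(X,Y,Z) \mid \sigma(Z)]$ is to be read as ``treat $Y$ as a fixed parameter and integrate $g(\cdot,Y,\cdot)$ against a regular conditional distribution $\mu(\cdot \mid z)$ of $X$ given $Z=z$.'' Hence the RHS is $h(Y,Z)$ with $h(y,z) := \int g(x,y,z)\,\mu(dx \mid z)$.

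The central step is then to argue that the same kernel $\mu$ also serves as a regular conditional distribution of $X$ given $(Y,Z)$; equivalently, that $\mathbb{P}(X \in \cdot \mid \sigma(Y,Z)) = \mathbb{P}(X \in \cdot \mid \sigma(Z))$ almost surely. I would verify this by testing on a $\pi$-system generating $\sigma(Y,Z)$, namely the rectangles $\{Y \in D\} \cap \{Z \in C\}$ with $C \in \sigma(Z)$ and $D \in \sigma(Y)$. For any measurable set $A$ in the codomain of $X$, the hypothesis $(X,Z) \perp Y$ lets me factor $\mathbb{E}[\mathbbm{1}_A(X)\mathbbm{1}_C(Z)\mathbbm{1}_D(Y)] = \mathbb{E}[\mathbbm{1}_A(X)\mathbbm{1}_C(Z)]\cdot \mathbb{P}(Y \in D)$, and by the tower property over $\sigma(Z)$ together with the same independence this rewrites as $\mathbb{E}[\mathbb{P}(X \in A \mid Z)\,\mathbbm{1}_C(Z)\mathbbm{1}_D(Y)]$. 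Hence $\mathbb{P}(X \in A \mid Y,Z) = \mathbb{P}(X \in A \mid Z)$ a.s.

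Plugging this identification of conditional laws into the LHS gives $\mathbb{E}[g(X,Y,Z) \mid Y,Z] = \int g(x,Y,Z)\,\mathbb{P}(X \in dx \mid Y,Z) = \int g(x,Y,Z)\,\mu(dx \mid Z) = h(Y,Z)$, which is exactly the RHS. The passage from indicators to a general bounded measurable $g$ is a routine monotone-class (or Dynkin $\pi$--$\lambda$) argument, first handled for product-form $g(x,y,z) = \phi(x)\psi(y)\chi(z)$ and then extended by linearity and bounded convergence. The only real obstacle is notational---namely, pinning down the correct interpretation of $\mathbb{E}_X[\,\cdot \mid \sigma(Z)]$; once that is settled, the conclusion is essentially a direct consequence of the independence hypothesis and requires no delicate estimates.
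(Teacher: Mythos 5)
Your proposal is correct, but it follows a genuinely different route from the paper. The paper never identifies conditional laws: it takes $V(Y,Z)=\mathbb{E}_X[g(X,Y,Z)\mid\sigma(Z)]$ and verifies directly the defining property of conditional expectation, namely $\mathbb{E}[VW]=\mathbb{E}[g(X,Y,Z)W]$ for every bounded $\sigma(Y,Z)$-measurable $W$, by writing $W=h(Y,Z)$ (Doob--Dynkin), factoring the joint law of $(X,Y,Z)$ as the product of the law of $(X,Z)$ and the law of $Y$ using the independence hypothesis, and applying Fubini twice, with the inner step using that for each frozen $y$ the variable $V(y,Z)$ is a conditional expectation given $\sigma(Z)$. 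You instead prove the conditional independence statement $\mathbb{P}(X\in A\mid Y,Z)=\mathbb{P}(X\in A\mid Z)$ a.s.\ by testing on the $\pi$-system of rectangles, then conclude via the disintegration formula $\mathbb{E}[g(X,Y,Z)\mid Y,Z]=\int g(x,Y,Z)\,\mu(dx\mid Z)$ and a monotone-class extension from product-form integrands. Both arguments are sound; the trade-off is that your route invokes regular conditional distributions (so it implicitly needs $X$ to take values in a standard Borel space, and an upgrade of the per-$A$ almost-sure identity to a kernel identity via a countable generating algebra --- harmless here since all variables live in Euclidean or finite spaces), whereas the paper's Fubini argument sidesteps regular conditional laws entirely; in exchange, your version isolates the conceptual content ($X$ is conditionally independent of $Y$ given $Z$) and delivers a slightly stronger, reusable statement about the conditional law itself rather than a single conditional expectation.
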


\begin{proof}
Let $V(Y,Z)=\mathbb{E}_{X}[g(X,Y,Z) \mid \sigma(Z)]$. To show that $W$ is the conditional expectation of $g(X,Y,Z)$ given $\sigma(Y,Z)$, we need to show that $V$ is $\sigma(Y,Z)$ measurable and that for any bounded $W$ that is $\sigma(Y,Z)$ measurable we have
\begin{equation*}
\mathbb{E}[VW]= \mathbb{E}[g(X,Y,Z)W].
\end{equation*}
We denote by $\mu$ the distribution of $(X,Z)$ and $\nu$ the distribution of $Y$. By the Doob-Dynkin Lemma, because $W$ is $\sigma(Y,Z)$ measurable, we have that $W=h(Y,Z)$ where $h$ is measurable. 
First, by independence of $Y$ and $(X,Z)$, we have that the distribution of $(X,Y,Z)$ is $d\nu \otimes d\mu$. Thus
\begin{equation*}
\mathbb{E}[g(X,Y,Z)W]=\int \int g(x,y,z)h(y,z) d\nu(x,z)d\nu(y).
\end{equation*}
Now let us compute the other expectation using Fubini theorem as both $g$ and $h$ are bounded:
\begin{align*}
\mathbb{E}[VW]&=\int \int V(y,z)h(y,z) d\mu(z) d\nu(y) \\
&= \int d\nu(y) \left ( \int V(y,z)h(y,z) d\mu(z)\right) \\
&= \int d\nu(y) \left( \mathbb{E}_Z[V(y,Z)h(y,Z)] \right).
\end{align*}
The random variable $h(y,Z)$ is $\sigma(Z)$ measurable, thus because $V(y,Z)$ is a conditional expectation given $\sigma(Z)$, we have 
\begin{equation*}
\mathbb{E}_Z[V(y,Z)h(y,Z)]= \mathbb{E}_{X,Z}[g(X,y,Z)h(y,Z)] = \int g(x,y,z)h(y,z) d\mu(x,z).
\end{equation*}
Using Fubini again we obtain 
\begin{align*}
\mathbb{E}[VW]&=\int d\nu(y) \left( \int g(x,y,z)h(y,z) d\mu(x,z) \right) \\
&= \int \int g(x,y,z)h(y,z) d\mu(x,z) d\nu(y) \\
&=\mathbb{E}[g(X,Y,Z)W],
\end{align*}
which proves that this is indeed the conditional expectation, as the lemma stated. 
\end{proof}

\subsection{Boundedness of $\lambda_t$ --- proof of \cref{lemma:bounded_lambda} }
\label{app:bounded_lambda}

We next show the proof of the boundedness of the $\lambda_t$ for the sequence generated by our \cref{alg:falcons}. 

\begin{lemma*} \label{app:th:bounded_lambda}
Let $\lambda_0 \in \mathbb{R}^d$, $\eta >0$, and an arbitrary sequences $(\delta_1,\delta_2,\dots) \in \Delta^{\mathbb{N}}$. We define recursively $\lambda_t$ as follows:
\begin{align*}
&\gamma_t= \argmax_{\gamma \in \Delta_{\delta_t}} \{ \langle \lambda_t , \gamma \rangle - \extendedr(\gamma) \}, \\
&\lambda_{t+1}=\lambda_t+\eta (\delta_t - \gamma_t).
\end{align*}
Then for $\Lambda= \cup_{\delta \in \extendedd} \partial \extendedr(\delta)$ we have the following upper bound:
\begin{equation*}
\Vert \lambda_t \Vert_2 \leq L+2\eta \diam(\Delta)+\dist(\lambda_0,\Lambda+B(0,2\eta \diam(\Delta))),
\end{equation*}
where $B(0,2\eta \diam(\Delta))$ is the ball centered in $0$ and of diameter $2\eta \diam(\Delta)$, with the diameter and the distance taken with respect to the euclidean $L_2$ norm.
\end{lemma*}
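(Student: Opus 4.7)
The plan is to set $S := \Lambda + B(0,2\eta \diam(\Delta))$ and to prove the potential $t\mapsto \dist(\lambda_t,S)$ is non-increasing. Since $\bar R$ is $L$-Lipschitz every element of $\partial \bar R(\delta)$ has norm at most $L$, so $\Lambda\subseteq B(0,L)$ and therefore $S\subseteq B(0,L+2\eta \diam(\Delta))$. Once monotonicity is established, the triangle inequality gives $\|\lambda_t\|_2\le (L+2\eta\diam(\Delta))+\dist(\lambda_t,S)\le (L+2\eta\diam(\Delta))+\dist(\lambda_0,S)$, which is the claimed bound.

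To prove monotonicity I would invoke the KKT conditions of the inner problem $\gamma_t=\argmax_{\gamma\in\Delta_{\delta_t}}\{\langle\lambda_t,\gamma\rangle-\bar R(\gamma)\}$. Because $\Delta_{\delta_t}$ is a Euclidean ball of radius $\diam(\Delta)$ centered at $\delta_t$, the normal cone at $\gamma_t$ is either $\{0\}$ (interior) or the ray $\mathbb{R}_+(\gamma_t-\delta_t)$ (boundary). In the interior case, KKT gives $\lambda_t\in\partial\bar R(\gamma_t)\subseteq\Lambda\subseteq S$; combined with the elementary bound $\|\lambda_{t+1}-\lambda_t\|=\eta\|\gamma_t-\delta_t\|\le \eta\diam(\Delta)$, this yields $\lambda_{t+1}\in\Lambda+B(0,\eta\diam(\Delta))\subseteq S$.

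In the boundary case I decompose $\lambda_t=g_t+\mu_t v_t$ with $g_t\in\partial\bar R(\gamma_t)\subseteq\Lambda$, $\mu_t\ge0$ and $v_t=\gamma_t-\delta_t$ of norm $\diam(\Delta)$. The update then reads $\lambda_{t+1}=g_t+(\mu_t-\eta)v_t$, so $\lambda_{t+1}$ is collinear with $g_t$ and $\lambda_t$. I would split on $\mu_t$: if $\mu_t\le 2\eta$, a direct calculation shows $\|\lambda_{t+1}-g_t\|=|\mu_t-\eta|\diam(\Delta)\le 2\eta\diam(\Delta)$, so $\lambda_{t+1}\in g_t+B(0,2\eta\diam(\Delta))\subseteq S$. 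If instead $\mu_t>2\eta$, then $\lambda_{t+1}=\alpha g_t+(1-\alpha)\lambda_t$ is a strict convex combination with $\alpha=\eta/\mu_t\in(0,1)$, and I would use convexity of $S$ (which follows from convexity of $\Lambda$, identified with $\mathrm{dom}(\partial\bar R^*)$ for the Lipschitz convex function $\bar R$) together with convexity of the distance function to a convex set to conclude $\dist(\lambda_{t+1},S)\le\alpha\cdot 0+(1-\alpha)\dist(\lambda_t,S)\le\dist(\lambda_t,S)$.

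The main obstacle is precisely this last sub-case $\mu_t>2\eta$: here $g_t$ need not be the nearest point of $\Lambda$ to $\lambda_t$, so a naive triangle-inequality bound on $\|\lambda_{t+1}-g_t\|$ does not by itself control $\dist(\lambda_{t+1},S)$. The remedy is to exploit the collinearity yielded by KKT, combined with convexity of $S$. The "security" radius $2\eta\diam(\Delta)$ is calibrated so that the two easy sub-cases ($\gamma_t$ interior, or $\mu_t\le 2\eta$) keep $\lambda_{t+1}$ inside $S$ even without invoking convexity, while the hard sub-case is absorbed by the segment argument. A straightforward induction on $t$ then delivers $\dist(\lambda_t,S)\le\dist(\lambda_0,S)$, and the stated norm bound follows.
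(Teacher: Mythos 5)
Your overall strategy is the same as the paper's: take the fattened set of subgradients as a potential, use the KKT conditions of the ball-constrained maximization to write $\lambda_t=g_t+\mu_t(\gamma_t-\delta_t)$ with $g_t\in\partial\extendedr(\gamma_t)$, observe that the update moves $\lambda_t$ along this ray, and conclude by a case split that the distance to the fattened set is non-increasing. The case analysis (interior, small multiplier, large multiplier) matches the paper's three cases, and your convex-combination/distance argument in the last case is equivalent to the paper's projection argument.

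There is, however, one genuine gap: you assert that $S=\Lambda+B(0,2\eta\diam(\Delta))$ is convex because $\Lambda=\cup_{\delta\in\extendedd}\partial\extendedr(\delta)$ ``is convex, identified with $\mathrm{dom}(\partial\extendedr^*)$''. The identification of the range of $\partial\extendedr$ with the domain of $\partial\extendedr^*$ is correct, but neither set is convex in general: the range (and the domain) of the subdifferential map of a convex function is only \emph{nearly} convex, and restricting the union to $\delta\in\extendedd$ does not help. Since convexity of $S$ is exactly what your hard sub-case $\mu_t>2\eta$ relies on (convexity of the distance function to $S$), the argument as written does not close. The repair is the one the paper uses: define the potential with $\conv(\Lambda)+B(0,2\eta\diam(\Delta))$ instead of $\Lambda+B(0,2\eta\diam(\Delta))$. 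Nothing else in your proof changes ($g_t$ still lies in $\conv(\Lambda)$, the set is still contained in $B(0,L+2\eta\diam(\Delta))$ by Lipschitzness, and the easy sub-cases still land inside it), and the stated bound is preserved because the distance to the larger set $\conv(\Lambda)+B(0,2\eta\diam(\Delta))$ is at most $\dist(\lambda_0,\Lambda+B(0,2\eta\diam(\Delta)))$.
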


\begin{proof}
Recall that $\extendedd=\cup_{\delta\in\Delta}\Delta_{\delta}$, with $\Delta_{\delta}=\{\gamma: \norm{\gamma-\delta}\le\diam(\Delta)\}$. Hence, for any $\delta \in \Delta$ and $\gamma \in \extendedd$, we have $\Vert \gamma -\delta \Vert_2\le 2 \diam(\Delta)$. Let $\Lambda= \cup_{\delta \in \extendedd} \partial \extendedr(\delta)$, and $\mathcal{C}=\conv(\Lambda)+B(0,\eta 2 \diam(\Delta))$.  Using that $\extendedr$ is L-Lipschitz, we have that $\Lambda$ is bounded by $L$ for the norm $\Vert \cdot \Vert_2$, and so does $\conv(\Lambda)$. Thus $\mathcal{C}$ is bounded by $L+2 \eta  \diam(\Delta) $ for $\Vert \cdot \Vert_2$.  This implies that, to prove the theorem, it suffices to show that  $\dist(\lambda_{t+1}, \mathcal{C}) \leq \dist(\lambda_{t}, \mathcal{C})$ is true for all $t$.  Indeed, this would imply that
\begin{align*}
    \norm{\lambda_t}&\le \sup_{c\in\mathcal{C}}\norm{c} + \dist(\lambda_t,\mathcal{C}) \le L + 2\eta\diam{\Delta}+\dist(\lambda_0,\mathcal{C}).
\end{align*}

In the remainder of the proof, we show that $\dist(\lambda_{t+1}, \mathcal{C}) \leq \dist(\lambda_{t}, \mathcal{C})$. First, let us remark that $\gamma_t$ is the solution of a convex optimization problem, wit a constraint set $\Delta_{\delta_t}$ that can be rewritten as a single inequality : $h_t(\gamma)\leq 0$, with $h_t(\gamma)=\Vert \gamma - \delta_t \Vert_2 ^2 - \diam(\Delta)^2$.  The derivative of this function $h_t()$ is$\nabla h_t(\gamma)= 2(\gamma-\delta_t)$.  As the point $\delta_t$ is in the interior (assuming that $\diam(\Delta)>0$) of $\Delta_{\delta_t}$, Slater's conditions hold, and we can apply the KKT conditions at the optimal point ($\gamma_t$). This implies that there exists $\mu\ge0$ such that $0 \in  - \lambda_t + \partial \extendedr(\gamma_t) + \mu 2(\gamma_t-\delta_t)$. This implies that there exists $\lambda_{\delta_t} \in \partial \extendedr(\gamma_t)$ such that
\begin{align*}
    \lambda_{\delta_t}-\lambda_t = 2 \mu (\delta_t - \gamma_t).
\end{align*}
The norm of the left-hand side of the above equation must be equal to the norm of its right-hand side, which means that $\norm{\lambda_{\delta_t}-\lambda_t} = 2 \mu \norm{\delta_t - \gamma_t}$. This determines the value of $\mu$. Let $\alpha_t =\eta \Vert \delta_t - \gamma_t \Vert_2/\Vert \lambda_{\delta_t}-\lambda_t \Vert_2$, the value $\lambda_{t+1}$ is therefore equal to:
\begin{equation*}
    \lambda_{t+1}=\lambda_t+\eta (\delta_t - \gamma_t)=\lambda_t+\alpha_t (\lambda_{\delta_t}- \lambda_t)=(1- \alpha_t) \lambda_t + \alpha_t \lambda_{\delta_t}. 
\end{equation*}
We now consider multiple cases:

\emph{Case 1}. If $\alpha_t>1$, then we rewrite $\lambda_{t+1}$ as:
\begin{align*}
\lambda_{t+1}&= \lambda_{\delta_t}+ (1- \alpha_t) (\lambda_t - \lambda_{\delta_t}) \\
&=  \lambda_{\delta_t}+ \eta( \Vert \delta_t - \gamma_t \Vert_2 - \Vert \lambda_{\delta_t}-\lambda_t \Vert_2  ) \frac{\lambda_t - \lambda_{\delta_t}}{\Vert \lambda_t  -\lambda_{\delta_t}\Vert_2}.
\end{align*} 
By assumption, the $\lambda_{\delta_t} \in \partial \extendedr(\gamma_t)\subset\Lambda$. Moreover, the norm of the second  term is bounded by $2 \eta  \diam(\Delta)$ because
\begin{equation*}    
\vert \Vert \delta_t - \gamma_t \Vert_2 - \Vert \lambda_{\delta_t}-\lambda_t \Vert_2 \vert = \Vert \delta_t - \gamma_t \Vert_2 - \Vert \lambda_{\delta_t}-\lambda_t \Vert_2 \leq 2 \diam(\Delta).
\end{equation*}
This implies that $\lambda_{t+1} \in \conv(\Lambda)+B(0, 2\eta  \diam(\Delta)) =\mathcal{C}$, and therefore that $\dist(\lambda_{t+1},\mathcal{C})=0\le\dist(\lambda_t,\mathcal{C})$.

\emph{Case 2}. If $\alpha_t\le 1$ and $\lambda_{t}\in\mathcal{C}$. In this case, $\lambda_{t+1}$ is a convex combination of $\lambda_{\delta_t}$ and $\lambda_t$. As, $\lambda_{\delta_t} \in \Lambda \subset \mathcal{C}$ and $\lambda_t \in \mathcal{C}$, and because $\mathcal{C}$ is convex, we have that $\lambda_{t+1} \in \mathcal{C}$.

\textit{Case 3}. If $\alpha_t\le 1$ and $\lambda_{t}\not\in\mathcal{C}$, we define $\pi_t=\lambda_t$ the projection of $\lambda_t$ on $\mathcal{C}$. Let $y_t=(1-\alpha_t) \pi_t + \alpha_t \lambda_{\delta_t}$. Because $\mathcal{C}$ is convex, $\alpha_t \in [0,1]$, $\lambda_{\delta_t} \in \mathcal{C}$, and $\pi_t \in \mathcal{C}$, we have $y_t \in \mathcal{C}$. This implies that
\begin{align*}  
\dist(\lambda_{t+1}, \mathcal{C}) & \leq \Vert \lambda_{t+1} - y_t \Vert_2 \\
&= (1-\alpha_t) \Vert \lambda_t - \pi_t \Vert_2  \\
&=  (1-\alpha_t) \dist(\lambda_t,\mathcal{C}) \\
& \leq  \dist(\lambda_t,\mathcal{C}).
\end{align*}
This shows that for all cases, $\dist(\lambda_{t+1}, \mathcal{C}) \leq \dist(\lambda_t,\mathcal{C})$, which concludes the proof.
\end{proof}

Using this previous Lemma, we can simply bound the virtual reward functions. 

\begin{lemma} \label{app:boundedv} For $\lambda_0 \in \partial R(0)$, and for $\lambda_t$ generated according to \cref{alg:falcons}, the virtual reward function 
\begin{equation*}
\virtualv(\lambda_t,c_{tk},k)=\max_{x \in \mathcal{X}}  \esp{f_t(x) - \langle \lambda_t,a_t(x) \rangle \mid c_{tk}} - p_k,
\end{equation*}
is bounded in absolute value by 
\begin{equation*}
m \vcentcolon = \bar{f}+L+2\eta \diam(\Delta)+\max_k \vert p_k \vert.
\end{equation*}
\end{lemma}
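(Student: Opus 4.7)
The plan is a direct chain of bounds using the hypotheses on $f_t$, $a_t$, the Cauchy–Schwarz inequality, and the previous \cref{app:th:bounded_lambda} to control $\|\lambda_t\|_2$. No delicate argument is needed; the only thing to verify carefully is that the assumption $\lambda_0 \in \partial R(0)$ makes the distance term appearing in \cref{app:th:bounded_lambda} vanish.

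First I would argue that $\lambda_0 \in \partial R(0) \subset \Lambda$, so that $\dist(\lambda_0, \Lambda + B(0, 2\eta\diam(\Delta))) = 0$. For this we use that $0 \in \Delta$ (since $\Delta = \conv(\mathcal{A} \cup \{0\})$) and that $\extendedr$ coincides with $R$ on $\Delta$ by \cref{lemma:r_extension}, so $\partial R(0) \subset \partial \extendedr(0) \subset \Lambda$. Applying \cref{app:th:bounded_lambda} then yields $\|\lambda_t\|_2 \leq L + 2\eta\diam(\Delta)$ for all $t$.

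Next, for any $x \in \mathcal{X}$ and any realization of $a_t(x)$, the Cauchy–Schwarz inequality combined with the assumption $\|a_t(x)\|_2 \leq 1$ gives
\begin{equation*}
    |\langle \lambda_t, a_t(x) \rangle| \leq \|\lambda_t\|_2 \|a_t(x)\|_2 \leq L + 2\eta \diam(\Delta).
\end{equation*}
Combined with $|f_t(x)| \leq \bar{f}$, we obtain the pointwise bound $|f_t(x) - \langle \lambda_t, a_t(x) \rangle| \leq \bar{f} + L + 2\eta \diam(\Delta)$. Taking the conditional expectation given $c_{tk}$ and then the maximum over $x \in \mathcal{X}$ (which exists by \cref{lemma:up_sem}) preserves this absolute bound, so
\begin{equation*}
    \Big| \max_{x \in \mathcal{X}} \esp{f_t(x) - \langle \lambda_t, a_t(x) \rangle \mid c_{tk}} \Big| \leq \bar{f} + L + 2\eta \diam(\Delta).
\end{equation*}
Finally, subtracting $p_k$ and using the triangle inequality together with $|p_k| \leq \max_{k'} |p_{k'}|$ yields $|\virtualv(\lambda_t, c_{tk}, k)| \leq \bar{f} + L + 2\eta \diam(\Delta) + \max_k |p_k| = m$, which is the claimed bound.
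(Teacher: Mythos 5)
Your proof is correct and follows essentially the same route as the paper's: bound $\Vert \lambda_t \Vert_2$ by $L+2\eta\diam(\Delta)$ via the boundedness lemma (with the distance term killed by the choice $\lambda_0 \in \partial R(0)$), then combine Cauchy--Schwarz with $\Vert a_t(x)\Vert_2 \leq 1$, $|f_t(x)| \leq \bar{f}$, and $|p_k| \leq \max_{k'}|p_{k'}|$. The only cosmetic difference is that you bound the integrand pointwise before taking the conditional expectation and the maximum, whereas the paper splits the maximum by the triangle inequality first; the two are equivalent.
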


\begin{proof}
Because $a_t(x)$ used for the update of $\lambda_t$ is in $\Delta$ for all $t$, we can apply \cref{lemma:bounded_lambda} to bound the trajectory of the $(\lambda_t)_{t \in [T]}$ generated by the algorithm.
Using a triangle inequality first, then Cauchy-Schwartz inequality, and finally the boundedness assumptions on $f_t$ and $a_t$ with the Lemma, we have
\begin{align*}
\vert \max_{x \in \mathcal{X}}  \esp{f_t(x) - \langle \lambda_t,a_t(x) \rangle \mid c_{tk}} - p_k \vert & \leq \vert \max_{x \in \mathcal{X}} \esp{f_t(x) \mid c_{tk}}\vert + \vert \max_{x \in \mathcal{X}} \esp{\langle \lambda_t,a_t(x) \rangle \mid c_{tk}} \vert + \vert  p_k \vert \\
& \leq \vert \max_{x \in \mathcal{X}} \esp{f_t(x) \mid c_{tk}}\vert + \vert \max_{x \in \mathcal{X}} \esp{ \Vert \lambda_t \Vert_2 \Vert a_t(x) \Vert_2 \mid c_{tk}} \vert + \vert p_k \vert \\
& \leq \bar{f}+L+2\eta \diam(\Delta)  + \max_k \vert p_k \vert.
\end{align*} 
\end{proof}

\begin{lemma} \label{app:boundedv2} For $\lambda_0 \in \partial R(a)$ (for some $a \in \mathcal{A})$, and for $\lambda_t$ generated according to \cref{alg:falcons}, if the quantity $a_t(x)/x$ is always well defined and in $\Delta$ for all $a_t$ and all $x$, then the virtual reward function 
\begin{equation*}
\tilde{\varphi}(\lambda_t,c_{tk},k)=\max_{x \in \mathcal{X}}  \left(\esp{f_t(x) - \langle \lambda_t,a_t(x) \rangle \mid c_{tk}}+R^*(\lambda_t)\right) - p_k,
\end{equation*}
is bounded in absolute value by 
\begin{equation*}
\tilde{m} \vcentcolon = \bar{f}+L+2\eta \diam(\Delta)+\max \{R^*(\lambda) \mid \Vert \lambda \Vert_2 \leq L+2\eta \diam(\Delta)\} +\max_k \vert p_k \vert .
\end{equation*}
\end{lemma}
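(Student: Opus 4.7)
The plan mirrors the proof of \cref{app:boundedv}: first establish a norm bound on $\lambda_t$, then decompose $\tilde{\varphi}$ term by term with the triangle inequality.

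For the norm bound, I would re-run the argument of \cref{lemma:bounded_lambda} for the modified dual update $\lambda_{t+1} = \lambda_t - \eta x_t (\gamma_t - \tilde{\delta}_t)$ used in the perspective-penalty version of the algorithm. Two observations suffice. First, the hypothesis that $a_t(x)/x \in \Delta$ guarantees $\tilde{\delta}_t \in \Delta$, so the argmax defining $\gamma_t$ is taken over a ball $\Delta_{\tilde{\delta}_t}$ of radius $\diam(\Delta)$ centered in $\Delta$, exactly as in the original setting. Second, the multiplicative factor $x_t \in [0,1]$ only rescales the step: when $x_t = 0$ the iterate is frozen, and when $x_t > 0$ the effective step size is $\eta x_t \leq \eta$. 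Going through the KKT/projection case analysis of \cref{lemma:bounded_lambda} verbatim, with the same ``security ball'' $B(0, 2\eta \diam(\Delta))$, shows that the distance from $\lambda_t$ to $\mathcal{C} = \conv(\Lambda) + B(0, 2\eta \diam(\Delta))$ is non-increasing. The initialization $\lambda_0 \in \partial R(a) \subset \Lambda \subset \mathcal{C}$ starts inside $\mathcal{C}$, which is bounded in norm by $L + 2\eta \diam(\Delta)$, yielding $\norm{\lambda_t} \le L + 2\eta \diam(\Delta)$ for every $t$.

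With this trajectory bound in hand, I would apply the triangle inequality to the four pieces of $\tilde{\varphi}(\lambda_t, c_{tk}, k)$. The utility piece satisfies $|\mathbb{E}[f_t(x) \mid c_{tk}]| \le \bar{f}$; by Cauchy--Schwarz and $\norm{a_t(x)} \le 1$, the fairness-weighted piece satisfies $|\langle \lambda_t, \mathbb{E}[a_t(x) \mid c_{tk}]\rangle| \le \norm{\lambda_t} \le L + 2\eta \diam(\Delta)$; the conjugate term satisfies $|R^*(\lambda_t)| \le \max\{R^*(\lambda) : \norm{\lambda} \le L + 2\eta \diam(\Delta)\}$, which is finite because $R$ is $L$-Lipschitz on the compact set $\Delta$, so that $R^*(\lambda) = \sup_{\delta \in \Delta}\{\langle \lambda, \delta\rangle - R(\delta)\}$ is real-valued and continuous in $\lambda$; and the price term satisfies $|p_k| \le \max_k |p_k|$. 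Summing these four bounds (and noting that taking $\max_{x \in \mathcal{X}}$ preserves the uniform inequality) yields $|\tilde{\varphi}(\lambda_t, c_{tk}, k)| \le \tilde{m}$.

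The main obstacle is the first step: carefully transferring the KKT argument of \cref{lemma:bounded_lambda} to the scaled update $\eta x_t (\gamma_t - \tilde{\delta}_t)$, and verifying that $\lambda_0 \in \partial R(a)$ for some $a \in \mathcal{A}$ truly sits inside the enlarged subgradient set $\Lambda = \bigcup_{\delta \in \extendedd} \partial \extendedr(\delta)$ (which uses that $\extendedr = R$ on $\Delta$ and $\mathcal{A} \subset \Delta$ from \cref{lemma:r_extension}). Once these are in place, the rest is routine triangle-inequality bookkeeping, and the finiteness of the extra $R^*$ term comes essentially for free from the compact domain of $R$.
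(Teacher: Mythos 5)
Your proposal is correct and follows essentially the same route as the paper: reuse \cref{lemma:bounded_lambda} for the modified update by viewing it as a step of size $\eta x_t \le \eta$ towards a center $\tilde{\delta}_t \in \Delta$ (which is exactly where the hypothesis $a_t(x)/x \in \Delta$ is used), and then bound $\tilde{\varphi}$ term by term, with the extra $R^*(\lambda_t)$ term controlled by finiteness of $R^*$ on the compact ball $\{\Vert \lambda \Vert_2 \le L + 2\eta\diam(\Delta)\}$. The only cosmetic difference is that you justify the boundedness of $R^*$ directly from $R^*(\lambda)=\sup_{\delta\in\Delta}\{\langle\lambda,\delta\rangle - R(\delta)\}$ over the compact $\Delta$, whereas the paper cites the continuity of finite-valued convex functions on $\mathbb{R}^d$; both are equivalent.
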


\begin{proof}
The proof of this Lemma is identical to the previous one, except that we also need to bound $R^*$. As a convex conjugate, $R^*$ is convex. By Corollary 10.1.1 of \citet{Rockafellar}, every convex function from $\mathbb{R}^d$ to $\mathbb{R}$ is continuous. By Lemma $1$ of \cite{fair_parity_ray}, we know that the domain of $R^*$ is indeed $\mathbb{R}^d$, hence it is continuous. 

Now let us look at the modified dual update $\lambda_{t+1}=\lambda_t+ \eta (\gamma_t x_t -a_t(x_t))$. We can factorize by $x_t$ to recover $\lambda_{t+1}=\lambda_t+ x_t \eta (\gamma_t -a_t(x_t)/x_t)$. Because of our hypothesis, $a_t(x_t)/x_t \in \Delta$, therefore we can apply \cref{lemma:bounded_lambda} with $\eta_t=\eta x_t$, which is upper bounded by $\eta$. Hence the $\lambda_t$ are all in $\{ \lambda \in \mathbb{R}^d \mid \Vert \lambda \Vert_2 \leq L+2\eta \diam(\Delta) \}$ which is compact. By continuity of $R^*$, it is continuous over this compact set hence bounded, and the virtual reward is bounded by $\tilde{m}$ overall. 
\end{proof}

\begin{remark*}
We made the technical assumption that $a_t(x)/x \in \Delta$. It is clear that when $a_t$ is linear, such as $a_tx$, then $a_tx/x=a_t \in \mathcal{A} \subset \Delta$ and the condition is satisfied. 
Regardless, if $a_t(x)/x$ is always well defined and bounded, we can simply increase the size of $\Delta$, so that this quantity always remain in $\Delta$. Of course, this will degrade the performance of the algorithm as it depends on $\diam(\Delta)$.
\end{remark*}

\subsection{Expected difference between penalty and penalty of expectation --- Proof of \cref{lemma:expect_r_bound}}
\label{app:expected_r_bound}

We next prove the following lemma, which tells us that observing the conditional expectation of the $a_t(x_t)$ is enough to observe the expectation of the penalty $R$:
\begin{lemma*} \label{app:th:expected_r_bound}
For $(x_1,\dots,x_T)$ and $(k_1,\dots,k_T)$ generated according to \cref{alg:falcons}, we have the following upper bound:
\begin{equation*}
\vert \mathbb{E}[R(\frac{1}{T}\sum_{t=1}^T a_t x_t)-R(\frac{1}{T}\sum_{t=1}^T \mathbb{E}_{a_t}[a_t(x_t) \mid c_{tk_t}])]  \vert \leq 2L \sqrt{\frac{d}{T}}.
\end{equation*}
\end{lemma*}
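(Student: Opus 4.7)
The plan is to exploit the Lipschitz property of $R$ to reduce the claim to a deviation bound on the empirical average of the vectors $a_t x_t - \mathbb{E}_{a_t}[a_t(x_t)\mid c_{tk_t}]$, then to show that these vectors form a vector-valued martingale-difference sequence whose partial sums concentrate at rate $1/\sqrt{T}$.

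First I would apply $L$-Lipschitzness of $R$ and Jensen's inequality to get
\begin{equation*}
\Bigl|\mathbb{E}\bigl[R(\tfrac{1}{T}\sum_t a_t x_t) - R(\tfrac{1}{T}\sum_t \mathbb{E}_{a_t}[a_t(x_t)\mid c_{tk_t}])\bigr]\Bigr|
\;\le\; \frac{L}{T}\,\mathbb{E}\Bigl\|\sum_{t=1}^T D_t\Bigr\|_2,
\end{equation*}
where $D_t := a_t x_t - \mathbb{E}_{a_t}[a_t(x_t)\mid c_{tk_t}]$. So it suffices to bound $\mathbb{E}\|\sum_t D_t\|_2$ by $2\sqrt{dT}$.

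Next I would verify the martingale-difference structure. Let $\mathcal{H}_{t-1}$ denote the history up to the start of round $t$, and let $\mathcal{G}_t := \sigma(\mathcal{H}_{t-1}, k_t, c_{tk_t})$; note that the algorithm's allocation $x_t$ is $\mathcal{G}_t$-measurable. By the conditional-independence argument formalized in \cref{lemma:cond_expect} (applied as in \eqref{eq:a_abuse}), we have $\mathbb{E}[a_t x_t \mid \mathcal{G}_t] = \mathbb{E}_{a_t}[a_t(x_t)\mid c_{tk_t}]$, so $\mathbb{E}[D_t\mid \mathcal{G}_t]=0$ and in particular $\mathbb{E}[D_t\mid \mathcal{G}_{t-1}]=0$. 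The main technical obstacle is exactly this step: one must be careful that $x_t$ involves $a_t$ only through $c_{tk_t}$, which is where the "freezing" machinery pays off.

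Then I would pass to the $L^2$ norm via Jensen ($\mathbb{E}\|S\|_2 \le \sqrt{\mathbb{E}\|S\|_2^2}$) and expand coordinate-wise. For each coordinate $i\in\{1,\dots,d\}$ the scalar differences $(D_t)_i$ are martingale differences, hence orthogonal, giving $\mathbb{E}[(\sum_t (D_t)_i)^2] = \sum_t \mathbb{E}[(D_t)_i^2]$. Since $\|a_t\|_2\le 1$, $x_t\in\{0,1\}$, and conditional expectations contract the $L^2$ norm, each $|(D_t)_i|\le 2$, so $\mathbb{E}[(D_t)_i^2]\le 4$. Summing over $t$ and then over the $d$ coordinates yields $\mathbb{E}\|\sum_t D_t\|_2^2 \le 4 d T$, so $\mathbb{E}\|\sum_t D_t\|_2 \le 2\sqrt{dT}$. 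Plugging this into the Lipschitz bound gives the claimed $2L\sqrt{d/T}$.
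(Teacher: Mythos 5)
Your proposal is correct and follows essentially the same route as the paper's proof: Lipschitzness of $R$ plus Jensen to reduce to $\mathbb{E}\Vert\sum_t D_t\Vert_2$, the freezing-lemma argument (as in \eqref{eq:a_abuse}) to get the martingale-difference property of $D_t$ with respect to a filtration containing the past $(a_\tau,c_{\tau k_\tau},k_\tau)$ and the current $(k_t,c_{tk_t})$, and then coordinate-wise orthogonality with the bound $|(D_t)_i|\le 2$ to obtain $4dT$ and hence $2L\sqrt{d/T}$. No substantive differences from the paper's argument.
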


\begin{proof}
    Let us denote by $\delta_t=\mathbb{E}_{a_t}[a_t(x_t)\mid c_{tk_t}]$ the expectation of $a_t(x_t)$ with $x_t$ fixed. Using first the triangle inequality for the expectation, and then the Lipschitzness of $R$ we have that 
\begin{align}
    \vert \mathbb{E}[R(\frac{1}{T}\sum_{t=1}^T a_t(x_t))-R(\frac{1}{T}\sum_{t=1}^T \mathbb{E}_{a_t}[a_t(x_t) \mid c_{tk_t}])]  \vert  
    &\leq \mathbb{E}[\vert R(\frac{1}{T}\sum_{t=1}^T a_t(x_t))-R(\frac{1}{T}\sum_{t=1}^T \delta_t)\vert ] \nonumber \\
    &\leq \frac{L}{T} \mathbb{E}[ \Vert \sum_{t=1}^T(a_t(x_t) - \delta_t) \Vert_2 ] \nonumber\\
    &\leq \frac{L}{T} \sqrt{\mathbb{E}[ \Vert \sum_{t=1}^T(a_t(x_t) - \delta_t) \Vert_2^2] },
    \label{eq:lemma:expect_r_bound}
\end{align}
where we used Jensen's inequality with $x \mapsto \sqrt{x}$ for the last inequality.

We denote by $\delta_{t,l}$ the $l$-th coordinate of $\delta_t$ for $l \in [d]$, and similarly for $a_{t,l}$. We have:
\begin{align*}
    \mathbb{E}[ \Vert \sum_{t=1}^T(a_t(x_t) - \delta_t) \Vert_2^2] = \sum_{l=1}^d \mathbb{E}[ \left(\sum_{t=1}^T(a_{t,l}(x_t) - \delta_{t,l})\right)^2]
\end{align*}
By \cref{eq:a_abuse}, the sequence $a_{t,l}(x_t) - \delta_{t,l}$ is a Martingale difference sequence for the filtration $\sigma((a_{\tau},c_{\tau k_{\tau}},k_{\tau})_{\tau \in [t-1]},k_t,c_{tk_t})$. Hence, using that the expectation is $0$ by the law of total expectation, we obtain that 
\begin{equation}
\mathbb{E}\left[ \left(\sum_{t=1}^T(a_{t,l}(x_t) - \delta_{t,l})\right)^2\right]=\mathrm{Var}[\sum_{t=1}^T(a_{t,l}(x_t) - \delta_{t,l})] = \sum_{t=1}^T \mathrm{Var}[a_{t,l}(x_t) - \delta_{t,l}].
\end{equation}
The terms $a_{t,l}(x_t) - \delta_{t,l}$ are bounded in $[-2,2]$ by the assumption that $\Vert a_t(x) \Vert_2 \leq 1$, therefore we can bound each of these variances by $4$, for a total bound of $4T$.
Plugging this into \cref{eq:lemma:expect_r_bound} concludes the proof.
\end{proof}

\section{Proof of the main \cref{thm:regret_bound}} \label{app:regret_bound}

The proof builds upon the proof used in the full information case ($a_t$ and $f_t$ are given) from \citet{fair_online_allocation,fair_parity_ray}.

In order to bound the regret, we upper bound the performance of $\OPT$, and lower bound the performance of $\ALG$. We proceed by first upper-bounding the optimum. 

Recall that the convex conjugate $R^*$ of $R$ (which we consider as a function from $\mathbb{R}^d \rightarrow \mathbb{R}$ that is $+\infty$ outside of $\Delta$) is a function that associates to a dual parameter $\lambda$ the quantity $R^*(\lambda)$:
\begin{equation*}
R^*(\lambda)=\max_{\gamma \in \mathbb{R}^d} \{ \langle \gamma , \lambda \rangle - R(\gamma) \}=\max_{\gamma \in \Delta} \{ \langle \gamma , \lambda \rangle - R(\gamma) \}.
\end{equation*}
The Fenchel-Moreau theorem, tells us that for a proper convex function,  the biconjugate is equal to the original function: $R^{**}=R$.

Recall that $\virtualv$ is defined in \cref{eq:virtual_value}. We define the dual vector function $\dualut(\lambda)=(\dualut(\lambda,1),\dots,\dualut(\lambda,K))$ which is a vector representing the value of the dual conjugate problem. For coordinate $k \in [K]$, it is defined as
\begin{equation*}
\dualut(\lambda,k)=\mathbb{E}[ \virtualv(\lambda,c_{tk},k)] +R^*(\lambda).
\end{equation*}

We denote the unit simplex of dimension $K$ by $\simpk=\{\pi \in \mathbb{R}^K \mid \pi_k\ge 0, \sum_{k=1}^K \pi_k=1 $\}.

\begin{lemma} \label{lemma:upper_bound}
We have the following upper-bound for the offline optimum:
\begin{equation*}
\OPT\leq T \sup_{\pi \in \simpk} \inf_{\lambda \in \mathbb{R}^d} \langle \pi , \dualut(\lambda) \rangle,
\end{equation*}
\end{lemma}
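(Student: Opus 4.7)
The plan is to use Fenchel duality to replace the convex penalty $R$ by a linear function of $\sum_t a_tx_t$, then to commute the resulting inner maximum with the conditional expectation so that the oracle objective turns into $\sum_t\dualut(\lambda,k_t)$ for an arbitrary dual variable $\lambda$. The final step converts the maximum over deterministic source sequences into the supremum over distributions in $\simpk$ by an empirical-distribution argument.

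First, by the Fenchel–Young inequality we have, for every $\lambda\in\R^d$ and every realization,
\begin{equation*}
    -TR\!\left(\tfrac{1}{T}\sum_{t=1}^T a_t x_t\right) \leq -\sum_{t=1}^T \langle\lambda,a_t\rangle\, x_t + TR^*(\lambda).
\end{equation*}
Plugging this into \cref{eq:utility} gives the pointwise upper bound
\begin{equation*}
    \totalu{k,x} \leq \sum_{t=1}^T \bigl(u_t-\langle\lambda,a_t\rangle\bigr) x_t - \sum_{t=1}^T p_{k_t} + TR^*(\lambda).
\end{equation*}

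Next I take the conditional expectation given $(c_{1k_1},\dots,c_{Tk_T})$ and then maximize over $\bx\in\{0,1\}^T$. The i.i.d.\ assumption across users implies that, conditionally on all observed contexts, the pair $(u_t,a_t)$ depends only on $c_{tk_t}$, so
\begin{equation*}
    \esp{u_t-\langle\lambda,a_t\rangle \mid c_{1k_1},\dots,c_{Tk_T}} = \esp{u_t\mid c_{tk_t}} - \langle\lambda,\esp{a_t\mid c_{tk_t}}\rangle.
\end{equation*}
Each $x_t\in\{0,1\}$ is measurable with respect to the conditioning and the objective is separable in $t$, so the inner maximum is attained coordinatewise by setting $x_t=1$ iff $\esp{u_t\mid c_{tk_t}}\geq\langle\lambda,\esp{a_t\mid c_{tk_t}}\rangle$. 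Comparing with \cref{eq:virtual_value}, each term contributes $\virtualv(\lambda,c_{tk_t},k_t)+p_{k_t}$; the $\pm p_{k_t}$ terms cancel, yielding
\begin{equation*}
    \max_{\bx}\esp{\totalu{k,x}\mid c_{1k_1},\dots,c_{Tk_T}} \leq \sum_{t=1}^T \virtualv(\lambda,c_{tk_t},k_t) + TR^*(\lambda).
\end{equation*}

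Finally, taking the outer expectation and using that in the $|\mathcal{Z}|=1$ setting every admissible $\bm{h}$ is just a sequence $(k_1,\dots,k_T)\in[K]^T$, the definition $\dualut(\lambda,k)=\esp{\virtualv(\lambda,c_{tk},k)}+R^*(\lambda)$ identifies the right-hand side with $\sum_t\dualut(\lambda,k_t)$. Since this bound holds for every $\lambda$, we may take the infimum in $\lambda$ and obtain
\begin{equation*}
    \esp{\max_{\bx}\esp{\totalu{k,x}\mid c_{1k_1},\dots,c_{Tk_T}}} \leq \inf_{\lambda\in\R^d}\sum_{t=1}^T\dualut(\lambda,k_t) = T\inf_{\lambda\in\R^d}\langle\pi_{\bm{k}},\dualut(\lambda)\rangle,
\end{equation*}
where $\pi_{\bm{k}}\in\simpk$ is the empirical frequency of sources in $\bm{k}$. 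Maximizing over $\bm{h}$, and using that the empirical frequencies form a subset of $\simpk$, gives $\OPT\leq T\sup_{\pi\in\simpk}\inf_\lambda\langle\pi,\dualut(\lambda)\rangle$. The only delicate step is the second one: the commutation of the pointwise inner max with the conditional expectation must be justified from the i.i.d.\ structure of the users together with the separability and binary nature of the allocation. The rest is bookkeeping and standard weak duality.
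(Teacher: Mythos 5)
Your proposal is correct and follows essentially the same weak-duality route as the paper: linearize the penalty via the conjugate $R^*$, use the i.i.d./measurability structure to reduce the conditional expectations to $\esp{u_t\mid c_{tk_t}}$ and $\esp{a_t\mid c_{tk_t}}$, exploit separability in $t$ so the inner maximum becomes the sum of virtual values $\virtualv$, and pass from source sequences to $\simpk$ through the empirical distribution. The only cosmetic difference is that you apply Fenchel--Young pointwise to the realized average before taking conditional expectations, whereas the paper first takes conditional expectations (using Jensen's inequality for the convex $R$) and then invokes Fenchel--Moreau biconjugacy; the two orderings give the same bound.
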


\begin{proof} We upper bound the performance of $\OPT$ through a Lagrangian relaxation using the convex conjugate of $R$, as was done similarly in \cite{jenatton}. 

Let $(k_1,\dots,k_T)\in[K]^T$ be the (deterministic) sequence of sources selected, and $(x_1,\dots,x_T)\in \mathcal{X}^T$ the offline allocation. These $x_t$ are all $\sigma(c_{1k_1},\dots,c_{Tk_T})$ measurable. We define $\tilde{f}_t(x)= \mathbb{E}_{f_t}[f_t(x) \mid c_{tk_t}]$ and $\delta_t=\mathbb{E}_{a_t}[a_t(x_t) \mid c_{tk_t}]$.

Using Jensen's inequality for $R$ convex we have
\begin{align*}
\mathbb{E}[\totalu{k,x}\mid c_{1k_1},\dots,c_{Tk_T}]&=\sum_{t=1}^T \mathbb{E}[f_t(x_t)-p_{k_t} \mid c_{1k_1},\dots,c_{Tk_T}] -T \mathbb{E}[R(\frac{1}{T}\sum_{t=1}^T a_t (x_t)) \mid c_{1k_1},\dots,c_{Tk_T}]\\
&\leq \sum_{t=1}^T \mathbb{E}[f_t(x_t)-p_{k_t} \mid c_{1k_1},\dots,c_{Tk_T}] -T R(\mathbb{E}[\frac{1}{T}\sum_{t=1}^T a_t(x_t ) \mid c_{1k_1},\dots,c_{Tk_T}]).
\end{align*}
Because the $x_t$ are $\sigma(c_{1k1},\dots,c_{Tk_T})$ measurable, and by independence of the $(f_t,a_t,c_{t1},\dots,c_{tK})$ we have that $\mathbb{E}[a_t(x_t) \mid \sigma(c_{1k_1},\dots,c_{tk_T})]=\mathbb{E}_{a_t}[a_t(x_t) \mid c_{tk_t}]=\delta_t$. With the same argument we obtain that $\mathbb{E}[f_t(x_t)\mid \sigma(c_{1k_1},\dots,c_{tk_t})]=\tilde{f}_t(x_t)$. 
Therefore 
\begin{equation}
\mathbb{E}[\totalu{k,x}\mid c_{1k_1},\dots,c_{Tk_T}] \leq \sum_{t=1}^T \tilde{f}_t(x_t)-p_{k_t} -TR(\frac{1}{T} \sum_{t=1}^T \delta_t). \label{eq:jensen}
\end{equation}

We define the function $\mathcal{L} : (\bm{k},\mathbf{c},\mathbf{x},\lambda) \mapsto \sum_{t=1}^T \tilde{f}_t(x_t)-p_{k_t} - \langle \lambda ,\delta_t \rangle + T R^*(\lambda) $. 

We now show that this function is always greater than the conditional expectation of $\mathcal{U}$:
\begin{align}
\mathcal{L}(\bm{k},\mathbf{c},\mathbf{x},\lambda)&\geq  \inf_{\lambda \in \mathbb{R}^d} \mathcal{L}(\bm{k},\mathbf{c},\mathbf{x},\lambda)\nonumber\\
&=\sum_{t=1}^T \tilde{f}_t(x_t)-p_{k_t} - T \sup_{\lambda \in \mathbb{R}^d} \{\langle \lambda , \frac{1}{T} \sum_{t=1}^T \delta_t \rangle - R^*(\lambda) \} \nonumber\\
&=\sum_{t=1}^T \tilde{f}_t(x_t)-p_{k_t} - T R^{**}(\frac{1}{T} \sum_{t=1}^T \delta_t) \nonumber\\
&=\sum_{t=1}^T \tilde{f}_t(x_t)-p_{k_t} - T R(\frac{1}{T} \sum_{t=1}^T \delta_t) \nonumber\\
& \geq \mathbb{E}[\totalu{k,x}\mid \mathbf{c}_{\boldsymbol{k}}], \label{eq:weak_duality}
\end{align}
the last equality results from applying Fenchel-Moreau theorem, and the last inequality is obtained by using \cref{eq:jensen}.

Let us now compute the $\max$ in $(x_1,\dots,x_T)$ of $\mathcal{L}$. Looking at the terms inside the sum, the only possible dependency of $\tilde{f}_t(x_t)-p_{k_t} - \langle \lambda , \delta_t \rangle $ to other contexts for $ t' \neq t$, is possible through the allocation $x_t$, which can depend on the other contexts. Therefore the maximum of the sum is the sum of the maximums. This yields that
\begin{align*}
\max_{(x_1,\dots,x_T)} \mathcal{L}(\bm{k},\mathbf{c},\mathbf{x},\lambda) &= \sum_{t=1}^T \max_{x_t \in \mathcal{X}} \mathbb{E}_{f_t,a_t}[f_t(x_t)-p_{k_t}- \langle \lambda , a_t(x_t) \rangle \mid c_{tk_t}] + TR^*(\lambda)\\
&= \sum_{t=1}^T (\virtualv(\lambda,c_{tk_t},k_t) + R^*(\lambda)).
\end{align*}
Thus by taking the $\max$ in $(x_1,\dots,x_T)$ on both sides of \cref{eq:weak_duality} and then taking the expectation (which is over the contexts as everything is measurable with respect to them), we obtain 
\begin{equation} 
\mathbb{E}[\max_{(x_1,\dots,x_T)} \mathbb{E}[\totalu{k,x} \mid c_{1k_1},\dots,c_{Tk_T}]] \leq \sum_{t=1}^T \mathbb{E}[\varphi(\lambda,c_{tk_t},k_t) + R^*(\lambda)] = \sum_{t=1}^T \dualut(\lambda,k_t). \label{eq:dual_ineq}
\end{equation}
What is nice here, is that by looking at the dual problem which was originally to address the non-separability in $x$, we also have the separability in $k_t$. 

We denote by $T_k=\sum_{t=1}^T \mathbbm{1}[k_t=k]$ the number of times source $k$ is selected, and by $\hat{\pi}=(T_1/T,\dots,T_K/T)\in \simpk$ the selection proportions of each source $k$ over the $T$ rounds. This vector of proportions can be seen as the empirical distribution of selected sources. We can rewrite in the following way the dual objective
\begin{equation*}
\sum_{t=1}^T \dualut(\lambda,k_t)= \sum_{k=1}^K T_k \dualut(\lambda,k)=T \sum_{k=1}^K \hat{\pi}_k \dualut(\lambda,k)= T \langle \hat{\pi} , \dualut(\lambda) \rangle. \label{eq:dual_2}
\end{equation*}
Finally, using the previous equation with \cref{eq:dual_ineq}, taking first the $\inf$ in $\lambda$ and then the $\max$ in $k_t$ we can conclude:
\begin{equation*}
\OPT  \leq T \max_{k_1,\dots,k_T} \inf_{\lambda \in \mathbb{R}^d} \langle \hat{\pi} , \dualut(\lambda) \rangle \leq T \sup_{\pi \in \simpk} \inf_{\lambda \in \mathbb{R}^d} \langle \pi , \dualut(\lambda) \rangle. 
\end{equation*}
\end{proof}

\begin{remark*}
Note that the function $\dualut(\lambda)$ is convex and \ac{lsc} (as the supremum of a family of \ac{lsc} functions). In addition, $\langle \pi , \dualut(\lambda) \rangle$ is concave and \ac{usc} in $\pi$ by linearity, and $\simpk$ is compact convex. Hence Sion's minimax theorem can be applied, and the $\sup$ in $\pi$ is actually a $\max$. 
\end{remark*}

We can now proceed to the rest of the proof of the main \cref{thm:regret_bound}: 
\begin{theorem} 
For $\eta=L/(2\diam(\Delta)\sqrt{T})$, $m=\bar{f}+L+\max_k \lvert p_k \rvert+2\eta\diam(\Delta)$, $\rho=\sqrt{\log(K)/(TKm^2)}$,  and $\lambda_0 \in \partial R(0)$, algorithm \cref{alg:falcons} has the following regret upper bound:
\begin{equation*}
\Reg \leq 2((L+\bar{f}+\max_k \lvert p_k \rvert)\sqrt{K \!\log(K)}+L\sqrt{d}+\diam(\Delta))\sqrt{T}+2L \sqrt{K\log(K)}
\end{equation*}
\end{theorem}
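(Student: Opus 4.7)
}

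The plan is to sandwich the optimum and the algorithm's value between a common dual quantity. The upper bound is handed to us by \cref{lemma:upper_bound}: $\OPT\le T\max_{\pi\in\simpk}\min_{\lambda\in\R^d}\langle \pi,\dualut(\lambda)\rangle$, where the order of $\max$ and $\min$ is exchangeable by Sion's theorem (the remark after \cref{lemma:upper_bound}). Denoting this value $T V^*$, the whole task reduces to showing $\ALG\ge T V^* - O(\sqrt T)$ with the announced constants. The two ``sides'' of the min-max will be handled by the two modules of the algorithm: the $\max_\pi$ by EXP3 on the sources, the $\min_\lambda$ by the dual gradient descent on $\lambda_t$.

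First I would rewrite the expected per-round contribution to $\ALG$ in a ``dual'' form. Since $x_t$ is the maximizer in \eqref{eq:algo_xt}, the conditional-expectation manipulation of \cref{app:abuse} yields
\begin{align*}
 \mathbb{E}\bigl[f_t(x_t)-p_{k_t}\bigm|\mathcal H_{t-1},k_t,c_{tk_t}\bigr] \;=\; \virtualv(\lambda_t,c_{tk_t},k_t)+\langle\lambda_t,\delta_t\rangle,
\end{align*}
with $\delta_t=\esp{a_t(x_t)\mid c_{tk_t}}$. Summing over $t$, adding and subtracting $\sum_t R^*(\lambda_t)$, and invoking \cref{lemma:expect_r_bound} to replace $R(\frac1T\sum a_t x_t)$ by $R(\frac1T\sum\delta_t)$ at total cost $2L\sqrt{dT}$, the objective splits as $\ALG\ge \mathbb{E}\bigl[\sum_t(\virtualv(\lambda_t,c_{tk_t},k_t)+R^*(\lambda_t))\bigr]+\mathbb{E}\bigl[\sum_t(\langle\lambda_t,\delta_t\rangle-R^*(\lambda_t))\bigr]-T\,\mathbb{E}\bigl[R(\tfrac1T\sum\delta_t)\bigr]-2L\sqrt{dT}$. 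The first bracket is exactly what EXP3 controls; the remaining two terms are what the dual descent controls.

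Next, for the bandit piece, \cref{lemma:bounded_lambda} (hence \cref{app:boundedv}) guarantees that $|\virtualv(\lambda_t,\cdot,\cdot)|\le m$ for the full trajectory. Since $\lambda_t$ is $\mathcal H_{t-1}$-measurable and $k_t$ is drawn \emph{after} $\lambda_t$ is set, the sequence of virtual reward vectors is legitimately adversarial from EXP3's standpoint, and using the importance-weighted estimator $\virtualvestim$ we get, for any $\pi\in\simpk$,
\begin{align*}
 \mathbb{E}\Bigl[\sum_t\bigl(\virtualv(\lambda_t,c_{tk_t},k_t)+R^*(\lambda_t)\bigr)\Bigr]\;\ge\; \mathbb{E}\Bigl[\sum_t\langle\pi,\dualut(\lambda_t)\rangle\Bigr]-2m\sqrt{TK\log K}.
\end{align*}
For the dual piece, the update \eqref{eq:algo_gammat}--\eqref{eq:lambdat} is projected online (sub)gradient descent with step $\eta$ on the concave function $\lambda\mapsto \langle\lambda,\delta_t\rangle-\extendedr^*(\lambda)$, where the maximizer $\gamma_t\in\partial\extendedr^*(\lambda_t)$ plays the role of the gradient in $\lambda$. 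Standard OGD analysis, together with $\|\gamma_t-\delta_t\|_2\le 2\diam(\Delta)$ from the geometry of $\Delta_{\delta_t}$ and the Lipschitz bound from \cref{lemma:bounded_lambda}, gives for any $\lambda^*$
\begin{align*}
 \sum_t\bigl(\langle\lambda_t,\delta_t\rangle-R^*(\lambda_t)\bigr)\;\ge\;\sum_t\bigl(\langle\lambda^*,\delta_t\rangle-R^*(\lambda^*)\bigr)-O\bigl(L\diam(\Delta)\sqrt T\bigr),
\end{align*}
and maximizing in $\lambda^*$ turns the right-hand side into $T R(\frac1T\sum\delta_t)$ by Fenchel--Moreau, cancelling the remaining penalty term. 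Taking $\max_\pi$ in the bandit bound (which, by convexity of $\langle\pi,\dualut(\cdot)\rangle$ and the min--max swap, lower-bounds by $TV^*$) and plugging $\eta=L/(2\diam(\Delta)\sqrt T)$ and the value of $m$ recovers the stated constants, with the $2L\sqrt{K\log K}$ additive term coming from the $2\eta\diam(\Delta)=L/\sqrt T$ contribution inside $m$.

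The main obstacle I expect is the bandit step: because $\lambda_t$ is endogenously driven by the algorithm's own past draws, the virtual rewards do not come from a stationary distribution, and one must argue that the adversarial EXP3 regret bound still applies, then exchange $\max_\pi$ with the expectation over the history of $\lambda_t$. This is exactly where working with the randomized (mixed) source policy $\pi$ rather than a best single source pays off, since Sion's theorem then lets us reinterpret $\mathbb{E}[\sum_t \max_\pi \langle\pi,\dualut(\lambda_t)\rangle]\ge T V^*$; without convexification (as in \cref{prop:OPTvsOPT}) such a step would fail and would only give a linear-in-$T$ gap to \OPT.
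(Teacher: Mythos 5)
Your plan is correct and follows essentially the same route as the paper's own proof: the dual upper bound on $\OPT$, \cref{lemma:bounded_lambda} to justify running EXP3 with the importance-weighted estimator against the optimal mixture (yielding the $2m\sqrt{TK\log K}$ term), online gradient descent on the dual variable combined with Fenchel--Moreau to cancel the penalty, and \cref{lemma:expect_r_bound} for the remaining $2L\sqrt{dT}$, with the same choice of $\eta$, $m$, $\rho$ and the same constant accounting. One point to tighten when writing it out: since $\gamma_t$ maximizes over $\Delta_{\delta_t}$ (not over all of $\extendedd$), it is a subgradient of the time-varying restricted conjugate rather than of $\extendedr^*$, so the OGD comparator step must go through the conjugate of the extension $\extendedr$ (using $\max_{\Delta_{\delta_t}}\ge\max_{\Delta}$, $\extendedr=R$ on $\Delta$, and $\Vert\hat\lambda\Vert_2\le L$ because $\hat\lambda\in\partial R$), which is exactly how the paper patches the step you stated with $R^*$ in place of $\extendedr^*$.
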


\begin{proof} 
We first lower bound the performance of the algorithm. Then we simply need to apply \cref{lemma:upper_bound} to get the regret bound. Let $\mathcal{H}_t$ be the natural filtration generated by the $(c_{\tau k_{\tau}},k_{\tau})_{\tau \in [t]}$. Let $x_t$, $k_t$, $\lambda_t$ and $\gamma_t$ defined as in \ref{alg:falcons}. We denote by $\delta_t=\mathbb{E}_{a_t}[a_t(x_t) \mid c_{tk_t}]$. The quantity $\delta_t$ is in $\Delta$ because of the expectation and by convexity of $\Delta$. Note that here these are all random variables, with $\gamma_t$ and $\lambda_t$ being $\mathcal{H}_{t-1}$ measurable and $x_t$ being $\mathcal{H}_{t}$ measurable. 

To first get a broad picture on how we are going to lower bound the utility of the algorithm, let us decompose it into different parts. It can be rewritten as 
\begin{align}
\mathbb{E}[ \sum_{t=1}^T f_t(x_t) -p_{k_t} - TR(\frac{1}{T}\sum_{t=1}^T a_t (x_t))]=&\mathbb{E}\Biggl[ \left(  \sum_{t=1}^T  f_t(x_t) -p_{k_t} -\extendedr(\gamma_t)\right)  \label{eq:apx_l1}\\
&+ \left( \sum_{t=1}^T \extendedr(\gamma_t) - TR(\frac{1}{T} \sum_{t=1}^T \delta_t ) \right)\label{eq:apx_l2}\\
&+ \left( TR(\frac{1}{T} \sum_{t=1}^T \delta_t ) -  TR(\frac{1}{T}\sum_{t=1}^T a_t (x_t))\right)  \Biggr].\label{eq:apx_l3}
\end{align}

The first line (\cref{eq:apx_l1}) corresponds to some adjusted separable rewards, and we bound it by first reducing it to the virtual rewards $\virtualv$, and then using bandits algorithms techniques. The second line (\cref{eq:apx_l2}) corresponds to the difference between the unfairness of the adjusted rewards, and the true unfairness, which is not big because of the gradient descent on $\lambda_t$. Finally the last line (\cref{eq:apx_l3}) corresponds to the difference between the penalty of the expected allocation and the expected penalty of the allocation, which we have already shown in \cref{lemma:expect_r_bound} is bounded by $\mathcal{O}(\sqrt{T})$. 

\paragraph{Reduction of \cref{eq:apx_l1} to a bandit problem with $\virtualv$}

Using first the tower property of the conditional expectation, and then $\cref{eq:f_abuse}$, we have that 
\begin{equation*}
\mathbb{E}[f_t(x_t)]=\mathbb{E}\left[\mathbb{E}[f_t(x_t) \mid \mathcal{H}_t]\right] =\mathbb{E}\left[\mathbb{E}_{f_t}[f_t(x_t) \mid c_{tk_t}]\right].
\end{equation*}

We add and remove $\langle \lambda_t , \delta_t \rangle$, and use that $x_t$ is the allocation that yields the virtual reward function:
\begin{align*}
\mathbb{E}[f_t(x_t) - p_{k_t}]&=\mathbb{E}\left[\mathbb{E}_{f_t}[f_t(x_t)  \mid c_{tk_t}]  - p_{k_t} - \langle \lambda_t , \delta_t \rangle + \langle \lambda_t , \delta_t \rangle\right]\\
&=\mathbb{E}[\virtualv(\lambda_t,c_{tk_t},k_t) + \langle \lambda_t , \delta_t \rangle ].
\end{align*}

Using the definition of $\gamma_t$ as a maximizer, we have  
\begin{align*}
-\extendedr(\gamma_t)&=-\extendedr(\gamma_t)+\langle \lambda_t , \gamma_t \rangle -\langle \lambda_t , \gamma_t \rangle \\
&= \max_{\gamma \in \Delta_{\delta_t}} \{\langle \lambda_t , \gamma \rangle  -\extendedr(\gamma) \} -\langle \lambda_t , \gamma_t \rangle.
\end{align*} 
Because $\Delta \subset \Delta_{\delta_t}$ and, because $\extendedr$ and $R$ are equal over $\Delta$ we recover a lower bound with $R^*$:
\begin{align*}
\max_{\gamma \in \Delta_{\delta_t}} \{\langle \lambda_t , \gamma \rangle  -\extendedr(\gamma) \} -\langle \lambda_t , \gamma_t \rangle & \geq \max_{\gamma \in \Delta} \{\langle \lambda_t , \gamma \rangle  -\extendedr(\gamma) \} -\langle \lambda_t , \gamma_t \rangle \\
& = \max_{\gamma \in \Delta} \{\langle \lambda_t , \gamma \rangle  -R(\gamma) \} -\langle \lambda_t , \gamma_t \rangle \\
&=R^*(\lambda_t)- \langle \lambda_t , \gamma_t \rangle.
\end{align*}
Thus
\begin{equation}
\mathbb{E}[f_t(x_t) -p_{k_t}-\extendedr(\gamma_t)]\geq \mathbb{E}[\virtualv(\lambda_t,c_{tk_t},k_t)+R^*(\lambda_t)+\langle \lambda_t , \delta_t -\gamma_t \rangle]. \label{eq:decomp_exp}
\end{equation}

\paragraph{Application of Bandits Algorithm}

Let $\pi_t$ be the distribution generated according to the algorithm. We actually have that $\mathbb{E}[\varphi(\lambda_t,c_{tk_t})+R^*(\lambda_t)]$ is just (through independence, measurability, and total expectation arguments) $\mathbb{E}[\langle \pi_t , \mathcal{D}(\lambda_t) \rangle]$. Hence the main idea is to apply Online Convex Optimization theorems to the linear rewards $\mathcal{D}(\lambda_t)$ with the decision variable $\pi_t$ (see \cite{hazan_oco}, \cite{online_learning} or \cite{banditsalg} for introductions to these types of problems). Here we don't have access either to $\mathcal{D}(\lambda_t)$, nor even to $\mathcal{D}(\lambda_t,k_t)$: we are in an adversarial bandit setting regarding the $\lambda_t$, and also in a stochastic setting regarding the expectation in $\dualut$ taken with respect to $c_t$. Hence we use an unbiased estimator of the gradient $\mathcal{D}(\lambda_t)$ for the linear gain, which we will describe further down. 

Notice that it is crucial to be able to deal with any sequence $\mathcal{D}(\lambda_t)$, this is because these are neither independent nor related to martingales, as the $\lambda_t$ are generated from a quite complicated Markov Chain with states $(\lambda_t,c_t,\pi_t,k_t)$. Hence the outside expectation does not help us much, and we will face  as losses (or here rewards) the random arbitrary sequence of the $\mathcal{D}(\lambda_t)$. We don't care about $R^*(\lambda_t)$ as it is $\mathcal{H}_{t-1}$ measurable. 

Here the different arms of a bandit problem correspond to different sources. 
Using the tower rule, we have
\begin{equation*}
\mathbb{E}\left[\sum_{t=1}^T \virtualv(\lambda_t, c_{tk_t},k_t)\right]=\mathbb{E}\left[\sum_{t=1}^T \mathbb{E}[\virtualv(\lambda_t, c_{tk_t},k_t) \mid \mathcal{H}_{t-1}]\right].
\end{equation*}

To use results for adversarial bandits, the bandits rewards need to be bounded. By \cref{app:boundedv}, the virtual reward $\virtualv$, and consequently its conditional expectation, is bounded by $m\vcentcolon = \bar{f}+L+2\eta \diam(\Delta)+\max_k \vert p_k \vert$. Therefore $\mathbb{E}[\virtualv(\lambda_t,c_{tk},k)\mid \mathcal{H}_{t-1}]/m \in [-1,1]$. 

We now construct an unbiased estimator of $\mathbb{E}[\virtualv(\lambda_t,c_{tk},k)\mid \mathcal{H}_{t-1}]$, which is the following importance weighted estimator:
\begin{equation*}
\hat{\varphi}(\lambda_t,c_{tk},k)=m-\mathbbm{1}[k_t=k]\frac{(m-\varphi(\lambda_t,c_{tk},k))}{\pi_{tk}}.
\end{equation*}
This estimator is unbiased for the conditional expectation given $\mathcal{H}_{t-1}$. Indeed because $\pi_{tk}$ is $\mathcal{H}_{t-1}$ measurable, and because $\mathbbm{1}[k_t=k]$ and $\virtualv(\lambda_t,c_{tk},k)$ are independent conditionally on $\mathcal{H}_{t-1}$ we have:
\begin{align*}
\mathbb{E}[ \hat{\varphi}(\lambda_t,c_{tk},k) \mid \mathcal{H}_{t-1}] &= m-\frac{1}{\pi_{tk}} \mathbb{E}[ (m- \virtualv(\lambda_t,c_{tk},k)) \mathbbm{1}[k_t =k] \mid \mathcal{H}_{t-1} ] \\
&=m-\frac{1}{\pi_{tk}} \mathbb{E}[ m- \virtualv(\lambda_t,c_{tk},k) \mid \mathcal{H}_{t-1}] \mathbb{E}[ \mathbbm{1}[k_t =k] \mid \mathcal{H}_{t-1} ] \\
&=m- \mathbb{E}[ m- \virtualv(\lambda_t,c_{tk},k) \mid \mathcal{H}_{t-1}] \\
&=\mathbb{E}[ \virtualv(\lambda_t,c_{tk},k) \mid \mathcal{H}_{t-1}],
\end{align*}
which is exactly the reward needed.  

We can now use the following Theorem 11.2 from \cite{banditsalg}: 

\begin{theorem*}[EXP3 Regret Bound]
Let $X \in [0,1]^{T \times K}$ be the rewards of an adversarial bandit, then running EXP3 with with learning rate $\rho=\sqrt{2\log(K)/(TK)}$ and denoting $k_t$ the arm chosen at time $t$, achieves the following regret bound:
\begin{equation*}
\max_{k \in K} \sum_{t=1}^T X_{t,k} - \mathbb{E}[\sum_{t=1}^T X_{t,k_t}] \leq \sqrt{2TK\log(K)}
\end{equation*}
\end{theorem*}

 The fact that we can possibly have negative rewards instead of in $[0,1]$ like requested in the theorem, just changes that the loss estimator used as an intermediary is bounded by $2$ instead of $1$. Hence by selecting $\rho=\sqrt{\log(K)/TK}/m$, and using the importance weighted estimator $\hat{\virtualv}(\lambda_t,c_t)$ we can use the EXP3 Theorem an apply the expectation to obtain the following regret bound:
\begin{equation*}
\mathbb{E}[\max_{k\in[K]} \sum_{t=1}^T \mathbb{E}[\varphi(\lambda_t,c_{tk},k) \mid \mathcal{H}_{t-1}]] - \mathbb{E}[\sum_{t=1}^T \varphi(\lambda_t,c_{tk_t},k_t)] \leq 2m\sqrt{TK\log(K)}.
\end{equation*}

We know that any convex combination of the sum of the rewards of multiple sources is smaller than the sum of the rewards for the best source. Let us denote by $(\pi^n)_{n \in \mathbb{N}}$ the sequence of distributions that maximizes the dual objective of \cref{lemma:upper_bound}. We will use $\pi^n$ as a convex combination, and taking the expectation over the regret bound we obtain:
\begin{align*}
\mathbb{E}[\sum_{t=1}^T \varphi(\lambda_t,c_{tk_t},k_t)]&\geq \mathbb{E}\left[\sum_{k=1}^K \pi_k^n \sum_{t=1}^T \mathbb{E}[\varphi(\lambda_t,c_{tk},k) \mid \mathcal{H}_{t-1}] \right] -2m\sqrt{TK\log(K)}.
\end{align*}
Moreover using that $c_{tk}$ is independent of $\mathcal{H}_{t-1}$ and that $\lambda_t$ is $\mathcal{H}_{t-1}$ measurable we can deduce using the freezing Lemma that
\begin{equation*}
\mathbb{E}[\varphi(\lambda_t,c_{tk},k) \mid \mathcal{H}_{t-1}]+R^*(\lambda_t) =\mathbb{E}_{c_{tk}}[\varphi(\lambda_t,c_{tk},k)]+R^*(\lambda_t) =\mathcal{D}(\lambda_t,k).
\end{equation*}
Hence
\begin{align*}
\mathbb{E}[\sum_{t=1}^T \varphi(\lambda_t,c_{tk_t})+R^*(\lambda_t)] &\geq \mathbb{E}[ \sum_{k=1}^K \pi_k^n  \sum_{t=1}^T \mathcal{D}(\lambda_t,k)]-  2m\sqrt{TK\log(K)} \\
&=\mathbb{E}[ \sum_{t=1}^T \langle \pi^n , \mathcal{D}(\lambda_t) \rangle]-  2m\sqrt{TK\log(K)} \\
&\geq \sum_{t=1}^T \inf_{\lambda \in \mathbb{R}^d} \langle \pi^n , \mathcal{D}(\lambda) \rangle -  2m\sqrt{TK\log(K)}.
\end{align*}
Taking the limit in $n \rightarrow \infty$ and by definition of $\pi^n$ we conclude that 
\begin{equation*}
\mathbb{E}[\sum_{t=1}^T \varphi(\lambda_t,c_{tk_t})+R^*(\lambda_t)] \geq T \sup_{\pi \in \mathcal{P}_K} \inf_{\lambda \in \mathbb{R}^d} \langle \pi , \mathcal{D}(\lambda) \rangle -  2m\sqrt{TK\log(K)}.
\end{equation*}

Using the above regret bound and \cref{eq:decomp_exp} we obtain
\begin{align}
\mathbb{E}\left[\sum_{t=1}^T f_t(x_t) -p_{k_t} - \extendedr(\gamma_t)\right] & \geq T \sup_{\pi \in \simpk} \inf_{\lambda \in \mathbb{R}^d } \langle \pi , \dualut(\lambda) \rangle -2m\sqrt{TK\log(K)} \nonumber \\
& +  \mathbb{E}\left[\sum_{t=1}^T \langle \lambda_t , \delta_t -\gamma_t \rangle \right]. \label{eq:bandit_bound}
\end{align}

\paragraph{Analysis of \cref{eq:apx_l2} and dual gradient descent}

We will now compare $TR(\sum_{t=1}^T \delta_t/T)$ and $\sum_{t=1}^T \extendedr(\gamma_t)$. Here we consider a modified version of $\extendedr$, which is equal to $\extendedr$ over $\extendedd$ and is equal to $+\infty$ outside of $\extendedd$, and we will use its convex conjugate $\extendedr^*$ (note that the convex conjugate of $\extendedr$ without these modification would have been different !). Let\begin{equation*}
\hat{\lambda} \in \argmax_{\lambda \in \mathbb{R}^d} \{\langle \lambda , \frac{1}{T}\sum_{t=1}^T \delta_t\rangle - \extendedr^*(\lambda) \}.
\end{equation*}
We have by the Fenchel-Moreau theorem that 
\begin{equation*}
\extendedr(\frac{1}{T}\sum_{t=1}^T\delta_t)= \extendedr^{**}(\frac{1}{T}\sum_{t=1}^T\delta_t)=\langle \hat{\lambda} , \frac{1}{T}\sum_{t=1}^T\delta_t\rangle - \extendedr^*(\hat{\lambda}).
\end{equation*}
By definition of the convex conjugate of $\extendedr$, for any $\gamma \in \mathbb{R}^d$, thus in particular for all $\gamma_t$, we have
\begin{equation*}
\extendedr^*(\hat{\lambda}) \geq \langle \hat{\lambda} , \gamma_t \rangle - \extendedr (\gamma_t).
\end{equation*}
Summing for every $t \in [T]$ we obtain
\begin{equation*}
T\extendedr^*(\hat{\lambda}) \geq \sum_{t=1}^T \langle \hat{\lambda} , \gamma_t \rangle - \extendedr(\gamma_t)
\end{equation*}
Hence 
\begin{align*}
& \langle \hat{\lambda} , \sum_{t=1}^T \delta_t\rangle - T\extendedr(\frac{1}{T}\sum_{t=1}^T \delta_t) \geq \sum_{t=1}^T \langle \hat{\lambda} , \gamma_t \rangle - \extendedr(\gamma_t) \\
\Leftrightarrow \quad & \langle \hat{\lambda} , \sum_{t=1}^T \delta_t- \gamma_t \rangle \geq T \extendedr(\frac{1}{T}\sum_{t=1}^T \delta_t)- \sum_{t=1}^T \extendedr(\gamma_t).
\end{align*}
Finally because $\extendedr$ and $R$ are equal over $\Delta$, and because $\sum_{t=1}^T \delta_t/T \in \Delta$, we derive the following inequality:
\begin{equation}   
\sum_{t=1}^T\extendedr(\gamma_t) -T R(\frac{1}{T}\sum_{t=1}^T \delta_t) \geq
\sum_{t=1}^T \langle \hat{\lambda} ,  \gamma_t-\delta_t  \rangle  . \label{eq:R_bound}
\end{equation}

We now would like to track the left-hand sum evaluated at the dual parameter $\hat{\lambda}$. To do so we will use straightforward Online Gradient Descent on the linear function $\lambda \mapsto \langle \lambda , \gamma_t - \delta_t \rangle $, with sub-gradients $\gamma_t - \delta_t$. 

We will use Lemma~$6.17$ from \cite{online_learning}:
\begin{theorem*}[Online Gradient Descent]
For $g_t=\gamma_t-\delta_t$ a sub-gradient of a convex loss function $l_t$ with $\Vert g_t \Vert_2 \leq G$, for $w_t$ generated according to a gradient descent update with parameter $\eta$, we have that for any $u \in \mathbb{R}^d$:
\begin{equation*}
\sum_{t=1}^T l_t(w_t) - l_t(u) \leq \frac{\Vert u \Vert^2_2}{2\eta} + \frac{\eta}{2} TG^2.
\end{equation*}
\end{theorem*}

This is exactly our setting, and the way we update $\lambda_t$. Therefore 
\begin{equation*}
\sum_{t=1}^T \langle \lambda_t , \gamma_t - \delta_t  \rangle - \langle \hat{\lambda} , \gamma_t - \delta_t  \rangle \leq \frac{\Vert \hat{\lambda} \Vert^2_2}{2\eta} + \frac{\eta}{2} TG^2.
\end{equation*}

Moreover, because $\hat{\lambda}$ is the $\argmax$ which yields the convex conjugate of $\extendedr^*$, we have $\hat{\lambda} \in \partial \extendedr^{**}(\sum \delta_t/T)=\partial R^{**}(\sum \delta_t/T)$. Because $R$ is $L$-Lipschitz for $\Vert \cdot \Vert_2$, we have $\Vert \hat{\lambda} \Vert_2 \leq L$. Hence 
\begin{equation*}
\sum_{t=1}^T \langle \lambda_t , \gamma_t - \delta_t  \rangle - \langle \hat{\lambda} , \gamma_t - \delta_t  \rangle \leq \frac{L^2}{2\eta} + \frac{\eta}{2} TG^2.
\end{equation*}

What do we lose by using $\extendedr$ instead of $R$ to generate $\gamma_t$ in our gradient descent ? We have larger gradients bounds by a factor $2$ in our case. Indeed instead of having $G \leq \diam(\Delta)$, because $\gamma_t$ can be outside of $\Delta$, we have $\Vert \gamma_t - \delta_t \Vert_2 \leq 2 \diam(\Delta)$. In this case, choosing $\eta=L/(2\diam(\Delta)\sqrt{T})$ is optimal, and yields the following regret bound:
\begin{equation}
\sum_{t=1}^T \langle \lambda_t , \gamma_t - \delta_t  \rangle - \langle \hat{\lambda} , \gamma_t - \delta_t  \rangle \leq 2L\diam(\Delta)\sqrt{T} \label{eq:oco_bound}
\end{equation}

Using both \cref{eq:R_bound} and \cref{eq:oco_bound} yield:
\begin{equation} \label{eq:oco_and_r}
\sum_{t=1}^T\extendedr(\gamma_t) -T R(\frac{1}{T}\sum_{t=1}^T \delta_t) \geq \sum_{t=1}^T \langle \lambda_t , \gamma_t - \delta_t  \rangle-2L\diam(\Delta)\sqrt{T} 
\end{equation}

We can finally put everything together. Through the decomposition with \cref{eq:apx_l1,eq:apx_l2,eq:apx_l3}, using respectively \cref{eq:bandit_bound,eq:oco_and_r,lemma:expect_r_bound} for each of those differences and summing the inequalities, the terms in $\langle \lambda_t , \gamma_t -\delta_t \rangle$ cancel out, and we obtain that
\begin{equation*}
   \ALG \geq T \sup_{\pi \in \simpk} \inf_{\lambda \in \mathbb{R}^d} \langle \pi , \dualut(\lambda) \rangle -2 m \sqrt{TK\log(K)} 
 - 2L\diam(\Delta)\sqrt{T}-2L \sqrt{dT}.
\end{equation*}

By applying \cref{lemma:upper_bound} we conclude that
\begin{equation*}
\Reg=\OPT-\ALG \leq  2\big((L+\bar{f}+\max_k \lvert p_k \rvert)\sqrt{K \log(K)}+L\sqrt{d}+L\diam(\Delta)\big)\sqrt{T}+2L \sqrt{K\log(K)}.
\end{equation*}
\end{proof}

\section{The Importance of Randomizing Between Sources: OPT vs static-OPT}
\subsection{Proof of \cref{prop:OPTvsOPT}}
\label{app:OPTvsOPT}

We can directly derive from the proof of \cref{lemma:upper_bound} and \cref{alg:falcons} the following proposition, which is a rewriting of \cref{prop:OPTvsOPT}.

\begin{proposition*} 
There are instances of the problem such that
\begin{equation*}
\lim_{T \rightarrow{\infty}} \frac{\mathrm{static{-}OPT}}{T} < \lim_{T \rightarrow{\infty}} \frac{\OPT}{T}.
\end{equation*}
\end{proposition*}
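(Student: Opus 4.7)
The plan is to exhibit an explicit two-source instance whose limits $\lim \OPT/T$ and $\lim \text{static-OPT}/T$ differ, exploiting the fact that a convex penalty rewards mixing source selections whenever different sources push the long-run attribute average in opposite directions. Concretely, I would take $d=1$, $a_t\in\{+1,-1\}$ with a biased prior (say $\mathbb{P}(a_t=+1)=3/4$), unit utility $u_t\equiv 1$, zero prices, and a Lipschitz penalty $R(v)=C|v|$ with $C$ large enough that the ``include-everyone'' allocation is strictly dominated. For the two sources I would use a symmetric ``noisy-defaulting'' construction: with probability $q\in(0,1)$ each source reveals $a_t$ truthfully, otherwise source $1$ defaults to the label $+1$ and source $2$ defaults to $-1$.

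After a short Bayes computation of $\mathbb{E}[a_t\mid c_{t,k}]$ for each observed label, the offline single-source problem reduces (by linearity of the reward and by Jensen applied to $R$) to maximizing $r-C|v|$ over the two-dimensional set of pairs $(r,v)=(\mathbb{E}[u_tx_t],\mathbb{E}[a_tx_t])$ achievable by choosing a selection probability in each observed context. This is a piecewise-linear program whose optimum I can evaluate in closed form, producing the value of static-OPT$/T$. A similar but slightly longer piecewise-linear computation, now over the four per-context selection probabilities together with the mixing weight, lower-bounds $\OPT/T$.

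To witness the claimed gap, I would exhibit a specific mixture: pick each source with probability $1/2$ and, for each source, include exactly the users whose observed label is that source's own ``default''. By the symmetry of the construction the per-source attribute biases are equal in magnitude and opposite in sign, so their weighted sum vanishes and $R$ pays nothing, while the expected per-round reward equals the arithmetic mean of the two source contributions. Comparing this value with the static-OPT value obtained in the previous step yields a strictly positive constant $b$ of the form $R(v^\star)-R(0)$ times the relevant reward ratio, independent of $T$.

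The main obstacle is choosing the parameters $(q,C,\mathbb{P}(a_t=+1))$ so that no pure-source strategy can simultaneously achieve balanced attributes and a reward close to the mixed one: include-all must be over-penalized (forcing the biased prior together with $C$ large), and each source's best attainable allocation with $v=0$ must fall strictly below the mixed reward (forcing the noise parameter $q$ to be strictly between $0$ and $1$, otherwise the source is either uninformative or reveals $a_t$ perfectly and mixing becomes useless). Once the parameters are tuned to satisfy these two conditions, the strict inequality and the linear-in-$T$ gap follow immediately from the two explicit computations.
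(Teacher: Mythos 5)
Your overall route (work primally with the achievable pairs $(\mathbb{E}[u_tx_t],\mathbb{E}[a_tx_t])$ for each source, bound static-OPT by Jensen, and lower-bound OPT by exhibiting an explicit mixture with vanishing penalty) is legitimate and different in flavor from the paper, which instead reuses the dual saddle-point characterization from the main regret analysis ($\lim\OPT/T=\max_{\pi}\inf_{\lambda}\langle\pi,\mathcal{D}(\lambda)\rangle$ versus $\mathrm{static\text{-}OPT}\le T\max_k\inf_{\lambda}\mathcal{D}(\lambda,k)$) and plugs in a symmetric example where each source can only identify the high-utility members of one group, giving values $0.25$ versus $0$. However, your explicit witness is wrong, and it is the crux of the lower bound. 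With $\Pr(a_t=+1)=3/4$ and the noisy-defaulting sources, the $1/2$--$1/2$ mixture in which source $1$ includes label-$+1$ users and source $2$ includes label-$-1$ users has average attribute
\begin{equation*}
\tfrac12\Bigl(\tfrac{3-(1-q)}{4}\Bigr)+\tfrac12\Bigl(\tfrac{3(1-q)-1}{4}\Bigr)=\tfrac{2-q}{4}\;>\;0,
\end{equation*}
not $0$: the noise mechanism is symmetric, but the biased prior (which you yourself need to kill the include-everyone allocation) destroys the cancellation, so with $C$ large this mixture is heavily penalized rather than beating static-OPT. Even the natural repair (include the \emph{certainly identified} groups, i.e.\ label $-1$ under source $1$ and label $+1$ under source $2$) cancels only with weights $3/4$--$1/4$ and then yields reward $3q/8$, which for a range of $q$ is \emph{below} the best single-source value $3q/(2(2+q))$; the instance can in fact be salvaged (e.g.\ at $q=1/2$ the mixture of source $1$'s certain $-1$ group with source $2$'s label-$-1$ group balances exactly and earns $3/8>3/10$), but that is a different mixture requiring a redone computation, not the one your argument supplies.

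Two smaller points. First, to lower-bound $\OPT$ (a finite-$T$ expectation of a convex penalty of a random empirical average) you cannot invoke Jensen, which goes the wrong way; you need a concentration or Lipschitz-plus-variance argument showing the realized penalty of your mixture exceeds its mean value by at most $O(\sqrt{T})$, or the paper's dual characterization, before dividing by $T$ and taking limits. Second, the cleaner way to obtain genuine symmetry is the paper's: keep the prior on $a_t$ uniform and place the asymmetry in the correlation between utilities and what each source can detect, so that each single source is forced to value $0$ while the balanced mixture earns a constant per round.
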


\begin{proof}
What we have shown indirectly in the proof of \cref{thm:regret_bound}, is that 
\begin{equation*}
\vert T \max_{\pi \in \simpk} \inf_{\lambda \in \mathbb{R}^d} \langle \pi , \dualut(\lambda) \rangle - \OPT \vert \leq \mathcal{O}(\sqrt{T}),
\end{equation*}
therefore $\lim_{T\rightarrow \infty} \OPT/T=\lim_{T\rightarrow \infty} \max_{\pi \in \simpk} \inf_{\lambda \in \mathbb{R}^d} \langle \pi ,\dualut(\lambda) \rangle/T$ which also gives us a simple way to compute a close upper bound for $\OPT$ as a saddle point, instead of having to solve a combinatorial optimization problem. 

Following the same arguments as in \cref{lemma:upper_bound} besides the last maximization step, we can also derive that 
\begin{equation*}
\mathrm{static{-}OPT} \leq T \max_{k \in [K]} \inf_{\lambda \in \mathbb{R}^d} \dualut(\lambda,k).
\end{equation*}
It remains to find an example such that 
\begin{equation*}
\max_{k \in [K]} \inf_{\lambda \in \mathbb{R}^d} \dualut(\lambda,k) <\max_{\pi \in \simpk} \inf_{\lambda \in \mathbb{R}^d} \langle \pi , \dualut(\lambda) \rangle.
\end{equation*} 
We used such an example in \cref{ssec:expe_random}, where $\max_{k \in [K]} \inf_{\lambda \in \mathbb{R}^d} \dualut(\lambda,k)=0$ and $\max_{\pi \in \simpk} \inf_{\lambda \in \mathbb{R}^d} \langle \pi ,\dualut(\lambda) \rangle=0.25$.
\end{proof}

\subsection{Simple example and numerical illustration} \label{ssec:expe_random}

We illustrate \cref{prop:OPTvsOPT} and give an example that shows that randomizing over the different sources can outperform the best performance achievable by a single source.

We consider a case where the utility $u_t\in\{-1,1\}$ and the protected attribute of a user $a_t\in\{-1,1\}$. All combination of $u_t$ and $a_t$ are equiprobable: $\mathbb{P}(u_t=u,a_t=a)=1/4$ for all $u,a\in\{-1,1\}$. The penalty function is $R(\delta)=5 |\delta|$. There are $K=2$ sources of information. The first source is capable of identifying good indivduals of group $1$: it gives a context $c_{t1}=1$ when $(a_t,u_t)=(1,1)$ and $c_{t2}=0$ otherwise. The second source does the same for good individuals of group $-1$: it gives $c_{t2}=1$ when $(a_t,u_t)=(-1,1)$ and $c_{t2}=0$ otherwise. In terms of conditional expectations, this translates into $\mathbb{E}[u_t | c_{t1}{=}1]=\mathbb{E}[a_t | c_{t1}{=}1]=1$, $\mathbb{E}[u_t | c_{t1}{=}0]=\mathbb{E}[a_t | c_{t1}{=}0]=-1/3$ for the first source; and $\mathbb{E}[u_t | c_{t2}{=}1]=1$, $\mathbb{E}[a_t | c_{t2}{=}1]=-1$, $\mathbb{E}[u_t | c_{t2}{=}0]=-1/3$, $\mathbb{E}[a_t | c_{t2}{=}0]=1/3$ for the second source.

We apply \cref{alg:falcons} to this example, and we compare its performance to ($i$) the one of an algorithm that has only access to the first source (which is the best source since both sources are symmetric), ($ii$) a greedy algorithm that only use the first source and selects $x_t=1$ whenever $\mathbb{E}[u_t \mid c_{t1}]>0$, and ($iii$) the offline optimal bounds OPT and static-OPT. The results are presented in \cref{fig:multi_arms}. We observe that our algorithm is close to $\OPT$ as shown in \cref{thm:regret_bound}. It also vastly outperforms the algorithm that simply picks the best source, whose performance is close to static-OPT. The greedy has a largely negative utility because it is unfair. Our randomizing algorithm picks an allocation $x_t=1$ only when the context implies that $u_t=1$. It obtain a fair allocation by switching sources. An algorithm that uses a single source cannot be fair unless it chooses $x_t=0$, which is why static-OPT=0 for this example. The $1^{st}$ and $3^{rd}$ quartiles are reported for multiple random seeds, the solid lines are the means. The computations were done on a laptop with an i7-10510U and 16 Gb of ram. 

\begin{figure}[ht]
    \begin{center}
        \includegraphics[width=.95\columnwidth]{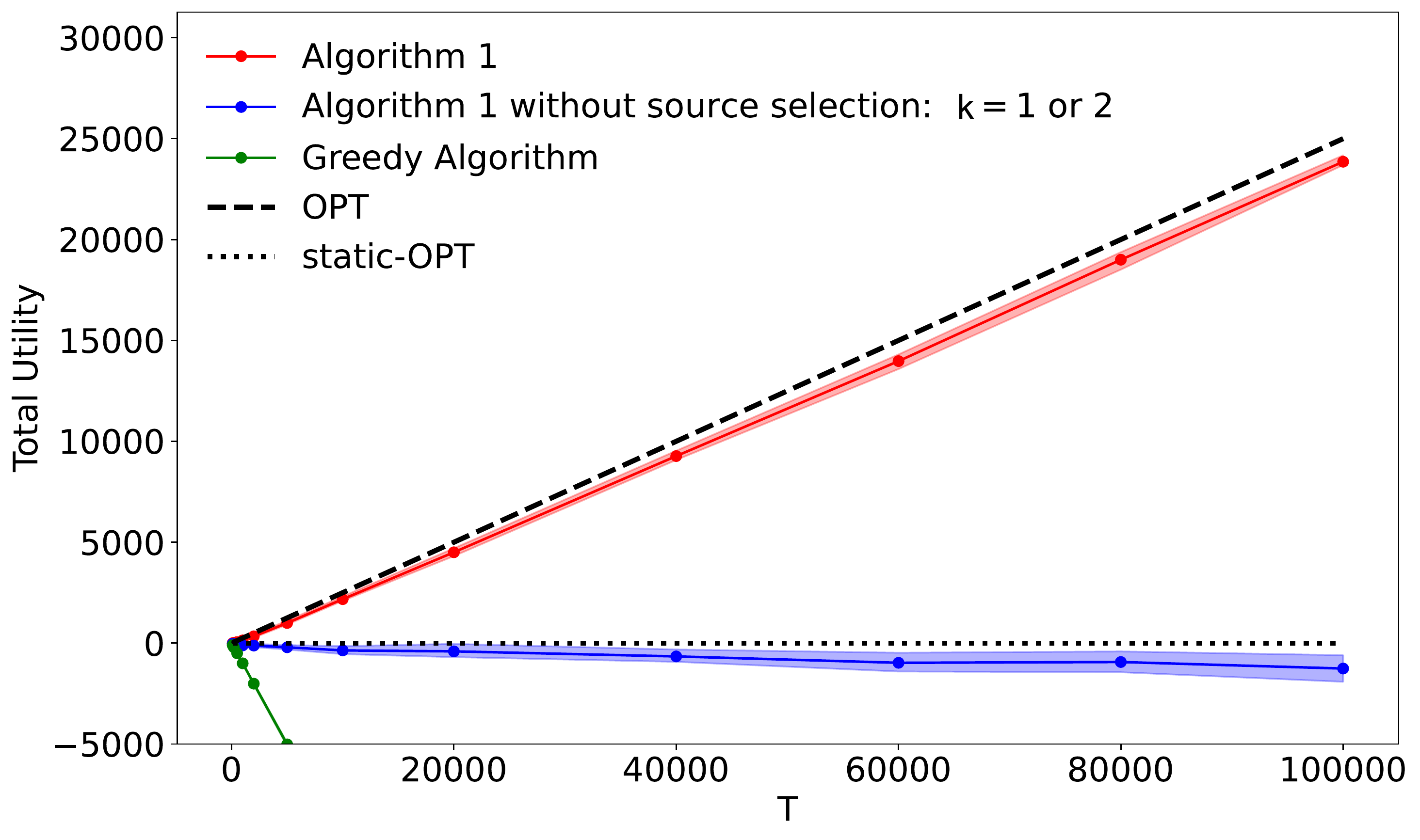}
    \end{center}
    \vspace{-.6cm}

    \caption{Static source vs randomization: The red curve is the total utility of \cref{alg:falcons}, The blue the utility of \cref{alg:falcons} ran on the best-fixed source, and the green the utility of an unfair greedy algorithm. Black curves correspond to the upper bounds $T \lim_{T \rightarrow \infty} (\OPT/T)$ and similarly for static-OPT. }
    \label{fig:multi_arms}
\end{figure}

\subsection{Additional experiments on the effect on $\OPT$ of sources and fairness prices} \label{app:additional_expes}

In the main body of the paper, we study how one is able to use the online algorithm \cref{alg:falcons} so as to achieve a good total utility asymptotically close to that of the offline optimal $\OPT$. However we only consider $R$ and the prices $p_k$ of the different sources as inputs for the algorithm, and we did not discuss how their variation actually affect the performance of $OPT$. While the previous illustrating example shows how a mixture of sources may achieve strictly better performance, we now only focus on the static optimization problem and how various metrics may change as we vary $R$ and $p_k$. 

\textbf{Data model:} We consider a fairness penalty $R(x)=r x^2$ with $r\geq 0$, and $\mathcal{A}=\{-1,1\}$ so that $a_t$ can only be either $-1$ or $1$, encoding men and women for instance. We suppose that the protected attributes and the utility $(u_t,a_t)$ are distributed according to $\Pr(a_t=1)=\Pr(a_t=-1)=1/2$ and $u_t \sim \mathcal{N}(a_t,1)$, hence the group for which $a_t=1$ is generally correlated to having better utilities than those for which $a_t=-1$. We suppose that there are no public context, and that there are $K=2$ information sources, with $c_{t,1}=(a_t,u_t)$ for source $1$ associated with a price of $p_1=p$, and for source $k=2$ we can observe $c_{t,2}=u_t$ with a price of $0$. Here the sources are `monotone' in that one contains strictly more information than the other, but is more expensive. Basically we are going to observe the utility $u_t$ in all cases, but before observing $u_t$ we can pay $p$ to additionally observe $a_t$. Note that when $r=0$ then $R=0$ and there is no fairness penalty meaning that we don't actually care about $a_t$ and will always select source $k=2$, while when $p=0$ the information about $a_t$ is free and there is no uncertainty.

We study the asymptotic regime with $T \rightarrow \infty$, and compute the optimal solution of the offline optimization problem through its dual problem. Figure $1$ represent the quantities associated to the optimal solution for varying $p$ and $r$: (top-left) $\pi^*$ the percentage of times information is bought (i.e., $k=1$ is chosen), (top-right) the probability of group $1$ being selected $\Pr(a_t=1 \mid x_t=1)$, (bottom-left) the fairness cost relative to utility $R(\mathbb{E}[a_t x_t])/\mathbb{E}[u_t x_t]$, and (bottom-right) the information cost relative to utility $p \pi^*/\mathbb{E}[u_t x_t]$.

\begin{figure}[ht]
    \centerline{\includegraphics[width=1.25\columnwidth]{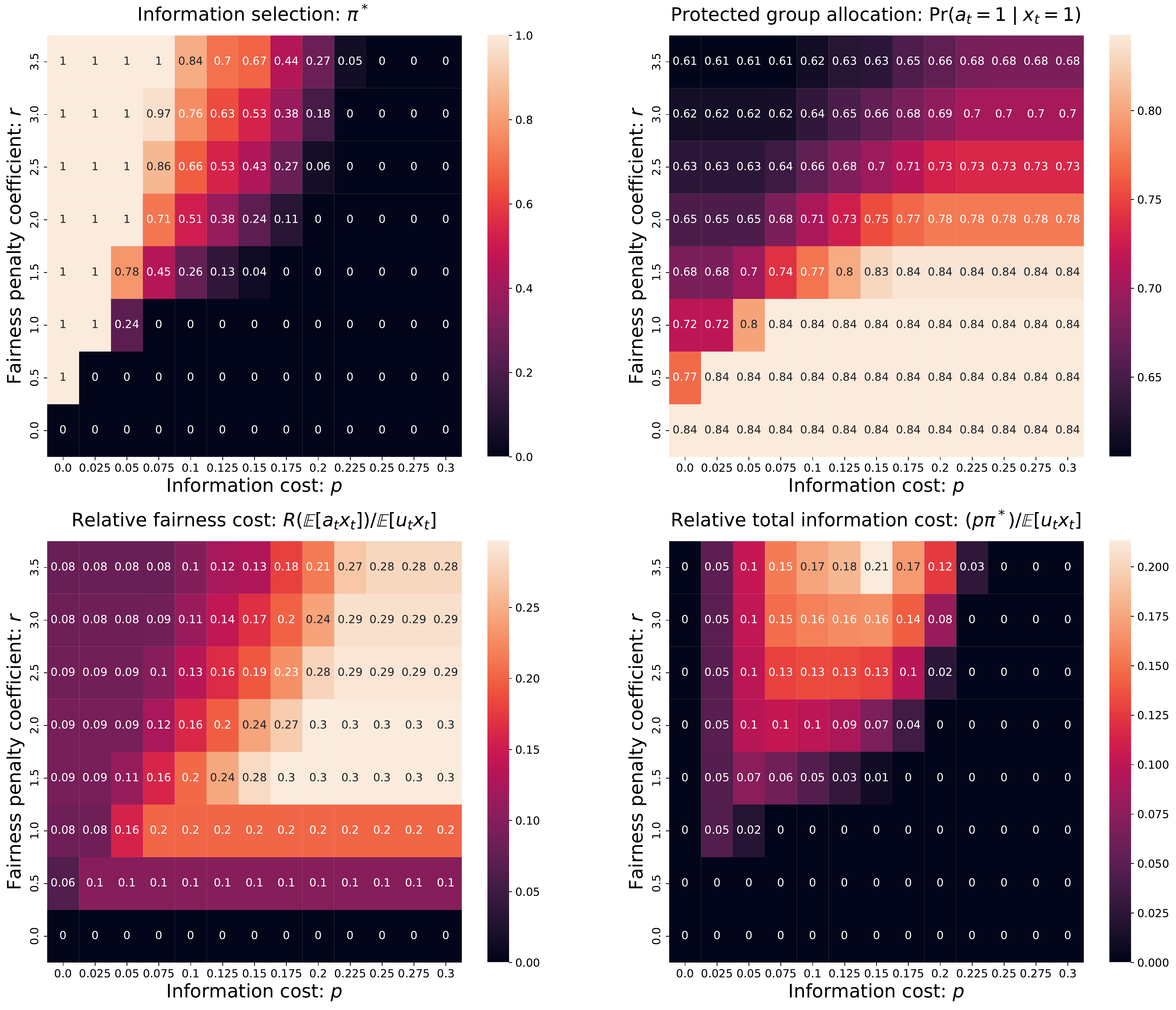}}
        
    \caption{For varying fairness scaling $r$ and information cost $p$: (top-left) optimal information purchase frequency $\pi^*$, (top-right) optimal probability of group $1$ selection $\Pr(a_t=1 \! \mid  \! x_t=1)$, (bottom-left) optimal fairness cost relative to utility $R(\mathbb{E}[a_t x_t])/\mathbb{E}[u_t x_t]$, and (bottom-right) optimal information cost relative to utility $p\pi^*/\mathbb{E}[u_t x_t]$}
    \label{fig:sensitivity_analysis}
\end{figure}

\textbf{Analysis and interpretation}:
\begin{itemize}
    \item \textit{(top-left)} Looking at the optimal action in terms of information (to buy the observation of $a_t$ or not), we see that there are three regimes: $\pi^*=1$, $\pi^*=0$ and a transitive regime $\pi^* \in (0,1)$. When $r=0$ there is no reason to care about $a_t$, thus we simply do not buy information ($\pi^*=0$), this remains true as long as the cost of $p$ is large over $r$. When $p=0$ there are no costs to observe $a_t$ and thus $\pi^*=1$ and this remains true as long as the fairness penalty $r$ is large enough over $p$. In between, we may obtain optimal actions which are strictly between $0$ and $1$: in this case the best action is a strict convex combination between the two sources and there is some trade-off between buying costly information and selecting an unfair allocation.
    \item \textit{(top-right)} Regarding the proportion of individuals of group $1$ selected, we see that it is always higher than $0.5$ as group $a_t=1$ correlates to higher utility compared to group $a_t=-1$. Therefore it is decreasing in $r$ and goes towards the perfectly fair allocation $0.5$ which would incur no penalty even for high $r$. It is also increasing in $p$ as higher information costs make it less desirable to pay to observe $a_t$, and thus more difficult to accurately achieve a fair allocation: the support of the utility $u_t$ conditioned on $a_t=1$ and conditioned on $a_t=-1$ are not disjoint, which makes it difficult only by observing $u_t$ to determine whether $a_t=1$ or $-1$.
    \item  \textit{(bottom-left)} For the relative fairness cost compared to utility, when the information cost $p$ increases the allocation becomes more unfair as it becomes more difficult without access to $a_t$ to choose a fair allocation with good utility as describe above. Moreover the average utility $u_t$ of the selected groups tend to decrease as the individuals which are more likely to be $a_t=-1$ based on $u_t$ have low utility. Therefore the ratio fairness penalty over expected utility tend to increase. When $r$ increases there are three conflicting effects: high $r$ makes it more appealing to achieve a fair allocation at the cost of expected utility $u_t$ which makes the latter decrease, the expected group allocation becomes more fair as seen previously, and the scaling of $R$ increases. When $r$ becomes high there is some balance between a fairer allocation and higher penalty, and when $r=0$ the fairness penalty is clearly $0$. Overall for each fixed $p$ there is some maximum in terms of fairness penalty relative to utility.
    \item \textit{(bottom-right)} Finally for the information cost relative to utility, it is increasing in $r$. Indeed due to large fairness penalties it is better to buy information thus $\pi^*$ increases while $p$ remains fixed, hence the product $p\pi^*$ increases. When $p$ increases, we buy information less often ($\pi^*$ decreases), and because $p\pi^*=0$ for $p=0$ or $p$ large, there is some maximum for each $r$ fixed.
\end{itemize}

Overall, we see that higher fairness penalties tend to make it more likely to buy information and select a fair allocation even at the cost of utility, while high information cost makes it unattractive to buy this information and can lead to unfairness due to the difficulty of identifying the protected attribute without this additional data. 

Here we looked at a simple example to try to isolate how sensible is the optimal solution to some variations on $r$ and $p$. Other parameters can lead to unfairness, such as an unbalance in the protected group population where one would still need a balanced allocation, or penalty functions with varying convexity strength. 

\section{Other Fairness Penalty --- proof of \cref{thm:other_fairpen}} \label{app:type2}

In this section, we consider an other type of fairness penalty as described in \cref{sec:fairness-penalty}.

We redefine $\totalu{.}$ in the following way:

\begin{equation}
\totalu{k, x}=\sum_{t=1}^T (f_t(x_t) - p_{k_t}) -  \big ( \sum_{t=1}^T x_t \big) R \left(\frac{\sum_{t=1}^T a_t (x_t)}{\sum_{t=1}^T x_t} \right).
\end{equation}

When $\mathcal{X} \subset \mathbb{R}^n_+$ with $n>1$, there are two ways to intuitively generalize this penalty for higher dimension allocations. We can replace $\sum_{t=1}^T x_t$ by $\sum_{t=1}^T\sum_{i=1}^n x_{ti}$, which is still a scalar and the proof follows directly. For instance, this corresponds for online bipartite matching with node arrival to a fairness condition on all online arriving nodes (it does not matter to which offline nodes they are matched). Otherwise we can instantiate $n$ penalties $R_1,\dots,R_n$, using $\sum_t x_{ti}$ for each penalty $R_i$ with $i \in [n]$. This corresponds to a fairness condition on the selected online nodes for each of the $n$ offline node. This can be dealt with using one parameter $\lambda_i$ for each penalty $R_i$. For this proof we will assume that $n=1$ without loss of generality. 

We require the following technical assumption: $\sum_{\tau=1}^t a_{\tau}(x_{\tau})/\sum_{\tau=1}^t x_{\tau} \in \Delta$ for all $x_{\tau}$, $a_{\tau}$ and $t$. Of course, the size of $\Delta$ can be increased to guarantee --- if possible --- that this quantity stays inside $\Delta$, but with the trade-off that $\diam(\Delta)$ increases. Clearly, this assumption is verified for the special case of $a_t(x_t)=a x_t$ for some $a \in \mathcal{A}$, as the $x_{\tau}/\sum_t x_t$ form a convex combination.

Let us redefine the virtual reward associated to this other fairness penalty: 
\begin{equation*}
\tilde{\virtualv}(\lambda, c_{tk},k)= \max_{x \in \mathcal{X}} \mathbb{E}\left([f_t(x)  -  \langle \lambda_t  ,  a_t(x) \rangle \mid c_{tk}]+xR^*(\lambda_t)\right) - p_k.
\end{equation*}
We also redefine the unbiased estimator $\hat{\varphi}$ with this new $\tilde{\varphi}$.

We now use \cref{alg:falconsotherpen}. There are $4$ differences with \cref{alg:falconsgen}: the new function $\virtualv$ contains in addition $xR^*(\lambda)$, $x_t$ is the $\argmax$ with respect to this new virtual reward, $\gamma_t$ is the $\argmax$ over $\Delta_{\delta_t/x_t}$ instead of $\Delta_{\delta_t}$, and finally the update for $\lambda_{t+1}$ uses $x_t \gamma_t$ instead of $\gamma_t$. 

\begin{algorithm}[ht]
\caption{Online Fair Allocation with Source Selection --- Other fairness penalty}
\begin{algorithmic}  \label{alg:falconsotherpen}
\STATE {\bfseries Input:} Initial dual parameter $\lambda_0$, initial source-selection-distribution $\pi_0=(1/K,\dots,1/K)$, dual gradient descent step size $\eta$, EXP3 step size $\rho$, cumulative estimated rewards $S_0=0 \in \mathbb{R}^K$. 
    \FOR{$t \in [T]$}
    \STATE Draw a source $k_t \sim \pi_t$, where $(\pi_t)_k\propto \exp(\rho S_{(t-1)k})$ and observe $c_{tk_t}$.
    \STATE Compute the allocation for user $t$:
    \begin{equation}
        x_t = \argmax_{x \in \mathcal{X}}\left(  \mathbb{E}[f_t(x) -  \langle \lambda_t , a_t(x) \rangle  \mid c_{tk_t}]+xR^*(\lambda_t) \right).
    \end{equation}
    \STATE Update the estimated rewards sum and sources distributions for all $k \in [K]$:
    \begin{align}
        S_{tk}&=S_{t-1,k}+\virtualvestim(\lambda_t,c_{tk},k_t),
    \end{align}
    \STATE Compute the expected protected group allocation $\delta_t=\mathbb{E}[a_t(x_t) \mid c_{tk_t},x_t]$ and compute the dual protected groups allocation target and update the dual parameter:
    \begin{align*}
    \gamma_t &= \argmax_{\gamma \in \Delta_{\delta_t/x_t}}\{ \langle \lambda_t , \gamma \rangle - \extendedr(\gamma)\},\\
    \lambda_{t+1}&=\lambda_t- x_t \eta(\gamma_t - \delta_t/x_t).
    \end{align*}
\ENDFOR
\end{algorithmic}
\end{algorithm}

Let us first prove an upper bound on $\OPT$, which actually guides the design of the modified virtual reward $\tilde{\varphi}$:

\begin{lemma} \label{lemma:upper_bound2}
We have the following upper-bound for the offline optimum:
\begin{equation*}
\tilde{\OPT} \leq T \sup_{\pi \in \simpk} \inf_{\lambda \in \mathbb{R}^d} \langle \pi , \dualutb(\lambda) \rangle,
\end{equation*}
where $\dualutb(\lambda)=(\dualutb(\lambda,1),\dots,\dualutb(\lambda,K))$ is a vector representing the value of the dual conjugate problem, with for $k \in [K]$ the coordinates
\begin{equation*}
\dualutb(\lambda,k)=\mathbb{E}_{c_{tk}}[ \tilde{\virtualv}(\lambda,c_{tk},k)].
\end{equation*}
\end{lemma}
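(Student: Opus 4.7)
The proof would follow the same structure as the proof of \cref{lemma:upper_bound}, with two key modifications to accommodate the perspective-function form of the penalty.

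\textbf{Step 1 (Jensen through the perspective function).} Fix a deterministic source sequence $(k_1,\dots,k_T)$ and an allocation $(x_1,\dots,x_T)$ that is $\sigma(c_{1k_1},\dots,c_{Tk_T})$-measurable. The map $(z,s)\mapsto sR(z/s)$ is the perspective of $R$ and is jointly convex (with the usual conventions at $s=0$). Applying Jensen's inequality conditional on the contexts, and using the conditional-expectation identities \cref{eq:a_abuse} together with the measurability of $x_t$, I would obtain
\begin{equation*}
\mathbb{E}\bigl[(\textstyle\sum_t x_t) R(\tfrac{\sum_t a_t(x_t)}{\sum_t x_t}) \mid \mathbf{c}_{\boldsymbol k}\bigr]\;\geq\; (\textstyle\sum_t x_t)\,R\bigl(\tfrac{\sum_t \delta_t}{\sum_t x_t}\bigr),
\end{equation*}
where $\delta_t=\mathbb{E}_{a_t}[a_t(x_t)\mid c_{tk_t}]$. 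Combining with the conditional expectation of the reward part, this upper-bounds $\mathbb{E}[\tilde{\mathcal{U}}(\boldsymbol k,\boldsymbol x)\mid \mathbf{c}_{\boldsymbol k}]$ by $\sum_t \tilde{f}_t(x_t)-p_{k_t}-(\sum_t x_t)R(\sum_t \delta_t/\sum_t x_t)$ with $\tilde{f}_t(x)=\mathbb{E}_{f_t}[f_t(x)\mid c_{tk_t}]$. (The degenerate case $\sum_t x_t=0$ is immediate since then $\tilde{\mathcal{U}}$ has no penalty.)

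\textbf{Step 2 (Fenchel--Moreau to re-introduce $R^*$ multiplied by $x_t$).} By Fenchel--Moreau applied to $R$, for any $\lambda\in\mathbb{R}^d$,
\begin{equation*}
(\textstyle\sum_t x_t)\,R\bigl(\tfrac{\sum_t \delta_t}{\sum_t x_t}\bigr)\;\geq\;\langle \lambda,\textstyle\sum_t \delta_t\rangle\;-\;(\textstyle\sum_t x_t)R^*(\lambda),
\end{equation*}
so that
\begin{equation*}
\mathbb{E}[\tilde{\mathcal{U}}(\boldsymbol k,\boldsymbol x)\mid \mathbf{c}_{\boldsymbol k}]\;\leq\;\textstyle\sum_t \bigl(\tilde{f}_t(x_t)-p_{k_t}-\langle \lambda,\delta_t\rangle + x_t R^*(\lambda)\bigr).
\end{equation*}
This is precisely the analogue of the weak-duality inequality \eqref{eq:weak_duality}, with the $R^*(\lambda)$ term now weighted by $x_t$ rather than appearing once globally---which is exactly what motivates the modified virtual value $\tilde{\varphi}$.

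\textbf{Step 3 (maximize over $x$ using separability).} The right-hand side is separable in $t$, so $\max_{\boldsymbol x}$ can be pushed inside the sum and we recognize each term as the conditional expectation of the modified virtual value:
\begin{equation*}
\max_{\boldsymbol x}\mathbb{E}[\tilde{\mathcal{U}}(\boldsymbol k,\boldsymbol x)\mid \mathbf{c}_{\boldsymbol k}]\;\leq\;\textstyle\sum_{t=1}^T \tilde{\varphi}(\lambda,c_{tk_t},k_t).
\end{equation*}
Taking expectation over the contexts (and using \cref{lemma:cond_expect} as in \cref{app:abuse}), then the infimum over $\lambda$, then the maximum over $(k_1,\dots,k_T)$, and finally passing from empirical source frequencies $\hat\pi=(T_k/T)_k$ to the full simplex as in the last display of the proof of \cref{lemma:upper_bound}, yields
\begin{equation*}
\tilde{\OPT}\;\leq\;T\,\sup_{\pi\in\simpk}\inf_{\lambda\in\mathbb{R}^d}\langle \pi,\dualutb(\lambda)\rangle.
\end{equation*}

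\textbf{Main obstacle.} The only non-routine step is Step~1: one must handle the perspective function carefully (joint convexity and the conventions at $\sum_t x_t=0$) in order to legally apply Jensen's inequality, since $R$ alone is not jointly convex in $(\sum a_t(x_t),\sum x_t)$. Once the perspective-function identity is in place, the remaining duality argument is a direct transcription of the proof of \cref{lemma:upper_bound}, with $R^*(\lambda)$ absorbed into the virtual value via its $x_t$-weighting.
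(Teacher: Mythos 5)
Your proposal is correct and follows essentially the same route as the paper's proof: Jensen's inequality on the penalty conditional on the contexts, Fenchel duality reintroducing $R^*(\lambda)$ weighted by $x_t$, separability in $t$ to identify the modified virtual value $\tilde{\virtualv}$, and the passage from empirical source frequencies to the simplex. The only (harmless) difference is that you invoke joint convexity of the perspective function in Step~1, whereas the paper simply uses that $\sum_t x_t$ is $\sigma(c_{1k_1},\dots,c_{Tk_T})$-measurable, pulls it out of the conditional expectation, and applies Jensen to $R$ alone.
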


\begin{proof} The idea will be the same as in \cref{lemma:upper_bound}, we only use a different lagrangian function.

We recall that the $x_t$ are $\sigma(c_{1k_1},\dots,c_{tk_t})$ measurable (because the $k_t$ are deterministic). Let $\delta_t=\mathbb{E}_{a_t}[a_t(x_t) \mid c_{tk}]$, and $\tilde{f}_t(x)=\mathbb{E}_{f_t}[a_t(x_t) \mid c_{tk}]$. Using Jensen's inequality for $R$ convex we have
\begin{align*}
\mathbb{E}[\totalu{k,x}\mid c_{1k_1},\dots,c_{Tk_T}]&=\sum_{t=1}^T \mathbb{E}[f_t(x_t)-p_{k_t} \mid c_{1k_1},\dots,c_{Tk_T}] - \mathbb{E}[ \big( \sum_{t=1}^T x_t \big)R\left(\frac{\sum_{t=1}^T a_t(x_t)}{\sum_{t=1}^T x_t}\right) \mid c_{1k_1},\dots,c_{Tk_T}]\\
&=\sum_{t=1}^T \mathbb{E}[f_t(x_t)-p_{k_t} \mid c_{1k_1},\dots,c_{Tk_T}] - \big( \sum_{t=1}^T x_t \big) \mathbb{E}[ R\left(\frac{\sum_{t=1}^T a_t(x_t)}{\sum_{t=1}^T x_t}\right) \mid c_{1k_1},\dots,c_{Tk_T}]\\
&\leq\sum_{t=1}^T \mathbb{E}[f_t(x_t)-p_{k_t} \mid c_{1k_1},\dots,c_{Tk_T}] - \big( \sum_{t=1}^T x_t \big)  R\left(\mathbb{E}[\frac{\sum_{t=1}^T a_t(x_t)}{\sum_{t=1}^T x_t} \mid c_{1k_1},\dots,c_{Tk_T}]\right) \\
&=\sum_{t=1}^T \mathbb{E}[f_t(x_t)-p_{k_t} \mid c_{1k_1},\dots,c_{Tk_T}] - \big( \sum_{t=1}^T x_t \big)  R\left(\frac{\mathbb{E}[\sum_{t=1}^T a_t(x_t) \mid c_{1k_1},\dots,c_{Tk_T}] }{\sum_{t=1}^T x_t} \right).
\end{align*}
Moreover by independence of the $(f_t,a_t,c_{t1},\dots,c_{tK})$ we have that $\mathbb{E}[a_t(x_t) \mid \sigma(c_{1k_1},\dots,c_{tk_t})]= \mathbb{E}_{a_t}[a_t(x_t) \mid c_{tk_t}]=\delta_t$. This is basically an application of \cref{lemma:cond_expect} where we state it more carefully. With the same argument we obtain that $\mathbb{E}[f_t(x_t)\mid \sigma(c_{1k_1},\dots,c_{Tk_T})]=\tilde{f}_t(x_t)$. 
Therefore 
\begin{equation}
\mathbb{E}[\totalu{k,x}\mid c_{1k_1},\dots,c_{Tk_T}] \leq \sum_{t=1}^T \tilde{f}_t(x_t)-p_{k_t} -\big( \sum_{t=1}^T x_t \big) R\left(\frac{\sum_{t=1}^T \delta_t}{\sum_{t=1}^T x_t}\right). 
\end{equation}

We define the function $\mathcal{L} : (\bm{k},\mathbf{c},\mathbf{x},\lambda) \mapsto \sum_{t=1}^T \tilde{f}_t(x_t)-p_{k_t} - x_t \langle \lambda , \tilde{a}_t \rangle + x_t R^*(\lambda) $. 

Using the Fenchel-Moreau theorem, and the previous inequality we have
\begin{align}
\mathcal{L}(\bm{k},\mathbf{c},\mathbf{x},\lambda)&\geq  \inf_{\lambda \in \mathbb{R}^d} \mathcal{L}(\mathbf{c},\mathbf{x},\lambda)\nonumber\\
&=\sum_{t=1}^T \tilde{f}_t(x_t)-p_{k_t} - \sup_{\lambda \in \mathbb{R}^d} \{\langle \lambda , \sum_{t=1}^T \tilde{a}_t x_t \rangle - \big( \sum_{t=1}^T x_t \big) R^*(\lambda) \} \nonumber\\
&=\sum_{t=1}^T \tilde{f}_t(x_t)-p_{k_t} - \big( \sum_{t=1}^T x_t \big) \sup_{\lambda \in \mathbb{R}^d} \{\langle \lambda , \frac{ \sum_{t=1}^T \delta_t }{ \sum_{t=1}^T x_t }\rangle -  R^*(\lambda) \} \nonumber\\
&=\sum_{t=1}^T \tilde{f}_t(x_t)-p_{k_t} -  \big( \sum_{t=1}^T x_t \big) R^{**}\left(\frac{\sum_{t=1}^T \delta_t}{\sum_{t=1}^T  x_t}\right)  \nonumber\\
&=\sum_{t=1}^T \tilde{f}_t(x_t)-p_{k_t} -  \big( \sum_{t=1}^T x_t \big) R\left(\frac{\sum_{t=1}^T \delta_t}{\sum_{t=1}^T  x_t}\right)  \nonumber\\
& \geq \mathbb{E}[\totalu{k,x}\mid \mathbf{c}_{\boldsymbol{k}}]. 
\end{align}

We can then finish as we did for \cref{lemma:upper_bound}.
\end{proof}

We now proceed to the rest of the proof of \cref{thm:other_fairpen}: 
\begin{theorem*} 
For $\eta=L/(2\diam(\Delta)\sqrt{T})$, $m=\bar{f}+L+\max_k \lvert p_k \rvert+2\eta\diam(\Delta)+m^*$ ($m^*$ is a constant defined in the proof and depends on $\Delta$ and $R$), $\rho=\sqrt{\log(K)/(TKm^2)}$,  and $\lambda_0 \in \partial R(a)$ for some $a \in \mathcal{A}$, algorithm \cref{alg:falcons} modified as suggested in \cref{sec:fairness-penalty} has the following regret upper bound:
\begin{equation*}
\Reg \leq 2((L+\bar{f}+\max_k \lvert p_k \rvert+m^*)\sqrt{K \!\log(K)}+L\sqrt{d}+\diam(\Delta))\sqrt{T}+2L \sqrt{K\log(K)}
\end{equation*}
\end{theorem*}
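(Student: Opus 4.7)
My plan is to mirror the three-piece decomposition used in the proof of \cref{thm:regret_bound}, but adapted to the perspective-function structure of the new penalty. The upper bound $\tilde{\OPT}\le T\sup_{\pi}\inf_{\lambda}\langle\pi,\dualutb(\lambda)\rangle$ is already provided by \cref{lemma:upper_bound2}, so the task reduces to lower-bounding $\ALG$ by the same saddle-point value up to an $\mathcal{O}(\sqrt{T})$ additive term. The decomposition will be
\begin{align*}
\mathbb{E}[\totalu{k,x}]
=\mathbb{E}\Big[&\sum_{t=1}^T\bigl(f_t(x_t)-p_{k_t}-x_t\extendedr(\gamma_t)\bigr)\\
&+\Bigl(\sum_{t=1}^T x_t\extendedr(\gamma_t)-\bigl(\sum_t x_t\bigr)R\bigl(\tfrac{\sum_t\delta_t}{\sum_t x_t}\bigr)\Bigr)\\
&+\Bigl(\bigl(\sum_t x_t\bigr)R\bigl(\tfrac{\sum_t\delta_t}{\sum_t x_t}\bigr)-\bigl(\sum_t x_t\bigr)R\bigl(\tfrac{\sum_t a_t(x_t)}{\sum_t x_t}\bigr)\Bigr)\Big],
\end{align*}
with $\delta_t=\mathbb{E}_{a_t}[a_t(x_t)\mid c_{tk_t}]$. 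The point of including the extra summand $xR^*(\lambda_t)$ inside $\tilde{\virtualv}$ is precisely that, after adding/subtracting $\langle\lambda_t,\delta_t\rangle$ and using $\gamma_t=\argmax_{\gamma\in\Delta_{\delta_t/x_t}}\{\langle\lambda_t,\gamma\rangle-\extendedr(\gamma)\}\ge R^*(\lambda_t)$ (since $\Delta\subset\Delta_{\delta_t/x_t}$), the first line is bounded below by $\sum_t[\tilde{\virtualv}(\lambda_t,c_{tk_t},k_t)+\langle\lambda_t,\delta_t-x_t\gamma_t\rangle]$.

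For the first line I will run the same EXP3 analysis as in the proof of \cref{thm:regret_bound}, using the importance-weighted estimator $\hat{\varphi}$ built from $\tilde\varphi$. The key preparatory step is the boundedness of $\tilde\varphi$: by \cref{app:boundedv2} (whose hypothesis $a_t(x)/x\in\Delta$ is exactly the technical assumption stated in the appendix, and is automatic under the linear modelling $a_t(x)=ax$), the trajectory of $\lambda_t$ stays in the compact set $\{\|\lambda\|_2\le L+2\eta\diam(\Delta)\}$; continuity of $R^*$ on this compact set produces the finite constant $m^*=\max\{R^*(\lambda):\|\lambda\|_2\le L+2\eta\diam(\Delta)\}$. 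Hence $|\tilde\varphi|\le\tilde m=\bar f+L+\max_k|p_k|+2\eta\diam(\Delta)+m^*$, and EXP3 with $\rho=\sqrt{\log K/(TK\tilde m^2)}$ yields the bandit regret $2\tilde m\sqrt{TK\log K}$ against any convex combination $\pi\in\simpk$ of the sources, exactly as in the original proof.

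For the second line, I will compare $\sum_t x_t\extendedr(\gamma_t)$ with $(\sum_t x_t)R(\sum_t\delta_t/\sum_t x_t)$ via the Fenchel–Moreau identity: choosing $\hat\lambda\in\partial R(\sum_t\delta_t/\sum_t x_t)$ (so $\|\hat\lambda\|_2\le L$), the convex-conjugate inequality $\extendedr(\gamma_t)\ge\langle\hat\lambda,\gamma_t\rangle-\extendedr^*(\hat\lambda)$ summed with weights $x_t$, together with $\sum_t x_t\extendedr^*(\hat\lambda)=\langle\hat\lambda,\sum_t\delta_t\rangle-(\sum_t x_t)R(\sum_t\delta_t/\sum_t x_t)$, gives a lower bound by $\sum_t\langle\hat\lambda,x_t\gamma_t-\delta_t\rangle$. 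I will then cancel this against the $\langle\lambda_t,\delta_t-x_t\gamma_t\rangle$ term from the first line using online gradient descent: the dual update $\lambda_{t+1}=\lambda_t-\eta x_t(\gamma_t-\delta_t/x_t)$ is exactly gradient descent on the linear losses $\lambda\mapsto\langle\lambda,x_t\gamma_t-\delta_t\rangle$ with (sub)gradient of Euclidean norm at most $x_t\cdot 2\diam(\Delta)\le 2\diam(\Delta)$, so the standard OGD regret bound against the comparator $\hat\lambda$ (bounded in norm by $L$) gives $\sum_t\langle\lambda_t-\hat\lambda,x_t\gamma_t-\delta_t\rangle\le 2L\diam(\Delta)\sqrt T$ with $\eta=L/(2\diam(\Delta)\sqrt T)$.

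For the third line I need an analogue of \cref{lemma:expect_r_bound} for the perspective-function penalty; this is the main technical obstacle, since the denominator $\sum_t x_t$ is random and the composition $R(\cdot/\sum_t x_t)$ is no longer globally Lipschitz in the numerator. I will handle it by first conditioning on $\{x_t\}_{t=1}^T$ so that $\sum_t x_t$ is fixed, then observing that on that event $(\sum_t x_t)R(\cdot/\sum_t x_t)$ is the perspective function and hence $L$-Lipschitz in the numerator coordinate-wise, so the same martingale-variance calculation as in \cref{lemma:expect_r_bound} bounds the conditional expectation of the third line by $2L\sqrt{dT}$; taking outer expectation preserves the bound. Summing the three pieces and cancelling the inner products yields $\ALG\ge T\sup_\pi\inf_\lambda\langle\pi,\dualutb(\lambda)\rangle-2\tilde m\sqrt{TK\log K}-2L\diam(\Delta)\sqrt T-2L\sqrt{dT}$; combining with \cref{lemma:upper_bound2} delivers the claimed $\mathcal{O}(\sqrt T)$ bound with the stated constants (including $m^*$).
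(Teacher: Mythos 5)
Your proposal follows essentially the same route as the paper's own proof in \cref{app:type2}: the same three-term decomposition, the same reduction to $\tilde{\virtualv}$ via adding and removing $\langle\lambda_t,\delta_t\rangle$ and $x_tR^*(\lambda_t)$ together with $R^*(\lambda_t)\le\langle\lambda_t,\gamma_t\rangle-\extendedr(\gamma_t)$, the same boundedness constant $m^*$ from \cref{app:boundedv2} feeding the EXP3 analysis, and the same Fenchel--Moreau plus online-gradient-descent cancellation against a comparator $\hat\lambda\in\partial\extendedr\bigl(\sum_t\delta_t/\sum_t x_t\bigr)$ with $\lVert\hat\lambda\rVert_2\le L$. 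The only cosmetic difference is your conditioning on $\{x_t\}$ for the third term, which is unnecessary (and would need care to preserve the martingale structure if taken literally): since $\bigl(\sum_t x_t\bigr)\bigl\lvert R\bigl(u/\sum_t x_t\bigr)-R\bigl(v/\sum_t x_t\bigr)\bigr\rvert\le L\lVert u-v\rVert_2$ holds pathwise, the scaling cancels and the computation of \cref{lemma:expect_r_bound} applies verbatim, which is exactly the paper's observation that rescaling by $T$ or by $\sum_t x_t$ changes nothing.
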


\begin{proof}

We present here only the parts that are different from the proof of \cref{thm:regret_bound}, and we refer to the corresponding proof for more details.

\paragraph{Comparison of performance with $\tilde{\virtualv}$}

By adding and removing $- x_t \langle \lambda_t , \tilde{a}_t \rangle + x_t R^*(\lambda_t)$, and using that $x_t$ is the maximizer that yields $\virtualv$ we derive 
 that 
\begin{align*}
\mathbb{E}[f_t(x_t)]&=\mathbb{E}[\mathbb{E}_{f_t}[f_t(x_t)  \mid c_{tk_t}] - \langle \lambda_t , \delta_t \rangle + \langle \lambda_t , \delta_t  \rangle + x_t R^*(\lambda_t) - x_t R^*(\lambda_t) ]\\
&=\mathbb{E}[\virtualv(\lambda_t,c_{tk_t}) + \langle \lambda_t ,\delta_t \rangle - x_t R^*(\lambda_t) ].
\end{align*}
Because $\Delta \subset \Delta_{\delta_t}$ and, because $\extendedr$ and $R$ are equal over $\Delta$ we have:
\begin{align*}
R^*(\lambda_t) &= \max_{\gamma \in \Delta} \{\langle \lambda_t , \gamma \rangle  -R(\gamma) \} \\
&=  \max_{\gamma \in \Delta} \{\langle \lambda_t , \gamma \rangle  -\extendedr(\gamma) \} \\
& \leq \max_{\gamma \in \Delta_{\tilde{a}_t}} \{\langle \lambda_t , \gamma \rangle  -\extendedr(\gamma) \} \\ 
&= \langle \lambda_t , \gamma_t \rangle - \extendedr(\gamma_t).
\end{align*}
Thus
\begin{equation}
\mathbb{E}[f_t(x_t) - p_{k_t} - x_t \extendedr(\gamma_t)] \geq \mathbb{E}[\tilde{\varphi}(\lambda_t,c_{tk_t})+ \langle \lambda_t , \delta_t - x_t \gamma_t \rangle]. 
\end{equation}

\paragraph{Application of bandit algorithms}
Compared to the previous setting, the only change is that the function $R^*$ is included in $\virtualv$, so we need to bound it as well. See \cref{app:boundedv2}. This is where $m^*$ comes from.

\paragraph{Dual gradient descent}

Let us now focus in the tracking of the fairness parameter. 
In the evaluation of our lower bound of the algorithm performance, we used $- x_t \extendedr(\gamma_t)$ instead of the true penalty $(\sum_{t=1}^T x_t) R(\sum_{t=1}^Ta_t(x_t) /\sum_{t=1}^T x_t)$. Because of a small modification of \cref{lemma:expect_r_bound} we only need to compare it with the  penalty over the conditional expectation of $a_tx_t$. Indeed, by the Lipschitz property of $R$, whether we re-scale by $T$ or $\sum x_t$ does not change anything. 

Hence will now compare those two penalties. Here we will consider a modified version of $\extendedr$, which is equal to $\extendedr$ over $\extendedd$ and is equal to $+\infty$ outside of $\extendedd$, and we will use its convex conjugate $\extendedr^*$ (note that the convex conjugate of $\extendedr$ without the modification would have been different !). Let
\begin{equation*}
\hat{\lambda} \in \argmax_{\lambda \in \mathbb{R}^d} \{\langle \lambda , \frac{\sum_{t=1}^T\delta_t}{\sum_{t=1}^T x_t}\rangle -  \extendedr^*(\lambda) \}.
\end{equation*}
We have by Fenchel-Moreau theorem that  
\begin{align*}
\langle \hat{\lambda} , \frac{\sum_{t=1}^T\delta_t }{\sum_{t=1}^T x_t} \rangle - \extendedr^*(\hat{\lambda}) &= \extendedr^{**}\left(\frac{\sum_{t=1}^T \delta_t}{\sum_{t=1}^T x_t }\right)\\
&= \extendedr\left(\frac{\sum_{t=1}^T \delta_t}{\sum_{t=1}^T x_t }\right).
\end{align*}
Hence 
\begin{equation*}
 \langle \hat{\lambda} , \sum_{t=1}^T \delta_t \rangle -   (\sum_{t=1}^T x_t)  \extendedr\left(\frac{\sum_{t=1}^T \delta_t}{\sum_{t=1}^T x_t }\right)=(\sum_{t=1}^T x_t) \extendedr^*(\hat{\lambda}).
\end{equation*}
By definition of the convex conjugate of $\extendedr$, for any $\gamma \in \mathbb{R}^d$, thus in particular for all $\gamma_t$, we have
\begin{equation*}
\extendedr^*(\hat{\lambda}) \geq \langle \hat{\lambda} , \gamma_t \rangle - \extendedr (\gamma_t).
\end{equation*}
Multiplying by $x_t$ and summing for every $t \in [T]$ we obtain
\begin{equation*}
(\sum_{t=1}^T x_t) \extendedr^*(\hat{\lambda}) \geq \sum_{t=1}^T \langle \hat{\lambda} , x_t \gamma_t \rangle - x_t \extendedr(\gamma_t)
\end{equation*}
Hence 
\begin{align*}
& \langle \hat{\lambda} , \sum_{t=1}^T \delta_t \rangle - (\sum_{t=1}^T x_t) \extendedr\left(\frac{\sum_{t=1}^T \delta_t}{\sum_{t=1}^T x_t }\right) \geq \sum_{t=1}^T \langle \hat{\lambda} , \gamma_t x_t \rangle - \extendedr(\gamma_t) \\
\Leftrightarrow \quad & \langle \hat{\lambda} , \sum_{t=1}^T \delta_t - x_t \gamma_t \rangle \geq (\sum_{t=1}^T x_t) \extendedr\left(\frac{\sum_{t=1}^T \delta_t}{\sum_{t=1}^T x_t }\right)- \sum_{t=1}^T x_t \extendedr(\gamma_t).
\end{align*}
Finally because $\extendedr$ and $R$ are equal over $\Delta$, and because $\sum_{t=1}^T \delta_t/\sum_{t=1}^T x_t \in \Delta$ by the assumption, we have that 
\begin{equation}   
\langle \hat{\lambda} , \sum_{t=1}^T  \delta_t - x_t \gamma_t \rangle \geq (\sum_{t=1}^T x_t) R\left(\frac{\sum_{t=1}^T \delta_t}{\sum_{t=1}^T x_t }\right)- \sum_{t=1}^T x_t \extendedr(\gamma_t)
\end{equation}

As we did earlier, we apply Online Gradient Descent to track the term evaluated at $\hat{\lambda}$ and obtain \cref{eq:oco_bound}. 
We just need to make sure that as before $\hat{\lambda}$ is bounded. By the definition of $\hat{\lambda}$:
\begin{equation*}
 \hat{\lambda} \in \partial \extendedr^{**}(\frac{\sum_{t=1}^T \delta_t }{\sum_{t=1}^T x_t })=\partial \extendedr(\frac{\sum_{t=1}^T \delta_t }{\sum_{t=1}^T x_t })=\partial R(\frac{\sum_{t=1}^T \delta_t }{\sum_{t=1}^T x_t }).
\end{equation*}
Because $R$ is $L$-Lipschitz for $\Vert \cdot \Vert_2$, we have $\Vert \hat{\lambda} \Vert_2 \leq L$.

The rest of the proof is identical.

\end{proof}

\section{Learning $u_t$ --- proof of \cref{prop:learning_u}} \label{app:learning_u}

We prove in this section the result regarding the added regret of learning $u_t$. In terms of utility functions $f_t$, protected attributes $a_t$, and allocations $x_t$, we go back to the setting of \cref{sec:assumptions} with $f_t(x)=u_tx$ and $\mathcal{X}=\{0,1\}$. 

In addition to the assumptions made in \cref{sec:learning},
we give more precision regarding the structure of $u_t$. We define $\theta_{tk}=\mathbbm{1}[k_t=k]$ the indicator variable of selecting source $k$ at time $t$. We define the following filtration for all $k \in [K]$ and all $t \in [T]$: 
\begin{equation*}
\mathcal{F}_{tk}=\sigma(x_1,c_{1k},\theta_{1k},u_1,\dots,x_t,c_{tk},\theta_{tk},u_t,x_{t+1},c_{t+1,k},\theta_{t+1,k}).
\end{equation*}

We assume that the contexts $c_{tk}$ are now feature vectors of dimension $q_k$, and that for all $k \in [K]$, there exists some vector $\psi_k \in \mathbb{R}^{q_k}$ so that 
\begin{equation}
    \eta_{tk}=u_t - \langle \psi_k , c_{tk} \rangle,
\end{equation}
is a zero mean  $1$-subgaussian random variable conditionally on $\mathcal{F}_{t-1,k}$.

Consider that each source $k$ corresponds to one bandit problem, with an action set at time $t$ composed of two contextual actions $\{ c_{tk} \theta_{tk} x, x\in \mathcal{X}\}=\{ c_{tk} \theta_{tk}, 0\}$, and a reward $u_t x_t \theta_{tk}$. If $\theta_{tk}=0$ then the action selected is $0$, and the new reward is $0$, which means both are ``observed''. If $\theta_{tk}=1$, then the feedback is $u_t$ if $x_t=1$, and $0$ otherwise. Regardless of the source selected, the feedback received and the action producing this feedback are observed for all arms in parallel. This is akin to artificially playing all the $[K]$ bandits in simultaneously. This new reward still follows the conditional linearity assumption, and $x_t \theta_{tk} \eta_t$ is still $\mathcal{F}_{t-1,k}$ conditionally $1$-subgaussian.

With these rewards and actions, we can define for each source $\hat{\psi}_{tk}$ and $V_{tk}$ the least squares estimator and the design matrix as done in Equations $(19.5)$ and $(19.6)$ of \cite{banditsalg}. 
Let $\delta \in (0,1)$. We define the confidence set parameter $\beta_t$ as 
\begin{equation}
\sqrt{\beta_t}= \bar{\psi}+\sqrt{2 \log \left( \frac{1}{\delta} \right) + \max_{k \in [k]} q_k \log \left( 1 + \frac{t\bar{c}^2}{\max_{k \in [k]} q_k} \right)}.
\end{equation}
This parameter is independent of $k$, this will make the computations easier later on. We now define the ellipsoid confidence set of $\psi_k$ as 
\begin{equation}
\mathcal{I}_{tk}=\{ \psi \in \mathbb{R}^{q_k}, \Vert \psi - \hat{\psi}_{tk} \Vert_{V_{t-1,k}}^2 \leq \beta_t \}.
\end{equation}

\cref{alg:falcons} is modified in the following way. For each time $t$, let $\tilde{\psi}_{tk}=\argmax_{\psi \in \mathcal{I}_{tk}} \langle \psi , c_{tk} \rangle$. We now select $x_t$ using this optimistic estimate of $\psi_k$:
\begin{equation}
x_t=\argmax_{x \in \{0,1\}} \{x (\langle \tilde{\psi}_{tk} , c_{tk} \rangle - \langle \lambda_t , \tilde{a}_t \rangle ) \}. 
\end{equation}

\begin{proposition*}
    Given the above assumptions, the \emph{added} regret of having to learn $\esp{u_t \mid c_{tk}}$ is of order $\mathcal{O}(\sqrt{KT}\log(T))$.  
\end{proposition*}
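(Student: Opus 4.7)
The plan is to reduce the problem to $K$ stochastic linear bandits run "artificially in parallel" per the construction in the statement, and then to layer an optimistic confidence-ellipsoid analysis on top of the existing joint dual-descent / EXP3 argument of \cref{thm:regret_bound}. First I would exploit the padding trick implicit in the extended action $c_{tk}\theta_{tk}x_t$: whenever source $k$ is not selected or the user is not allocated, this vector is zero, so the rank-one update leaves the design matrix $V_{tk}$ and the least-squares estimator $\hat{\psi}_{tk}$ unchanged. Each per-source sequence is then a genuine self-normalized martingale process over its informative rounds, while we may pretend for analytical purposes that every source is "played" at every step.

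Next I would invoke the self-normalized concentration bound (Theorem~20.5 of \citet{banditsalg}) with a union bound over $k\in[K]$---the prescribed $\sqrt{\beta_t}$ already absorbs the $\max_k q_k$ and $\log(1/\delta)$ factors---so that on an event of probability at least $1-K\delta$ one has $\psi_k\in\mathcal{I}_{tk}$ for all $t$ and $k$. The optimistic choice $\tilde{\psi}_{tk}=\argmax_{\psi\in\mathcal{I}_{tk}}\langle\psi,c_{tk}\rangle$ then guarantees that the virtual reward used by the modified algorithm upper-bounds the true one from \cref{eq:virtual_value}. This monotonicity is the crucial plug-in: the dual bound of \cref{lemma:upper_bound} remains valid when applied at $\tilde{\psi}$ instead of the unknown $\psi_k$, so the offline benchmark is undisturbed and only the algorithm's realized reward needs controlling.

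The per-step overpayment from using $\tilde{\psi}$ instead of $\psi_{k_t}$ is at most $2\sqrt{\beta_t}\,\|c_{tk_t}\|_{V_{t-1,k_t}^{-1}}$ on allocated rounds, by Cauchy--Schwarz in the $V_{t-1,k_t}^{-1/2}$ norm. Writing $T_k$ for the number of informative rounds on source $k$, the elliptical potential lemma yields $\sum_{t:\,k_t=k,\,x_t=1}\|c_{tk}\|_{V_{t-1,k}^{-1}}\leq\sqrt{T_k\,q_k\log(1+T_k\bar{c}^2/q_k)}$. Summing over $k$ and applying Cauchy--Schwarz to $\sum_k T_k\leq T$ delivers the target $\mathcal{O}(\sqrt{KT}\log T)$, with $\delta=1/T$ making the bad-event contribution $O(1)$ in expectation since all per-round quantities are uniformly bounded.

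The main obstacle is interfacing this modification with the existing joint analysis without disturbing its other pieces. Three consistency checks are needed: that $\tilde{\psi}_{tk}$ stays bounded (inherited from $\bar\psi$ and $\bar{c}\sqrt{\beta_T}$) so that the inflated virtual rewards remain bounded and the EXP3 step size $\rho$ and its regret guarantee transfer verbatim; that the direction of inequalities in the derivation of \cref{lemma:upper_bound} and in the dual-descent tracking of \cref{eq:decomp_exp} is preserved by the optimistic estimator (the overestimate can only enlarge the upper bound and only tighten the lower bound in the algorithm's favor, so the fairness-tracking step goes through unchanged); and that the final regret decomposes cleanly into the $\mathcal{O}(\sqrt{T})$ bound of \cref{thm:regret_bound} applied at $\tilde{\psi}$ plus the elliptical-potential overpayment above, making the added $\mathcal{O}(\sqrt{KT}\log T)$ exactly the claimed extra cost of learning the conditional utilities.
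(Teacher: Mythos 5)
Your proposal matches the paper's proof in all essential respects: the same artificial parallelization of $K$ linear bandits via the zero-padded actions $c_{tk}\theta_{tk}x_t$, the same self-normalized confidence ellipsoids with a union bound over sources, optimism ensuring the comparison term is nonnegative, the elliptical potential lemma plus Cauchy--Schwarz over $\sum_k T_k\le T$ for the overpayment, and $\delta=1/T$ to absorb the bad event. The only slight imprecision is the phrase that the dual bound of \cref{lemma:upper_bound} is ``applied at $\tilde{\psi}$''---the paper keeps the benchmark at the true parameters and instead inserts the optimism inequality $x_t^*\langle\tilde{\psi}_{tk_t}-\psi_{k_t},c_{tk_t}\rangle\ge 0$ into the lower bound on the algorithm's utility, which is what your monotonicity remark amounts to anyway.
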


The proof will proceed as follows: we first highlight where the learning will impact the algorithm, leverage the concentration results from \cite{abbasi2011} to decompose into a good event and bad event, and finally bound the loss over the good event.

Let us now analyze the main part of the proof in \cref{app:regret_bound} that is modified, which is ``Comparison of performance with $\virtualv$'', because the only change is the quantity that $x_t$ maximizes. Let $x_t^*$ be the maximizer obtained by replacing the optimistic estimate with the true parameter $\psi_k$. Let $\tilde{a}_t=\esp{a_t \mid c_{tk_t}}$. Because $x_t$ maximizes the quantity with the optimistic estimate we have
\begin{align*}
x_t \mathbb{E}[u_t \mid c_{tk_t}]- x_t \langle \lambda_t ,\tilde{a}_t \rangle & = x_t \langle \psi_{k_t} ,c_{tk_t} \rangle - x_t \langle \lambda_t ,\tilde{a}_t \rangle\\
&= x_t (\langle \psi_{k_t} ,c_{tk_t} \rangle - \langle \tilde{\psi}_{tk_t} ,c_{tk_t} \rangle ) + x_t \langle \tilde{\psi}_{tk_t} ,c_{tk_t} \rangle - x_t \langle \lambda_t ,\tilde{a}_t \rangle \\
& \geq x_t \langle \psi_{k_t}  - \tilde{\psi}_{tk_t} ,c_{tk_t} \rangle  + x_t^* \langle \tilde{\psi}_{tk_t} ,c_{tk_t} \rangle - x_t^* \langle \lambda_t ,\tilde{a}_t \rangle \\
&=  x_t \langle \psi_{k_t}  - \tilde{\psi}_{tk_t} ,c_{tk_t} \rangle  + x_t^* \langle \tilde{\psi}_{tk_t} - \psi_{k_t} ,c_{tk_t} \rangle + x_t^* \langle \psi_{k_t} ,c_{tk_t} \rangle  - x_t^* \langle \lambda_t ,\tilde{a}_t \rangle \\ 
&= x_t \langle \psi_{k_t}  - \tilde{\psi}_{tk_t} ,c_{tk_t} \rangle  + x_t^* \langle \tilde{\psi}_{tk_t} - \psi_{k_t} ,c_{tk_t} \rangle + \virtualv(\lambda_t,c_{tk_t},k_t)+p_{k_t}.
\end{align*}
The last term $\virtualv$ is the one we want, we now need to lower bound the first two terms.

By Theorem $20.5$ of \cite{banditsalg}, we know with probability at least $1-\delta$ that $\psi_k \in \mathcal{I}_{tk}$ for all $t$. By union bound, the probability that the $\psi_k$ are simultaneously all in their confidence set is at least $1- K\delta$. We denote this event by $I$. 

Suppose that the event $I$ is satisfied. Then by definition of $\tilde{\psi}_{tk_t}$, we have 
\begin{equation*}
x_t^* \langle \tilde{\psi}_{tk_t} - \psi_{k_t} , c_{tk_t} \rangle \geq 0.
\end{equation*}
Now let us look at the last term. We can first decompose it over the $K$ sources:
\begin{equation*}
x_t \langle \psi_{k_t}  - \tilde{\psi}_{tk_t} , c_{tk_t} \rangle = \sum_{k=1}^K \theta_{tk} x_t \langle \psi_{k}  - \tilde{\psi}_{tk} , c_{tk} \rangle =\sum_{k=1}^K  \langle \psi_{k}  - \tilde{\psi}_{tk} , \theta_{tk} x_t c_{tk} \rangle.
\end{equation*}
Now, consider one source $k$, and sum the difference term for every $t$. By remarking that $\theta_{tk}=\theta_{tk}^2$, we have that
\begin{align*}
\sum_{t=1}^T \vert \langle \psi_{k}  - \tilde{\psi}_{tk} \mid \theta_{tk} x_t c_{tk} \rangle \vert &=   \sum_{t=1}^T \theta_{tk} \lvert \langle \psi_{k}  - \tilde{\psi}_{tk} \mid \theta_{tk} x_t c_{tk} \rangle \rvert \\ 
& \leq \sum_{t=1}^T \theta_{tk} \Vert \psi_{k}  - \tilde{\psi}_{tk} \Vert_{V_{t-1,k}} \Vert \theta_{tk} x_t c_{tk} \Vert_{V_{t-1,k}^{-1}} \tag{dual-norm inequality}\\
& \leq 2 \sum_{t=1}^T \theta_{tk}  \sqrt{\beta_t} \Vert \theta_{tk} x_t c_{tk} \Vert_{V_{t-1,k}^{-1}} \tag{by the event $I$}\\
& \leq 2 \sum_{t=1}^T \theta_{tk} \sqrt{\beta_T} \max \{ 1, \Vert \theta_{tk} x_t c_{tk} \Vert_{V_{t-1,k}^{-1}}  \} \tag{$\beta_t$ are increasing}\\
&  \leq 2 \sqrt{\beta_T} \sqrt{\sum_{t=1}^T \theta_{tk}^2} \sqrt{ \sum_{t=1}^T \max \{1, \Vert \theta_{tk} x_t c_{tk} \Vert_{V_{t-1,k}^{-1}}^2  \} } \tag{C.S. inequality}\\
&=   2 \sqrt{\beta_T} \sqrt{\sum_{t=1}^T \theta_{tk}} \sqrt{ \sum_{t=1}^T \max \{1, \Vert \theta_{tk} x_t c_{tk} \Vert_{V_{t-1,k}^{-1}}^2  \} }\\
&=  2 \sqrt{\beta_T} \sqrt{T_k} \sqrt{ \sum_{t=1}^T \max \{1, \Vert \theta_{tk} x_t c_{tk} \Vert_{V_{t-1,k}^{-1}}^2  \} },
\end{align*}
where $T_k$ is the number of times source $k$ is selected. 

By Lemma $19.4$ of \cite{banditsalg}, we have that 
\begin{equation*}
  \sum_{t=1}^T \max \{1, \Vert \theta_{tk} x_t c_{tk} \Vert_{V_{t-1,k}^{-1}}^2  \}  \leq 2 q_k \log \left(1+\frac{T\bar{c}^2}{q_k}\right). 
\end{equation*}
Therefore, using Jensen inequality for the square root function which is concave over the $T_k$, and because $\sum_k T_k=T$, we obtain
\begin{align*}
\sum_{t=1}^T \sum_{k=1}^K \vert \langle \psi_{k}  - \tilde{\psi}_{tk} \mid \theta_{tk} x_t c_{tk} \rangle \vert &   \leq \sum_{k=1}^K 2 \sqrt{\beta_T} \sqrt{T_k} \sqrt{ 2 q_k \log \left(1+\frac{T\bar{c}^2}{q_k}\right) }  \\
& \leq   2 \sqrt{\beta_T} \sum_{k=1}^K  \sqrt{T_k} \sqrt{ 2 q_k \log \left(1+\frac{T\bar{c}^2}{q_k}\right) } \\
 & \leq    \sqrt{8 T \beta_T}  \sqrt{ \sum_{k=1}^K q_k \log \left(1+\frac{T\bar{c}^2}{q_k}\right)}.
\end{align*}
This yields a $\mathcal{O}(\sqrt{KT}\log(T))$ error. 

And with probability at most $K\delta$ the concentration event $I$ does not hold, but the error is bounded by $4$ for each $t$. Thus overall in expectation, it is bounded by $4TK\delta$. Picking $\delta=1/T$, yields the desired sub-linear regret bound.

\section{Learning $a_t$} \label{app:learning_a}

In this section, we provide further intuitions on the existence of  $[K]$ different sources, and how it is possible to adapt the algorithm to learn $\esp{a_t \mid c_{tk}}$ on the fly.
Let us consider the special case of our model, where the additional information given by each source is a noisy observation $\hat{a}_{tk}$ of $a_t$, with different noise levels; in the main setting this can be described as $c_{tk}=(z_t,\hat{a}_{tk})$.
Sources with lower noise levels will have higher prices (one directly pays for the level of accuracy). This setting can be used to give users the level of privacy they are willing to give up (as in Local Differential Privacy \citep{dworkDP}), by revealing some noisy information about their private data; of course, the more data is revealed (or the smaller the noise), the higher the compensation (the price).

Furthermore, if the source $k$ corresponds to adding  additive independent noise $L_{tk}$ and to returning $\hat{a}_{tk}=a_t+L_{tk}$, then knowing the law of both $L_{tk}$ and $a_t$, the decision maker can directly compute the conditional expectation of $a_t$ having observed $\hat{a}_{tk}$. Nevertheless, if she only knows the noise distribution, but not the law of $a_t$,  conditional expectation can still be learned in an online fashion in some cases.

Indeed, suppose for simplicity that $a_t \in \{-1,1\}$, $\Pr(a_t=1)=\alpha \in (0,1)$, $(z_t,u_t)$ is independent of $a_t$ (therefore we can focus only in dealing with $a_t$), and that the $L_{tk}$ are $0$ mean $\sigma_k$ sub-exponential random variables (see $2.7.5$ \citet{vershynin_2018}). We define the following running empirical estimate of $\alpha$:
\begin{equation*}
\hat{\alpha}_t=\frac{1}{t}\sum_{\tau=1}^{t} (\hat{a}_{\tau,k_{\tau}}+1)/2.
\end{equation*} 
It is an unbiased estimator of $\alpha$ as 
\begin{equation*}
\mathbb{E}[\hat{a}_{tk}]=\mathbb{E}[a_t]+\mathbb{E}[L_{tk}]=2\alpha-1 + 0=2\alpha-1.
\end{equation*}
Note that the $\hat{a}_{t,k_t}$ are \emph{not} independent, and the conditional expectation is \emph{not} Lispchitz-continuous with respect to the parameter $\alpha$. Still, we can derive an algorithm with a sub-linear regret bound.

First let us state a Lemma on the concentration of $\hat{\alpha}_t$

\begin{lemma}
We have the following concentration bound for $t \gtrsim \log(T)$:
\begin{equation}
\Pr \left( \vert \hat{\alpha}_t - \alpha \vert \geq \sqrt{\frac{\kappa ^2 \log(T)}{ct}} \right) \leq \frac{2}{T}
\end{equation}
\end{lemma}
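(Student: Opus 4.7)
Write $\hat{\alpha}_t - \alpha = \frac{1}{t}\sum_{\tau=1}^t X_\tau$ with $X_\tau = (\hat{a}_{\tau,k_\tau}+1)/2 - \alpha$, and let $\mathcal{G}_\tau$ denote the $\sigma$-algebra generated by the full history through time $\tau$ (so $\hat{a}_{\tau,k_\tau}$ is $\mathcal{G}_\tau$-measurable while $k_\tau$ is $\mathcal{G}_{\tau-1}$-measurable). The first step is to verify the martingale-difference property: conditional on $\mathcal{G}_{\tau-1}$, the fresh random vector $(a_\tau, L_{\tau,1},\dots,L_{\tau,K})$ is independent of $\mathcal{G}_{\tau-1}$, so
\begin{equation*}
\mathbb{E}[\hat{a}_{\tau,k_\tau}\mid\mathcal{G}_{\tau-1}] = \mathbb{E}[a_\tau] + \mathbb{E}[L_{\tau,k_\tau}\mid\mathcal{G}_{\tau-1}] = (2\alpha-1) + 0,
\end{equation*}
which gives $\mathbb{E}[X_\tau\mid\mathcal{G}_{\tau-1}]=0$.

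Second, I would control the conditional tails of $X_\tau$. Decompose $X_\tau = \tfrac12(a_\tau-(2\alpha-1)) + \tfrac12 L_{\tau,k_\tau}$. The first summand lies in $[-1,1]$ and is therefore sub-exponential with an absolute constant. For the noise, since $k_\tau$ is $\mathcal{G}_{\tau-1}$-measurable and each $L_{\tau,k}$ is $\sigma_k$-sub-exponential independently of $\mathcal{G}_{\tau-1}$, conditionally $L_{\tau,k_\tau}$ is sub-exponential with parameter at most $\max_k \sigma_k$. Setting $\kappa$ proportional to $1 + \max_k \sigma_k$, we conclude that $X_\tau$ is conditionally $\kappa$-sub-exponential.

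Third, I apply a Bernstein-type inequality for sub-exponential martingale differences (for instance the martingale version of Theorem 2.8.1 in \citet{vershynin_2018}, or Freedman's inequality upgraded to the sub-exponential regime): there is an absolute constant $c>0$ such that for every $u>0$,
\begin{equation*}
\Pr\Bigl(\bigl|\textstyle\sum_{\tau=1}^t X_\tau\bigr|\ge u\Bigr) \le 2\exp\!\Bigl(-c\min\!\bigl(\tfrac{u^2}{t\kappa^2},\tfrac{u}{\kappa}\bigr)\Bigr).
\end{equation*}
Taking $u = t\sqrt{\kappa^2\log(T)/(ct)} = \sqrt{t\kappa^2\log(T)/c}$ and dividing by $t$ yields the announced bound, provided $t \gtrsim \log(T)$ so that the quadratic branch of the minimum is the active one (this is exactly the regime mentioned in the statement).

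The main obstacle is the adaptive dependence: the source $k_\tau$ is chosen from the past observations, so the $X_\tau$ are neither independent nor identically distributed, and a naive Hoeffding/Bernstein bound is not available. The key observation making everything work is that the noises $L_{\tau,k}$ are drawn fresh at time $\tau$ and independent of $\mathcal{G}_{\tau-1}$, so the conditional distribution of $X_\tau$ given $\mathcal{G}_{\tau-1}$ is uniformly controlled by $\kappa$; once this is established, applying a martingale Bernstein inequality and tuning the deviation to $\log T$ is routine.
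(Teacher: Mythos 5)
Your proposal is correct and follows essentially the same route as the paper: center the estimator as a martingale difference sequence, bound the conditional sub-exponential ($\psi_1$) norm by a constant $\kappa$ of order $1+\max_k\sigma_k$ using the boundedness of $(a_\tau+1)/2-\alpha$ and the freshness of the noise, then apply a Bernstein-type bound with deviation $\sqrt{\kappa^2\log(T)/(ct)}$, where $t\gtrsim\log(T)$ makes the quadratic branch of the minimum active. The only difference is that you invoke the martingale version of Bernstein's inequality as a black box, whereas the paper briefly re-derives it by iterating the conditional moment-generating-function bound — a presentational rather than substantive distinction.
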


\begin{proof}

We refer to Chapter $2$ of \cite{vershynin_2018} for concentration results of sum of independent random variables. Let us recall some definitions, properties, and a concentration inequality.

Let us consider the following Orlicz space norm for random variables. For a random variable $X$, we define
\begin{equation*}
\Vert X \Vert_{\psi^1}=\inf \{ t>0 , \mathbb{E}[\exp(\vert X \vert /t)] \leq 2 \}.
\end{equation*}
If $\Vert X \Vert_{\psi^1}$ is finite, then $X$ is called sub-exponential. It is a norm, and for a bounded variable $X$ in $[-1,1]$, we have $\Vert X \Vert_{\psi^1} \leq 1/\log(2)$ (see exercise $2.5.7$,  example $2.5.8$, and Lemma $2.7.6$ of \cite{vershynin_2018}). Hence because $(a_t +1)/2 - \alpha \in [-\alpha, 1-\alpha] \subset [-1,1]$, for $\hat{\alpha}_{tk}$ the estimate obtained by always choosing source $k$,  we have by triangle inequality that 
\begin{equation*}
\Vert \hat{\alpha}_{tk} - \alpha \Vert_{\psi^1} \leq \frac{1}{\log(2)} + \frac{\sigma_{k}}{2}.
\end{equation*}

We now recall Bernstein's inequality:
\begin{theorem*}[Theorem 2.8.1 of \cite{vershynin_2018}]
Let $X_1,\dots,X_N$ be $N$ independent, mean $0$, sub-exponential random variables, with $\kappa= \max_i \Vert X_i \Vert_{\psi^1}$. Then for $c>0$ a constant, for every $t>0$ we have 
\begin{equation*}
\Pr \left\{ \left\vert \sum_{i=1}^N X_i \right \vert \geq t \right\} \leq 2 \exp \left[ - c \min \left( \frac{t^2 }{N \kappa^2 } , \frac{t}{\kappa} \right)  \right].
\end{equation*}
\end{theorem*}

We want to apply this Theorem to $(\hat{a}_{t,k_t}+1)/2-\alpha$ the centered $\hat{\alpha}_t$ where we now change the source depending on $k_t$. Remark that now however the $\hat{\alpha}_t$ is not a sum of independent variables anymore because of $k_t$ which does depend on the previous realizations of the $\hat{a}_{t,k_t}$. Nevertheless it is a martingale difference for the filtration $\mathcal{H}_t$,
 and similar results will apply. We denote by $\kappa=1/\log(2) + \max_k \sigma_k /2$ an upper bound on the sub-exponential norm of the centered $\hat{\alpha}_t$ conditional on $\mathcal{H}_{t-1}$. Then the martingale $\sum_{\tau=1}^t \hat{\alpha}_{\tau} - \alpha$ is also a sub-exponential random variable. 
 
 This is a known result, which is stated here for completeness, but that can otherwise be skipped. Indeed from proposition $2.7.1$ part $e)$ of \cite{vershynin_2018}, we know that finite sub-exponential norm for a centered random variable $X$ means that the moment generating function of $X$ at $y$ is bounded by $\exp(c_1 \Vert X \Vert_{\psi^1}^2 y^2)$ for $y \leq c_2/\Vert X \Vert_{\psi^1}$, with $c_1$ and $c_2$ some positive constants. 

Therefore, for $y \leq c_2/\kappa$, using the property of the conditional sub-exponentiality of $\hat{\alpha}_t$, we have
\begin{align*}
\mathbb{E}[\exp(y \sum_{\tau=1}^t (\hat{\alpha}_{\tau} - \alpha ))] &= \mathbb{E}[ \mathbb{E}[\exp(y \sum_{\tau=1}^t (\hat{\alpha}_{\tau} - \alpha )  )\mid \mathcal{F}_{t-1} ] ] \\
&=\mathbb{E}[\exp(y \sum_{\tau=1}^{t-1} (\hat{\alpha}_{\tau} - \alpha )) \mathbb{E}[\exp(y (\hat{\alpha}_t- \alpha) )\mid \mathcal{F}_{t-1} ] ]\\
& \leq \mathbb{E}[\exp(y \sum_{\tau=1}^{t-1} (\hat{\alpha}_{\tau} - \alpha )) \mathbb{E}[\exp(c_1 y^2 \kappa^2 )\mid \mathcal{F}_{t-1} ] ].
\end{align*}
Iterating this inequality we obtain for $y \leq c_2/\kappa$ that 
\begin{equation*}
\mathbb{E}[\exp(y \sum_{\tau=1}^t (\hat{\alpha}_{\tau} - \alpha ))] \leq \mathbb{E}[\exp(c_1 t \kappa^2 y^2)].
\end{equation*}
This yields inequality $(2.24)$ in \cite{vershynin_2018} the main element needed for the proof of Bernstein's Inequality. 
Thus Bernstein's inequality still holds for centered martingale sequence which are conditionally sub-exponential. We stress that this is not a new concentration inequality result. 

Let us now apply Bernstein's Inequality with a deviation of order $\sqrt{\kappa ^2 \log(T) / (c t)}$, for $t \geq \log(T) /c$:
\begin{align*}
\Pr \left( \vert \hat{\alpha}_t - \alpha \vert \geq \sqrt{\frac{\kappa ^2 \log(T)}{ct}} \right)&= \Pr\left( \vert \sum_{\tau=1}^t (\hat{\alpha}_{\tau} - \alpha) \vert \geq \sqrt{\frac{\kappa ^2 t\log(T)}{c}} \right) \\ 
& \leq 2 \exp \left[ - c \min \left( \frac{\log(T) }{c } , \sqrt{\frac{\log(T)t}{c}} \right)  \right] \\
&=\frac{2}{T}
\end{align*}
\end{proof}

\begin{proposition}
Given these assumptions, using \cref{alg:falcons} with $\hat{\alpha}_t$ instead of $\alpha$ to compute the conditional expectations yields an added regret of order $\mathcal{O}((\max_k\sigma_k+2/\log(2))  \sqrt{T \log(T)} )$.
\end{proposition}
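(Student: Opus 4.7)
The approach is to couple the modified algorithm---which uses the plug-in posterior mean $\hat h_t(y) := h(\hat\alpha_t, y)$ in place of $h(\alpha, y) = \esp{a_t \mid \hat a_{tk} = y}$ everywhere $\esp{a_t \mid c_{tk}}$ appears in \cref{alg:falcons}---with the idealized algorithm of \cref{thm:regret_bound} that has exact knowledge of $\alpha$. The total regret then splits into the $\mathcal{O}(\sqrt{T})$ regret of \cref{thm:regret_bound} plus an estimation cost, which I will bound by $\mathcal{O}(\kappa\sqrt{T\log T})$.

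The first ingredient is a pointwise Lipschitz bound on $h$ in its first argument. Writing $A = f_k(y-1)$ and $B = f_k(y+1)$ for the known noise densities, a direct computation gives $h(\alpha, y) = ((A+B)\alpha - B)/((A-B)\alpha + B)$ and $\partial_\alpha h(\alpha, y) = 2AB/(\alpha A + (1-\alpha) B)^2$. An application of AM--GM to the denominator yields $|\partial_\alpha h(\alpha, y)| \leq 1/(2\alpha(1-\alpha))$, so $|\hat h_t(y) - h(\alpha, y)| \leq C_\alpha |\hat\alpha_t - \alpha|$ for a constant $C_\alpha$, uniformly in $y$. The second ingredient is to re-run the proof of \cref{thm:regret_bound} from \cref{app:regret_bound}, tracking each place where $h$ has been replaced by $\hat h_t$: the allocation rule $x_t$, the virtual reward $\virtualv(\lambda_t, c_{tk}, k)$, and the dual descent target $\delta_t$. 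Each discrepancy is bounded by $(L + 2\eta\diam(\Delta))\,C_\alpha |\hat\alpha_t - \alpha|$, using \cref{lemma:bounded_lambda} to control $\norm{\lambda_t}$ (which remains valid since $\hat h_t(y) \in \Delta$, so the same recursion goes through). Crucially, these errors enter the primal--dual decomposition additively and do not compound.

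To conclude, I would combine with the concentration lemma just stated. On the intersection of the good events $\mathcal{G}_t = \{|\hat\alpha_t - \alpha| \leq \kappa\sqrt{\log(T)/(ct)}\}$ for $t \geq \log(T)/c$, the estimation cost is at most $\sum_{t \geq \log(T)/c}^T \mathcal{O}(\kappa\sqrt{\log(T)/t}) = \mathcal{O}(\kappa\sqrt{T\log T})$. The warm-up phase $t < \log(T)/c$ contributes $\mathcal{O}(\log T)$, as each step is bounded by a constant. For the bad events, each has probability $\leq 2/T$ with per-step loss bounded by a constant, so summing over $t$ the expected contribution is $\mathcal{O}(1)$. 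Substituting $\kappa = 1/\log(2) + \max_k \sigma_k/2$ recovers the stated rate. The main technical obstacle I foresee is verifying that the dual trajectory $(\lambda_t)$ driven by the noisy target $\hat\delta_t$ does not diverge from the idealized one in a compounding way; I expect to handle this by treating the estimation gap as an adversarial perturbation of the dual gradients and re-invoking the online gradient descent regret bound used in \cref{app:regret_bound}, losing at most a constant factor.
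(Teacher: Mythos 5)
Your proposal is correct in outline and reaches the stated rate, and its global architecture coincides with the paper's: split on the concentration event for $\hat\alpha_t$, bound the per-step plug-in error linearly in $\vert\hat\alpha_t-\alpha\vert$, sum the $\sqrt{\log T/t}$ deviations to get $\mathcal{O}(\kappa\sqrt{T\log T})$, absorb the bad event ($\mathcal{O}(1)$ in expectation) and a logarithmic warm-up phase, and note that the dual descent needs no new argument because the OGD bound of \cref{app:regret_bound} holds for arbitrary sequences, so the estimation error enters only additively (the paper also needs, as you implicitly do, the final comparison of $R(\sum_t s_t x_t/T)$ with $R(\sum_t \tilde a_t x_t/T)$ via Lipschitzness of $R$). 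Where you genuinely diverge is the central estimate on $\vert s_t-\tilde a_t\vert$. The paper deliberately avoids any pointwise Lipschitz claim---it even remarks that the posterior mean is \emph{not} Lipschitz in $\alpha$ uniformly in the observation---and instead bounds the error \emph{in expectation} over $\hat a_{tk}$: since the observation density is exactly the denominator evaluated at the true $\alpha$, the ratio cancels and the elementary inequality $AB/(\beta A+(1-\beta)B)\le A+B$ gives the bound $4\vert\hat\alpha_{t-1}-\alpha\vert$ with an absolute constant. Your route proves the pointwise bound $\vert\partial_\beta h(\beta,y)\vert=2AB/(\beta A+(1-\beta)B)^2\le 1/(2\beta(1-\beta))$ by AM--GM, which is correct and uniform in $y$; the non-Lipschitzness the paper alludes to occurs only as $\beta\to\{0,1\}$, and your good event keeps $\hat\alpha_t$ away from that boundary, so the two arguments are compatible. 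Two points you should make explicit for this to be airtight: first, the Lipschitz constant is the supremum of $1/(2\beta(1-\beta))$ over the segment joining $\alpha$ and $\hat\alpha_t$, so the warm-up must last until $\kappa\sqrt{\log(T)/(ct)}\le\min(\alpha,1-\alpha)/2$, i.e.\ $t\gtrsim \kappa^2\log(T)/\bigl(c\min(\alpha,1-\alpha)^2\bigr)$ rather than $\log(T)/c$ (still $\mathcal{O}(\log T)$), and the plug-in must be given a default value (the paper uses $s_t=0$) when $\hat\alpha_t\notin[0,1]$, which the unbounded sub-exponential noise makes possible; second, your leading $\sqrt{T\log T}$ term then carries a constant of order $1/(\alpha(1-\alpha))$, whereas the paper's in-expectation cancellation confines the $1/\min(\alpha,1-\alpha)$ dependence to the lower-order $\log T$ term---both are $\mathcal{O}(\kappa\sqrt{T\log T})$ for a fixed instance, so the proposition still follows, but the paper's constant is cleaner.
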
 

\begin{proof}

There are two parts of the proof of \cref{thm:regret_bound} that we need to modify, when the $x_t$ is involved, and when we compare the penalty. We will deal with the first part (the more complicated), and the second one follows immediately by the same arguments. This will be done in $3$ steps: we first highlight where the learning error occurs, then use the concentration result to decompose the error under a good and bad event, and finally show that under the good event the error in expectation is small enough.

\paragraph{Impact of learning error}
We suppose for simplicity that $L_{tk}$ has a density $g_k$ over the support $\mathbb{R}$, but this also holds for discrete random variables. We denote by $\nu_k$ the density of $\hat{a}_{tk}$.
First let us express $\tilde{a}_t=\mathbb{E}[a_t \mid c_{tk_t}]$ as a function of $\hat{a}_{t,k_t}= \hat{a}$ and $\alpha$:
\begin{align*}
\mathbb{E}[a_t \mid \hat{a}_{t,k_t} = \hat{a}] &= \Pr(a_t=1 \mid \hat{a}_{t,k_t}=\hat{a}) - \Pr(a_t=-1 \mid \hat{a}_{t,k_t}=\hat{a}) \\
& = \frac{\Pr(a_t=1) g_{k_t}( \hat{a}-1)}{\nu_{k_t}(\hat{a})} - \frac{\Pr(a_t=-1) g_{k_t}(\hat{a}+1)}{\nu_{k_t}(\hat{a})} \\
&= \frac{ \alpha g_{k_t}( \hat{a}-1) - (1- \alpha) g_{k_t}( \hat{a}+1)}{\alpha g_{k_t}( \hat{a}-1) + (1- \alpha) g_{k_t}( \hat{a}+1)}.
\end{align*}

Let us define the plug-in estimate of $\tilde{a}_t$ using $\bar{\alpha}_{t-1}$ whenever $\bar{\alpha}_{t-1}$ is in $[0,1]$ by:
\begin{equation}
s_t(\hat{a})=\frac{ \bar{\alpha}_{t-1} g_{k_t}(\hat{a}-1) - (1- \bar{\alpha}_{t-1}) g_{k_t}(\hat{a}+1)}{\bar{\alpha}_{t-1} g_{k_t}(\hat{a}-1) + (1- \bar{\alpha}_{t-1}) g_{k_t}(\hat{a}+1)}.
\end{equation}
When the estimated parameter is not in $[0,1]$, we use $s_t=0$.

We define by $x_t$ the maximisation of the virtual value function $\virtualv$ when replacing $\tilde{a}_t$ by $s_t$, and by $x_t^*$ the allocation obtained if we were to actually use $\tilde{a}_t$. Because $x_t$ is a maximizer of $ u_t x_t - \langle \lambda_t , s_t \rangle x_t $, we have

\begin{align*}
u_t x_t &= u_t x_t - \langle \lambda_t , s_t \rangle x_t + \langle \lambda_t , s_t \rangle x_t \\
&\geq u_t x_t^* - \langle \lambda_t , s_t \rangle x_t^* + \langle \lambda_t , s_t \rangle x_t\\
&= \virtualv(\lambda_t, \hat{a}_{t,k_t},k_t) + \langle \lambda_t , \tilde{a}_t -s_t \rangle x_t^* + \langle \lambda_t , s_t \rangle x_t.
\end{align*}
The term $\langle \lambda_t , s_t \rangle x_t$ will be tracked with the help of the OGD, it remains to take care of $\langle \lambda_t , \tilde{a}_t -s_t \rangle x_t^*$. By Cauchy Schwartz:
\begin{equation*}
\vert \langle \lambda_t , \tilde{a}_t -s_t \rangle x_t^* \vert \leq \Vert \lambda_t \Vert_2 \Vert \tilde{a}_t -s_t \Vert_2.
\end{equation*}
We know that $\lambda_t$ is already bounded, so we only need to bound the right-hand term. 
Decomposing depending on the values of $k_t$, we can rewrite
\begin{equation*}
s_t-\tilde{a}_t= \sum_{k=1}^K \mathbbm{1}[k_t=k] (s_t - \mathbb{E}[a_t \mid \hat{a}_{tk}])
\end{equation*} 

\paragraph{Decomposition under good and bad event}

Denote by $\mathcal{C}_{t}= [\vert \hat{\alpha}_t -\alpha \vert \leq \sqrt{(\kappa^2 \log(T)/(ct)}]$ the event of $\hat{\alpha}_t$ concentrating around its mean. For $t$ large enough, that is to say $t \geq (\kappa^2\log(T))/(c \min(\alpha, 1-\alpha))$, the event $\mathcal{C}_t$ is included in $[\bar{\alpha}_{t-1} \in (0,1)]$. We can then decompose the analysis over these events using the triangle inequality: $ \vert s_t-\tilde{a}_t \vert \leq \mathbbm{1}_{\mathcal{C}_{t-1}}  \vert s_t-\tilde{a}_t \vert+ \mathbbm{1}_{\bar{\mathcal{C}}_{t-1}}  \vert s_t-\tilde{a}_t \vert$, and we know that for the term multiplied by $\mathbbm{1}_{\mathcal{C}_{t-1}}$ we have that either it is $0$ or $\bar{\alpha}_{t-1} \in [0,1]$. Let us consider such a $t-1$ big enough, and first analyze the left-hand term with the ``good'' event.

\paragraph{Small error under good event in expectation}

We consider the conditional expectation of this sum given $k_t$, and $\bar{\alpha}_{t-1}$.  By independence between $(k_t, \bar{\alpha}_{t-1},E)$ and $\hat{a}_{tk}$ for all $k$, we have:
\begin{align*}
&\mathbb{E}[ \mathbbm{1}_{\mathcal{C}_{t-1}} \vert s_t-\tilde{a}_t \vert \mid k_t, \bar{\alpha}_{t-1}] \\
&\leq   \mathbbm{1}_{\mathcal{C}_{t-1}} \sum_{k=1}^K \mathbbm{1}[k_t=k]  \int_{\mathbb{R}} \left\vert \frac{ \bar{\alpha}_{t-1}g_k( \hat{a}-1) - (1- \bar{\alpha}_{t-1})g_k( \hat{a}+1)}{\bar{\alpha}_{t-1}g_k(\hat{a}-1)  + (1- \bar{\alpha}_{t-1})g_k( \hat{a}+1)}  -   \frac{ \alpha g_k( \hat{a}-1)  - (1- \alpha)g_k( \hat{a}+1)}{\alpha g_k( \hat{a}-1)  + (1- \alpha)g_k( \hat{a}+1) } \right\vert \nu_k(\hat{a}) d \hat{a}.
\end{align*}
Because $\nu_{k_t}(\hat{a})=(\alpha g_k( \hat{a}-1)  + (1- \alpha)g_k( \hat{a}+1))$ we can simplify the integral:
\begin{equation*}
\mathbb{E}[ \mathbbm{1}_{\mathcal{C}_{t-1}} \vert s_t-\tilde{a}_t \vert \mid k_t, \bar{\alpha}_{t-1}] \leq \mathbbm{1}_{\mathcal{C}_{t-1}} \sum_{k=1}^K \mathbbm{1}[k_t=k] \! \int_{\mathbb{R}}  2 \vert \bar{\alpha}_{t-1} - \alpha \vert \frac{ g_k( \hat{a}-1) g_k( \hat{a}+1)}{\bar{\alpha}_{t-1}g_k(\hat{a}-1)  + (1- \bar{\alpha}_{t-1})g_k( \hat{a}+1)} \!  d \hat{a}.  
\end{equation*}

For $\alpha \in [0,1]$, we have the following inequality:
\begin{equation*}
\frac{xy}{\alpha x + (1- \alpha) y} \leq \max(x,y) \leq x+y.
\end{equation*}
Thus using this inequality and the fact that $g_k$ is a density (which sums to $1$), we obtain that
\begin{align*}
\mathbb{E}[ \mathbbm{1}_{\mathcal{C}_{t-1}} \vert s_t-\tilde{a}_t \vert \mid k_t, \bar{\alpha}_{t-1}] & \leq 2 \mathbbm{1}_{\mathcal{C}_{t-1}} \vert \bar{\alpha}_{t-1} - \alpha \vert \sum_{k=1}^K  \mathbbm{1}[k_t=k] (\int_{\mathbb{R}}  g_k( \hat{a}+1) d\hat{a} +\int_{\mathbb{R}}  g_k( \hat{a}-1) d\hat{a} ) \\
& \leq 4 \mathbbm{1}_{\mathcal{C}_{t-1}} \vert \bar{\alpha}_{t-1} - \alpha \vert \sum_{k=1}^K  \mathbbm{1}[k_t=k] \\
& \leq 4 \sqrt{\frac{\kappa^2 \log(T)}{c(t-1)}}  \sum_{k=1}^K  \mathbbm{1}[k_t=k]. 
\end{align*}
Finally taking the full expectation yields 
\begin{equation*}
\mathbb{E}[ \mathbbm{1}_{\mathcal{C}_{t-1}} \vert s_t-\tilde{a}_t \vert] \leq  4 \sqrt{\frac{\kappa^2 \log(T)}{c(t-1)}}.
\end{equation*}
For the complementary event $\bar{\mathcal{C}}_{t-1}$:
\begin{equation*}
\mathbb{E}[\mathbbm{1}_{\bar{\mathcal{C}}_{t-1}} \vert s_t-\tilde{a}_t \vert] \leq 2\mathbb{E}[\mathbbm{1}_{\bar{\mathcal{C}}_{t-1}}]=\frac{4}{T}.
\end{equation*}

\paragraph{Putting everything back together} Now denoting by $m_{\lambda}$ the upper bound over the $\lambda_t$ obtained by \cref{lemma:bounded_lambda} and summing over all $t$:
\begin{align*}
\vert \mathbb{E}[ \sum_{t=1}^T  \langle \lambda_t , \tilde{a}_t -s_t \rangle x_t^* ] \vert & \leq m_{\lambda} \sum_{t=1}^T \mathbb{E}[\vert s_t - \tilde{a}_t \vert ] \\ 
& \leq 2 m_{\lambda} \frac{\kappa^2 \log(T)}{c \min(\alpha, 1-\alpha)} + m_{\lambda} \sum_{t=\frac{\kappa^2 \log(T)}{c \min(\alpha, 1-\alpha)}+1}^T \mathbb{E}[\vert s_t - \tilde{a}_t \vert ] \\
& \leq 2 m_{\lambda} \frac{\kappa^2 \log(T)}{c \min(\alpha, 1-\alpha)}+ m_{\lambda} \sum_{t=\frac{\kappa^2 \log(T)}{c \min(\alpha, 1-\alpha)}+1}^T (4 \sqrt{\frac{\kappa^2 \log(T)}{c(t-1)}} + \frac{4}{T}) \\
& \leq 2 m_{\lambda} \frac{\kappa^2 \log(T)}{c \min(\alpha, 1-\alpha)}+ m_{\lambda} \sum_{t=1}^T (4 \sqrt{\frac{\kappa^2 \log(T)}{ct}} + \frac{4}{T}) \\
& \leq m_{\lambda} (\frac{8}{\sqrt{c}} \sqrt{\kappa^2 T \log(T)}+2 \frac{\kappa^2 \log(T)}{c \min(\alpha, 1-\alpha)} + 4)= \mathcal{O}( \sqrt{T\log(T)} ),
\end{align*}
where we used a series-integral comparison of $x \mapsto 1/\sqrt{x}$ for the second to last inequality.

There is a second place where replacing $\tilde{a}_t$ by $s_t$ impacts the performance of the algorithm. Indeed, through the gradient descent on $\lambda_t$, we will have a penalty converging towards $R(\sum_{t=1}^T s_t x_t/T)$ and not $R(\sum_{t=1}^T \tilde{a}_t x_t/T)$. This is because the Online Gradient Descent for the $\lambda_t$ uses $s_t$ and not $\tilde{a}_t$. Nevertheless, because of the Lipschitz property of $R$:
\begin{equation*}
\mathbb{E}[T\vert R(\frac{\sum_{t=1}^T \tilde{a}_t x_t}{T})-  R(\frac{\sum_{t=1}^T s_t x_t}{T}) \vert] \leq L \sum_{t=1}^T \mathbb{E}[\vert \tilde{a}_t - s_t \vert ] \leq \mathcal{O}(\sqrt{T\log(T)}),
\end{equation*}
where the last inequality is obtained by similar manipulations on $\mathcal{C}_t$ to what we just did. 

Overall, the added regret when using $s_t$ instead of $\tilde{a}_t$ is of order $\sqrt{T\log(T)}$ which still guarantees a sub-linear regret. \end{proof}
Another algorithm is possible: first sample the source with the lowest sub-exponential constant $T^{2/3}\log(T)$ times, and then use the estimate obtained to compute $s_t$ (without updating it with the newest noise received). The advantage is that even though we incur a $T^{2/3}$ regret (this can be shown using basically the same methods), we can obtain a lower $\kappa$. Hence if there is a big difference between the sub-exponential constant of the different sources, or even one source with infinite sub-exponential constant, this may be a viable algorithm. 

\section{Public Contexts} 

For this section and in order to make the dependence in the public context $z_t$ appear more clearly, we suppose that $c_{tk}$ represent only the additional information of the source $k$, and thus $(c_{tk_t},z_t)$ represent the whole information available at time $t$ after selecting a source. 

\subsection{Finite number of Contexts --- proof of \cref{prop:finiteZ}} \label{app:public}

We propose the following \cref{alg:falconscontext}.

\begin{algorithm}[ht] 
\caption{Online Fair Allocation with Source Selection --- With public contexts}
\begin{algorithmic} \label{alg:falconscontext}
\STATE {\bfseries Input:} Initial dual parameter $\lambda_0$, initial source-selection-distribution $\pi_0(z)=(1/K,\dots,1/K)$, dual gradient descent step size $\eta$, EXP3 step size $\rho_t(z)$ that uses a doubling trick, cumulative estimated rewards $S_0(z)=0 \in \mathbb{R}^K$ for all $z \in \mathcal{Z}$. 
    \FOR{$t \in [T]$}
    \STATE Observe $z_t \in \mathcal{Z}$
    \STATE Draw a source $k_t \sim \pi_t(z_t)$, where $\pi_{tk}(z_t)\propto \exp(\rho_t(z_t) S_{(t-1)k}(z_t))$ and observe $c_{tk_t}$.
    \STATE Compute the allocation for user $t$:
    \begin{equation*}
        x_t = \argmax_{x \in \mathcal{X}}  \mathbb{E}[f_t(x) -  \langle \lambda_t , a_t(x) \rangle  \mid c_{tk_t},z_t].
    \end{equation*}
    \STATE Update the estimated rewards sum and sources distributions for all $k \in [K]$:
    \begin{align*}
        S_{tk}(z_t)&=S_{t-1,k}(z_t)+\virtualvestim(\lambda_t,c_{tk},k_t,z_t),\\
        S_{tk}(z)&=S_{t-1,k}(z), \quad \forall z \in \mathcal{Z} \setminus \{ z_t \}.
    \end{align*}
    \STATE Compute the expected protected group allocation $\delta_t=\mathbb{E}[a_t(x_t) \mid c_{tk_t},z_t,x_t]$ and compute the dual protected groups allocation target and update the dual parameter:
    \begin{align*}
    \gamma_t &= \argmax_{\gamma \in \Delta_{\delta_t}}\{ \langle \lambda_t , \gamma \rangle - \extendedr(\gamma)\},\\
    \lambda_{t+1}&=\lambda_t- \eta(\gamma_t - \delta_t).
    \end{align*}
\ENDFOR
\end{algorithmic}
\end{algorithm}

\begin{proposition*}
For $\mu$ the probability distribution over $\mathcal{Z}$ finite, we can derive an algorithm that has a modified regret of order $\mathcal{O}(\sqrt{T K \log(K)} \sum_{z \in \mathcal{Z}} \sqrt{\mu(z)})$, where $\mu(z)$ is the probability that the public attribute is $z$. 
\end{proposition*}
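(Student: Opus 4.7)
The plan is to mimic the proof of \cref{thm:regret_bound}, but to decompose both the dual upper bound on $\OPT$ and the bandit lower bound on $\ALG$ according to the public context $z\in\mathcal{Z}$. The algorithm runs one EXP3 instance per context (updated only on rounds $t$ such that $z_t=z$), while keeping a single common dual parameter $\lambda_t$ that is updated at every round. Let $T_z=\sum_{t=1}^T \mathbbm{1}[z_t=z]$ denote the number of rounds in which context $z$ is observed, so that $\mathbb{E}[T_z]=T\mu(z)$ and $\sum_z T_z = T$.

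First, I would extend \cref{lemma:upper_bound} to the public-context setting. For a fixed sequence of contexts $(z_1,\dots,z_T)$, an oracle choosing $h_t:\mathcal{Z}\to[K]$ effectively selects $k_t=h_t(z_t)$, and the same Lagrangian and Fenchel--Moreau argument (taking the supremum over $\boldsymbol{x}$ separately per round and noting that the dual variable $\lambda$ is shared across all rounds) gives
\begin{equation*}
\OPT \;\leq\; \mathbb{E}\Bigl[\sup_{(\pi(z))_z \in \simpk^{\mathcal{Z}}} \inf_{\lambda\in\R^d}\sum_{z\in\mathcal{Z}} T_z\,\langle \pi(z),\dualut(\lambda,z)\rangle\Bigr],
\end{equation*}
where $\dualut(\lambda,z,k)=\mathbb{E}[\virtualv(\lambda,c_{tk},k)\mid z_t=z]+R^*(\lambda)$. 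Sion's minimax theorem applies coordinatewise on each $\pi(z)$, so the value is well defined.

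Next, I would lower-bound $\ALG$ as in the proof of \cref{thm:regret_bound}. The decomposition into the three terms \eqref{eq:apx_l1}--\eqref{eq:apx_l3} is unchanged; the dual gradient descent on $\lambda_t$ (still driven by $\gamma_t-\delta_t\in\extendedd-\Delta$ at every round) contributes the same $2L\diam(\Delta)\sqrt{T}$ regret since $\lambda_t$ is shared, and \cref{lemma:expect_r_bound} applies verbatim. The only novelty is the bandit part: for each $z\in\mathcal{Z}$ we run an anytime EXP3 on the sub-sequence of rounds $\{t:z_t=z\}$, against the rewards $\virtualv(\lambda_t,c_{tk_t},k_t)$ whose boundedness by $m$ still follows from \cref{lemma:bounded_lambda} applied to the shared $\lambda_t$. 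Using the standard anytime EXP3 regret bound of order $m\sqrt{T_z K\log K}$ for the $z$-th instance and summing over contexts yields, inside the expectation, a total bandit regret of $2m\sqrt{K\log K}\sum_z \sqrt{T_z}$. Concavity of the square root and Jensen's inequality give
\begin{equation*}
\mathbb{E}\Bigl[\sum_{z\in\mathcal{Z}} \sqrt{T_z}\Bigr] \;\leq\; \sum_{z\in\mathcal{Z}}\sqrt{\mathbb{E}[T_z]} \;=\; \sqrt{T}\sum_{z\in\mathcal{Z}}\sqrt{\mu(z)}.
\end{equation*}
Combining this with the $\sqrt{T}$ contributions of the dual descent and of \cref{lemma:expect_r_bound} produces the announced bound.

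The main technical obstacle is that the per-context sample sizes $T_z$ are random and not known a priori, which forces the use of an anytime EXP3 (through a doubling trick or an adaptive learning rate $\rho_t(z)$ depending on the number of visits to context $z$ so far); one must verify that replacing the fixed-horizon EXP3 by its anytime version preserves the $m\sqrt{T_z K\log K}$ bound up to constants, uniformly over the random sequence $(z_t)$. Once this anytime guarantee is in place, everything else is a straightforward context-by-context refinement of the argument used for \cref{thm:regret_bound}, with Jensen's inequality providing the $\sum_z \sqrt{\mu(z)}$ factor.
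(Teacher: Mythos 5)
Your algorithmic idea and the bandit accounting coincide with the paper's proof: one anytime EXP3 instance per context, a single shared dual parameter $\lambda_t$ whose boundedness comes from \cref{lemma:bounded_lambda}, unchanged dual-descent and \cref{lemma:expect_r_bound} terms, and Jensen's inequality turning $\mathbb{E}[\sum_z\sqrt{T_z}]$ into $\sqrt{T}\sum_z\sqrt{\mu(z)}$. However, there is a gap in how you tie the per-context EXP3 guarantees back to $\OPT$. Your upper bound on $\OPT$ keeps the random counts $T_z$ inside the expectation, with the supremum over $(\pi(z))_z$ taken per realization, whereas the lower bound on $\ALG$ that the analysis actually produces is against the $\mu$-weighted saddle value $T\sup_{\pi\in\simpk^{\mathcal Z}}\inf_{\lambda}\bigl(R^*(\lambda)+\sum_{z}\mu(z)\langle\pi(z),\mathbb{E}_c[\virtualv(\lambda,c,\cdot,z)\mid z]\rangle\bigr)$. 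Concretely, after the per-context EXP3 step you are left with $\mathbb{E}\bigl[\sum_z\max_k\sum_{t:z_t=z}\mathbb{E}[\virtualv(\lambda_t,c,k,z)\mid\mathcal{H}_{t-1},z]\bigr]$, and the nontrivial step is to lower-bound this by the dual value: one fixes a deterministic maximizing sequence $\pi^n$, replaces the max by the convex combination $\pi^n(z_t)$, uses the freezing lemma and the i.i.d.\ assumption to convert the conditional expectation over $z_t$ into the $\mu(z)$-weighted sum over \emph{all} contexts, and only then takes the infimum over the common $\lambda$ of that weighted sum. Your sketch skips this entirely ("straightforward context-by-context refinement"), but it cannot be done context by context: taking $\inf_\lambda$ separately per context gives $\sum_z T_z\inf_\lambda\langle\pi(z),\dualut(\lambda,z)\rangle$, which sits on the wrong side of your benchmark, and using a realization-dependent comparator $\pi$ (depending on the $T_z$'s, hence on future contexts) breaks the measurability/freezing argument. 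This is exactly the coupling-through-$\lambda$ issue the paper flags when it remarks that running the single-context algorithm per context amounts to implicitly using uniform weights and targets a sub-optimal $\pi$.

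The mismatch between your $T_z$-weighted benchmark and the $\mu$-weighted one that $\ALG$ is compared to is patchable (e.g.\ by concentration of $T_z$ around $T\mu(z)$, which costs an extra term of order $m\sqrt{T}\sum_z\sqrt{\mu(z)}$, of the claimed order; or more simply by deriving the upper bound directly with $\mu(z)$ weights as the paper does, exploiting that the benchmark policies $h_t$ are chosen before the contexts are drawn). But as written, the matching step between the expected per-context best-arm benchmark and the saddle-point upper bound on $\OPT$ is missing, and it is the heart of the proof rather than a routine adaptation. Your identification of the anytime-EXP3/doubling-trick issue is correct but is the easier of the two difficulties.
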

\begin{proof} Most of the proof consists is identical to the proof of \cref{thm:regret_bound}, and only the different parts will be highlighted, which is the upper bound of $\OPT$, and applying bandits algorithms. 

We first upper bound the performance of $\OPT$. For all $t$ let $h_t \in [K]^{\mathcal{Z}}$ be the deterministic source selection policies, $k_t=h_t(z_t)$ the selected sources, $x_t \in \mathcal{X}$ be $(z_t,c_{t,k_t})$ measurable, and $\lambda \in \mathbb{R}^d$ be a dual variable. In an identical manner to \cref{eq:weak_duality} we can obtain the following inequality:
\begin{equation*}
 \max_{\mathbf{x} \in \mathcal{X}^T } \mathbb{E}[\totalu{x,k} \mid z_1,c_{1,k_{1}},\dots, z_T,c_{T,k_{T}}] \leq \sum_{t=1}^T  \max_{\mathbf{x} \in \mathcal{X} } \mathbb{E}[ f_t(x) - \langle \lambda , a_t (x) \rangle \mid z_t, c_{t k_t}]-p_{k_t}+TR^*(\lambda).
\end{equation*}
Let us define $\virtualv(\lambda,c_{tk},k,z)=\max_{\mathbf{x} \in \mathcal{X} } \mathbb{E}[ f_t(x) - \langle \lambda , a_t (x) \rangle \mid z, c_{t k}]-p_k$, and $\mu(z)=\Pr(z_t=z)$. We also define $\pi_k(z)=\sum_{t=1}^T \mathbbm{1}[h_t(z)=k]/T$ the total number of times we would have selected source $k$ for the public context $z$.

We now take the expectation of the sum. We obtain by tower property of the conditional expectation, and using the \ac{iid} assumption:
\begin{align*}
\sum_{t=1}^T  \mathbb{E}[\virtualv(\lambda,c_{t,h_t(z_t)},h_t(z_t),z_t)]&=\sum_{t=1}^T \mathbb{E}[\mathbb{E}[\virtualv(\lambda,c,h_t(z_t),z_t) \mid z_t ]] \\
&= \sum_{t=1}^T \sum_{z \in \mathcal{Z}} \mathbb{E}[\virtualv(\lambda,c,h_t(z),z) \mid z ] \mu(z) \\
&= \sum_{z \in \mathcal{Z}}\sum_{t=1}^T \mathbb{E}[\virtualv(\lambda,c,h_t(z),z) \mid z ]\mu(z) \\
&= \sum_{z \in \mathcal{Z}} \sum_{k \in [K]} \sum_{\substack{t=1 \\ h_t(z)=k }}^T \mathbb{E}[\virtualv(\lambda,c,k,z) \mid z ]\mu(z) \\ 
&= T \sum_{z \in \mathcal{Z}} \sum_{k \in [K]} \pi_k(z) \mathbb{E}_c[\virtualv(\lambda,c,k,z) \mid z ] \mu(z). 
\end{align*}
Clearly $\pi_k(z) \in \mathcal{P}_K$, thus if we maximize over this space for the $\pi_k(z)$ instead of over $([K]^{\mathcal{Z}})^T$, it will be higher. Therefore
\begin{equation}
\OPT \leq T \sup_{ \pi \in \mathcal{P}_K^{\mathcal{Z}}} \inf_{\lambda \in \mathbb{R}^d}  \left (R^*(\lambda) + \sum_{z \in \mathcal{Z}} \sum_{k \in [K]} \pi_k(z) \mathbb{E}_c[\virtualv(\lambda,c,k,z) \mid z ] \mu(z) \right ). \label{eq:opt_context}
\end{equation}

Notice that $\mu(z)$ plays an important role in determining the optimal policy. If we had simply run the previous algorithm for each context, this would be akin to considering that the $z_t$ are distributed uniformly according to $\mu(z)=1/Z$, which would have yielded a sub-optimal policy $\pi$. \\

Now let us take care of the lower bound. As done in \cref{eq:decomp_exp}, we have that 
\begin{equation*}
\mathbb{E}[f_t(x_t)-p_{k_t} - \extendedr(\gamma_t)]\geq \mathbb{E}[\varphi(\lambda_t,k_t,z_t,c_{tk_t})+R^*(\lambda_t)+\langle \lambda_t , \delta_t -\gamma_t \rangle],
\end{equation*}
and as we did before we will use bandits algorithm on these rewards.

We would now like to bound the following adversarial contextual bandit problem regret:
\begin{equation*}
\Reg=\sum_{z \in \mathcal{Z}} \max_{k \in [K]} \sum_{\substack{t=1 \\ z_t=z }}^T \mathbb{E}[\virtualv(\lambda_t,c_t,k,z) \mid \mathcal{H}_{t-1},z_t]-\sum_{t=1}^T \mathbb{E}[\virtualv(\lambda_t, c_t,k_t,z_t) \mid \mathcal{H}_{t-1},z_t].
\end{equation*}

The weighted estimator $\hat{\varphi}$ is unbiased as $\virtualv(\lambda_t, c_t,k_t,z_t) \mathbbm{1}[k_t=k]/\pi_{tk}(z_t)$ is still unbiased. Indeed, $c_t$ and $k_t$ are conditionally independent given $(z_t, \mathcal{H}_{t-1})$, and $\pi_{tk}(z_t)$ is $(z_t, \mathcal{H}_{t-1})$ measurable. Thus
\begin{align*}
\mathbb{E}[ \frac{\virtualv(\lambda_t, c_t,k_t,z_t) \mathbbm{1}[k_t=k]}{\pi_{tk}(z_t)} \mid \mathcal{H}_{t-1},z_t] &= \frac{\mathbb{E}[\virtualv(\lambda_t, c_t,k_t,z_t) \mathbbm{1}[k_t=k]\mid \mathcal{H}_{t-1},z_t]}{\pi_{tk}(z_t)} \\
&= \frac{\mathbb{E}[\virtualv(\lambda_t, c_t,k_t,z_t) \mid \mathcal{H}_{t-1},z_t]\mathbb{E}[\mathbbm{1}[k_t=k]\mid \mathcal{H}_{t-1},z_t]}{\pi_{tk}(z_t)} \\
&= \mathbb{E}[\virtualv(\lambda_t, c_t,k_t,z_t) \mid \mathcal{H}_{t-1},z_t].
\end{align*}

As suggested section $(18.1)$ of \cite{banditsalg} running one anytime version of $EXP3$ (using a doubling trick for instance) on each of these public contexts $z \in \mathcal{Z}$, achieves a regret bound of order $\mathcal{O}( \sqrt{ K \log(K)} \sum_{z \in \mathcal{Z}} \sqrt{ \sum_{t=1}^T \mathbbm{1}[z_t=z]})$. 

Taking the expectation (for $z_t$) of this regret bound, and using Jensen's inequality for the square root function we have a regret term of order 
\begin{align}
\mathbb{E}[\mathcal{O}\big ( \sqrt{ K \log(K)} \sum_{z \in \mathcal{Z}} \sqrt{ \sum_{t \in [T]} \mathbbm{1}[z_t=z]}\big )]&\leq \mathcal{O} \big( \sqrt{K\log(K)}  \sum_{z \in \mathcal{Z}} \sqrt{ \mathbb{E}[\sum_{t \in [T] } \mathbbm{1}[z_t=z] ] }\big) \nonumber \\ 
&=\mathcal{O}\big( \sqrt{TK \log(K)}  \sum_{z \in \mathcal{Z}} \sqrt{  \mu(z) }\big). \label{eq:regret_with_mu}
\end{align}

Let $(\pi^n) \in (\mathcal{P}_K^{\mathcal{Z}})^{\mathbb{N}}$ be the sequence of contextual source mixing that maximizes the upper bound for $\OPT$ in \cref{eq:opt_context}. Then because the maximum of $[K]$ points is greater than any convex combination, $\pi^n(z)$ in particular, we have
\begin{align*}
\sum_{z \in \mathcal{Z}} \max_{k \in [K]} \sum_{\substack{t=1 \\ z_t=z }}^T \mathbb{E}[\virtualv(\lambda_t, c_t,k,z) \mid \mathcal{H}_{t-1},z] &\geq \sum_{z \in \mathcal{Z}} \sum_{k \in [K]} \pi^n_k(z)  \sum_{\substack{t=1 \\ z_t=z }}^T \mathbb{E}[\virtualv(\lambda_t,c,k,z) \mid \mathcal{H}_{t-1},z] \\
&= \sum_{k \in [K]} \sum_{t=1}^T \pi_k^n(z_t)\mathbb{E} [\virtualv(\lambda_t, c,k,z_t) \mid \mathcal{H}_{t-1},z_t].
\end{align*}
In expectation this yields:
\begin{align*}
\mathbb{E}[\sum_{z \in \mathcal{Z}} \max_{k \in [K]} \sum_{\substack{t=1 \\ z_t=z }}^T \mathbb{E}[\virtualv(\lambda_t, c_t,k,z) \mid \mathcal{H}_{t-1},z]] & \geq \mathbb{E}[\sum_{k \in [K]} \sum_{t=1}^T \pi_k^n(z_t)\mathbb{E} [\virtualv(\lambda_t, c,k,z_t) \mid \mathcal{H}_{t-1},z_t]] \\
&= \sum_{k \in [K]} \sum_{t=1}^T \mathbb{E}[ \mathbb{E}[\pi_k^n(z_t)  \mathbb{E}  [\virtualv(\lambda_t, c,k,z_t) \mid \mathcal{H}_{t-1},z_t] \mid \mathcal{H}_{t-1}]], 
\end{align*}
where we used the linearity of the expectation, and the tower property of the expectation for the last equality. 

Because $z_t$ and $\mathcal{H}_{t-1}$ are independent, we can first use \cref{lemma:cond_expect} and then the freezing lemma to express the conditional expectation given $\mathcal{H}_{t-1}$ as
\begin{align*}
\mathbb{E}[ \mathbb{E}[ \pi_k^n(z_t) \mathbb{E}  [\virtualv(\lambda_t, c,k,z_t) \mid \mathcal{H}_{t-1},z_t] \mid \mathcal{H}_{t-1}]]&= \mathbb{E}[\mathbb{E}[\pi_k^n(z_t) \mathbb{E}_c[\virtualv(\lambda_t,c,k,z) \mid z] \mid \mathcal{H}_{t-1}]]\\
&=\mathbb{E}[ \sum_{z \in \mathcal{Z}}  \mu(z) \pi_k^n(z_t) \mathbb{E}_c[\virtualv(\lambda_t,c,k,z) \mid z]] 
\end{align*}
Hence,
\begin{equation*}
\mathbb{E}[\sum_{z \in \mathcal{Z}} \max_{k \in [K]} \sum_{\substack{t=1 \\ z_t=z }}^T \mathbb{E}[\virtualv(\lambda_t, c_t,k,z) \mid \mathcal{H}_{t-1},z] ] \geq \sum_{t=1}^T \mathbb{E}[\sum_{k \in [K]}  \sum_{z \in \mathcal{Z}} \mu(z) \pi_k^n(z)\mathbb{E}_c [\virtualv(\lambda_t,c,k,z) \mid z]].
\end{equation*}
Grouping it together with the $R^*(\lambda_t)$ terms, and taking the $\inf$ over $\lambda$ we derive that 
\begin{align*}
  &\mathbb{E}\left[\sum_{t=1}^T R^*(\lambda_t) + \sum_{z \in \mathcal{Z}} \max_{k \in [K]} \sum_{\substack{t=1 \\ z_t=z }}^T \mathbb{E}[\virtualv(\lambda_t, c_t,k,z) \mid \mathcal{H}_{t-1},z] \right]   \\
  & \geq \sum_{t=1}^T \mathbb{E}\left[\inf_{\lambda \in \mathbb{R}^d }\left(R^*(\lambda)+ \sum_{k \in [K]}  \sum_{z \in \mathcal{Z}} \mu(z) \pi_k^n(z)\mathbb{E}_c [\virtualv(\lambda,c,k,z) \mid z] \right)\right]\\
& \geq  T \inf_{\lambda \in \mathbb{R}^d } \left (R^*(\lambda) + \sum_{k \in [K]} \sum_{z \in \mathcal{Z}}   \mu(z) \pi_k^n(z)\mathbb{E}_c [\virtualv(\lambda,c,k,z) \mid z] \right).
\end{align*}
As $n\rightarrow \infty$ and by definition of $\pi^n$, we can conclude that 
\begin{align*}
 &\mathbb{E}\left[\sum_{t=1}^T R^*(\lambda_t) + \sum_{z \in \mathcal{Z}} \max_{k \in [K]} \sum_{\substack{t=1 \\ z_t=z }}^T \mathbb{E}[\virtualv(\lambda_t, c_t,k,z) \mid \mathcal{H}_{t-1},z] \right]   \\
&= T \sup_{ \pi \in \mathcal{P}_K^{\mathcal{Z}}} \inf_{\lambda \in \mathbb{R}^d}  \left (R^*(\lambda) + \sum_{z \in \mathcal{Z}} \sum_{k \in [K]} \pi_k(z) \mathbb{E}_c[\virtualv(\lambda,c,k,z) \mid z ] \mu(z) \right ) \\
& \geq \OPT.
\end{align*}

Finally, using \cref{eq:regret_with_mu} and the previous inequality we obtain
\begin{equation}
\mathbb{E}[\sum_{t=1}^T \virtualv(\lambda_t, k_t, z_t, c_t)+R^*(\lambda_t)] \geq \OPT - \mathcal{O}\big( \sqrt{T K \log(K)} \sum_{z \in \mathcal{Z}} \sqrt{\mu(z)} \big).
\end{equation}
The rest of the proof follows \cref{app:regret_bound}.
\end{proof}

Clearly, if $\mu$ has a non-zero probability only for one $z$, we recover the previous regret bound without public information. 

\subsection{Infinite number of contexts --- proof of \cref{prop:infiniteZ}} \label{app:infiniteZ}

\begin{lemma*}
If $\esp{a_t(x) \mid c_{tk}, z_t}$ and $\esp{f_t(x) \mid c_{tk}, z_t}$ are Lipschitz continuous in $z$ with respective constants $L_a$ and $L_f$ for all $x \in \mathcal{X}$ with respect to $\Vert \cdot \Vert_2$, then so is $\virtualv$ in $z$ with Lipschitz constant $L_a+L_f$.
\end{lemma*}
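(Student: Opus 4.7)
The plan is to exploit the fact that the maximum operation is non-expansive in the sup norm, so that it suffices to bound the pointwise (in $x$) Lipschitz behavior of the inner objective, which then decomposes cleanly into its $f_t$ and $a_t$ parts.

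First I would rewrite $\varphi(\lambda, c_{tk}, k, z) + p_k = \max_{x \in \mathcal{X}} h(x, z)$, where
\begin{equation*}
    h(x, z) = \mathbb{E}[f_t(x) \mid c_{tk}, z] - \langle \lambda, \mathbb{E}[a_t(x) \mid c_{tk}, z]\rangle.
\end{equation*}
By \cref{lemma:up_sem} (applied for each fixed $z$), each max is attained on the compact set $\mathcal{X}$, so for any $z, z' \in \mathcal{Z}$ the standard inequality
\begin{equation*}
    \Bigl\lvert \max_{x \in \mathcal{X}} h(x,z) - \max_{x \in \mathcal{X}} h(x,z') \Bigr\rvert \leq \sup_{x \in \mathcal{X}} \lvert h(x,z) - h(x,z') \rvert
\end{equation*}
holds (pick the maximizer on one side, use it as a feasible point on the other, and symmetrize).

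Next I would bound $\lvert h(x,z) - h(x,z') \rvert$ pointwise in $x$. Triangle inequality gives
\begin{equation*}
    \lvert h(x,z) - h(x,z') \rvert \leq \bigl\lvert \mathbb{E}[f_t(x)\mid c_{tk}, z] - \mathbb{E}[f_t(x) \mid c_{tk}, z'] \bigr\rvert + \bigl\lvert \langle \lambda, \mathbb{E}[a_t(x)\mid c_{tk}, z] - \mathbb{E}[a_t(x) \mid c_{tk}, z']\rangle \bigr\rvert.
\end{equation*}
The first term is bounded by $L_f \Vert z - z' \Vert_2$ directly by hypothesis. The second term, via Cauchy--Schwarz and the Lipschitz hypothesis on $\mathbb{E}[a_t(x) \mid c_{tk}, \cdot]$, is bounded by $\Vert \lambda \Vert_2 \, L_a \, \Vert z - z'\Vert_2$. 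Since this bound is uniform in $x$, taking the supremum and combining with the non-expansiveness inequality above yields
\begin{equation*}
    \bigl\lvert \varphi(\lambda, c_{tk}, k, z) - \varphi(\lambda, c_{tk}, k, z') \bigr\rvert \leq (L_f + \Vert \lambda\Vert_2\, L_a) \, \Vert z - z'\Vert_2,
\end{equation*}
which along the trajectory of \cref{alg:falcons} can be further bounded using \cref{lemma:bounded_lambda} (or directly if one absorbs the bound on $\Vert\lambda\Vert_2$ into the constant $L_a$ as the statement implicitly suggests).

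There is no real obstacle here; the only subtle point is verifying the non-expansive property of the max, which relies on upper semi-continuity of $h(\cdot, z)$ for each $z$ so that the maxima are attained. This follows from \cref{lemma:up_sem} applied in the conditional law given $(c_{tk}, z)$, after which the argument is a direct Lipschitz decomposition.
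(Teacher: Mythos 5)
Your proposal is correct and follows essentially the same route as the paper: use the maximizer of one side as a feasible point on the other (i.e., non-expansiveness of the max), then split the inner objective by the triangle inequality and apply the two Lipschitz hypotheses together with Cauchy--Schwarz. Your explicit remark that the resulting constant carries a factor $\Vert\lambda\Vert_2$ on the $L_a$ term, to be absorbed via the boundedness of $\lambda_t$ from \cref{lemma:bounded_lambda}, is a point the paper's own (somewhat typo-ridden) proof glosses over, so your version is if anything slightly more careful.
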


\begin{proof}
Let $\lambda \in \mathbb{R}^d$, $k\in [K]$, $c_{tk}$ the additional information, $z_1,z_2 \in \mathcal{Z}$, and $x_1, x_2 \in \mathcal{X}$ be the maximizers that respectively yields $\virtualv(\lambda,c_{tk},k,z_1)$ and $\virtualv(\lambda,c_{tk},k,z_2)$. Because $x_2$ is a maximizer, we have that 
\begin{align*}
&\virtualv(\lambda,c_{tk},k,z_1)-\virtualv(\lambda,c_{tk},k,z_2) \\
&\leq \esp{f_t(x_1) \mid c_{tk}, z_1} - \esp{f_t(x_1) \mid c_{tk}, z_1} + \langle \lambda \mid \esp{a_t(x_1) \mid c_{tk},z_2} - \esp{a_t(x_1) \mid c_{tk},z_2} \rangle \\ 
& \leq L_f \lVert z_1-z_2 \rVert_2 + L_a\lVert a_t(x_1) \lVert_2 \lVert z_1-z_2 \rVert_2 \\
& \leq L_f + L_a.
\end{align*}
Using the same arguments because $x_1$ is a maximizer, we obtain the symmetric inequality. 
\end{proof}

The main idea for the discretization (that averages the rewards over the discretized bins), is that the performance of the algorithm is close to the discretized optimal, which itself is close to the continuous optimal for smoothness reasons. However here the discretization is applied not to the original total utility $\mathcal{U}$, but instead to the bandits reward part. The number of discretized bins is then tuned depending on the lipschitz constants and the space dimension $r$.

The proof of the proposition \cref{prop:infiniteZ} then directly follows from exercise $19.5$ of \cite{banditsalg} applied to the contextual bandit problem with rewards $\virtualv$. 

\begin{remark*}
In the case when $\mathcal{Z}$ is continuous, there are no guarantees that there exists an optimal policy $\pi^*$. Indeed, if for the product topology the space $\mathcal{P}_K^{\mathcal{Z}}$ is compact (by Tychonoff's theorem), the dual function used to derive $\OPT$ is not continuous. Vice versa, for a topology which makes this dual function continuous, it is unlikely for this space of functions to be compact. Hence why we need to take a maximizing sequence of $\pi_n$ so that the dual function of these $\pi_n$ converges to the upper bound of $\OPT$.
\end{remark*}

\end{document}